\DeclareMathOperator*{\argmin}{arg\,min}
\theoremstyle{plain}
\newtheorem{theorem}{Theorem}[section]
\newtheorem{proposition}[theorem]{Proposition}
\newtheorem{lemma}[theorem]{Lemma}
\newtheorem{corollary}[theorem]{Corollary}
\theoremstyle{definition}
\newtheorem{assumption}[theorem]{Assumption}
\theoremstyle{remark}
\newcommand{\LineComment}[1]{{\color{blue}\textit{$\triangleright$ #1}}}
\title{Gradient-based Discrete Sampling with\\ Automatic Cyclical Scheduling}
\author{%
  Patrick Pynadath \\
  Department of Computer Science\\
  Purdue University\\
  West Lafayette, IN, 47907 \\
  \texttt{ppynadat@purdue.edu.edu} \\
  \And 
  Riddhiman Bhattacharya \\
  Department of Management\\
  Purdue University\\
  West Lafayette, IN, 47907 \\
  \texttt{bhatta76@purdue.edu} \\
  \And 
  Arun Hariharan \\
  Department of Computer Science\\
  Purdue University\\
  West Lafayette, IN, 47907 \\
  \texttt{harihar4@purdue.edu} \\
  \And 
  Ruqi Zhang \\
  Department of Computer Science\\
  Purdue University\\
  West Lafayette, IN, 47907 \\
  \texttt{ruqiz@purdue.edu} 
}
\begin{document}
\maketitle

\begin{abstract}
Discrete distributions, particularly in high-dimensional deep models, are often highly multimodal due to inherent discontinuities. While gradient-based discrete sampling has proven effective, it is susceptible to becoming trapped in local modes due to the gradient information. To tackle this challenge, we propose an automatic cyclical scheduling, designed for efficient and accurate sampling in multimodal discrete distributions. Our method contains three key components: (1) a cyclical step size schedule where large steps discover new modes and small steps exploit each mode; (2) a cyclical balancing schedule, ensuring ``balanced" proposals for given step sizes and high efficiency of the Markov chain; and (3) an automatic tuning scheme for adjusting the hyperparameters in the cyclical schedules, allowing adaptability across diverse datasets with minimal tuning.  We prove the non-asymptotic convergence and inference guarantee for our method in general discrete distributions. Extensive experiments demonstrate the superiority of our method in sampling complex multimodal discrete distributions.
\end{abstract}

\section{Introduction}
Discrete variables are common in many machine learning problems, highlighting the crucial need for efficient discrete samplers. Recent advances \citep{grathwohl2021gwg, zhang2022langevinlike, sun2021path,sun2023discrete,sun2023anyscale,xiang2023efficient} have leveraged gradient information in discrete distributions to improve proposal distributions, significantly boosting their efficiency. These advancements have set new benchmarks in discrete sampling tasks across graphical models, energy-based models, and combinatorial optimization~\citep{goshvadi2023discs}.

However, one major limitation of gradient-based methods is their susceptibility to becoming trapped in local modes~\citep{ruder2016overview,ziyin2021sgd},
which significantly reduces the accuracy and efficiency of sampling results. In continuous spaces, several strategies such as cyclical step sizes~\citep{Zhang2020Cyclical}, parallel tempering~\citep{swendsen1986replica,deng2020non}, and flat histograms~\citep{berg1991multicanonical,deng2020contour}, have been proposed to address this issue. When it comes to discrete distributions, which are inherently more multimodal due to their discontinuous nature, the problem becomes even more severe. Despite the pressing need, there is a lack of methodology for gradient-based discrete samplers to effectively explore multimodal distributions. Current methods often fall far short in traversing the complex landscapes of multimodal distributions, as illustrated in Figure~\ref{lots_of_dots}.

In this paper, we propose \emph{automatic cyclical scheduling} for gradient-based discrete sampling to efficiently and accurately sample from multimodal distributions. To balance between uncovering new modes and characterizing the current mode, we parameterize a family of gradient-based proposals that span a spectrum from local to global proposals. The parameterized proposal dynamically adjusts according to cyclical schedules of both step size and the balancing parameter, smoothly transitioning from global exploratory moves to more localized moves within each cycle. These cyclical schedules are automatically tuned by a specially designed algorithm, which identifies optimal step sizes and balancing parameters for discrete distributions. Our contributions are summarized as follows: 

\begin{itemize}
\item We present the first gradient-based discrete sampling method that targets multimodal distributions. Our method incorporates cyclical schedules for both step size and balancing parameter to facilitate the exploration and exploitation in discrete distributions.
\item We propose an automatic tuning algorithm to configure the cyclical schedule, enabling effortless and customized adjustments across various datasets without much manual intervention.
\item We offer non-asymptotic convergence and inference guarantees for our method in general discrete distributions. To our knowledge, this is the first non-asymptotic convergence bound of gradient-based discrete sampling to the target distribution with inference guarantees, which could be of independent interest.
\item We demonstrate the superiority of our method for both sampling and learning tasks including restricted Boltzmann machines, deep energy-based models, and large language models. 
\vspace{-1em}
\end{itemize}

\begin{figure*}[t]
\def\dotwidth{.2\textwidth}
\centering
\subfloat[Ground Truth ]{\includegraphics[width=\dotwidth]{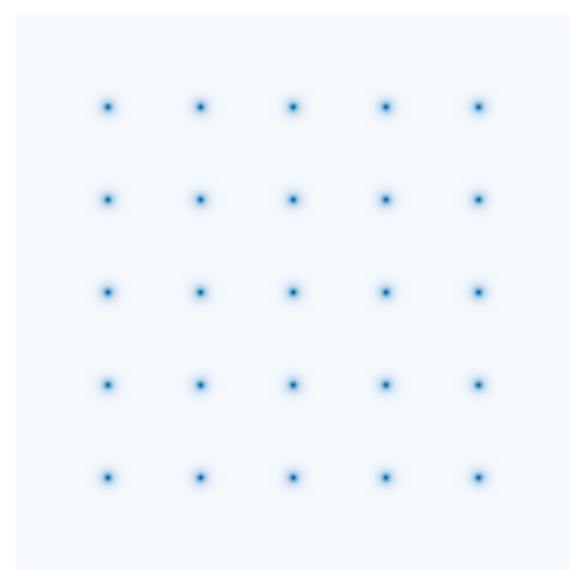}}\hspace*{0pt}
\subfloat[RW]{\includegraphics[width=\dotwidth]{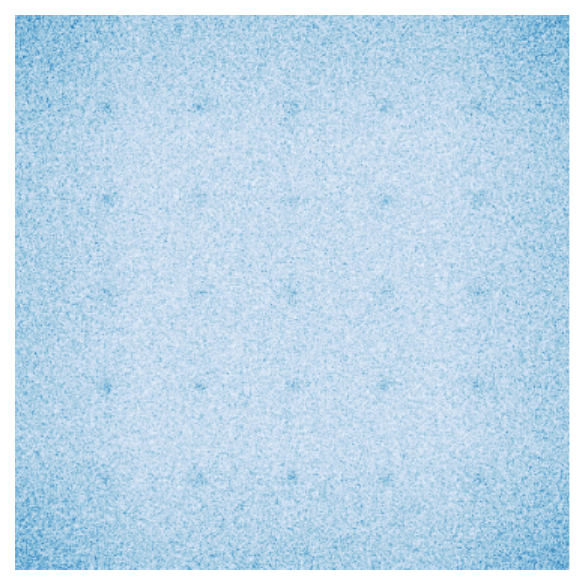}}\hspace*{0pt}
\subfloat[DMALA]{\includegraphics[width=\dotwidth]{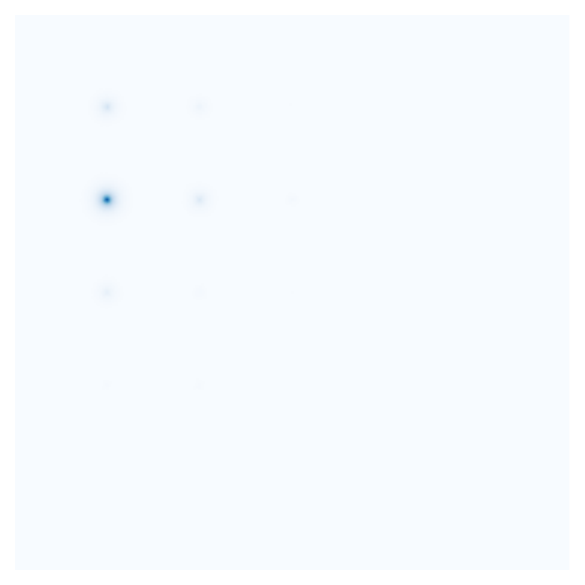}}\hspace*{0pt}
\subfloat[AB]{\includegraphics[width=\dotwidth]{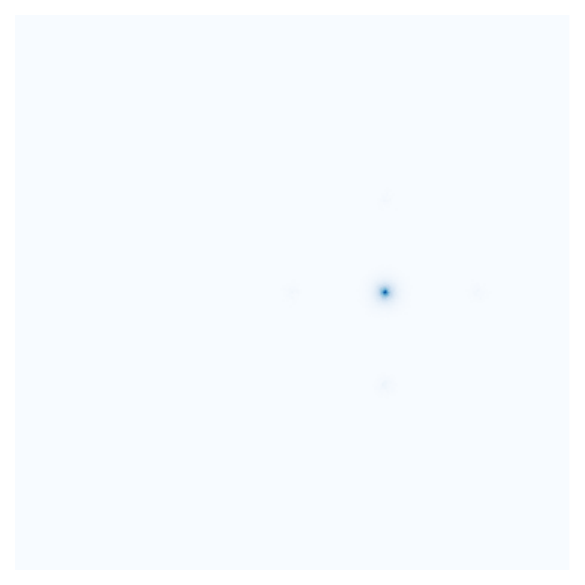}} \hspace*{0pt}
\subfloat[ACS (Ours)]{\includegraphics[width=\dotwidth]{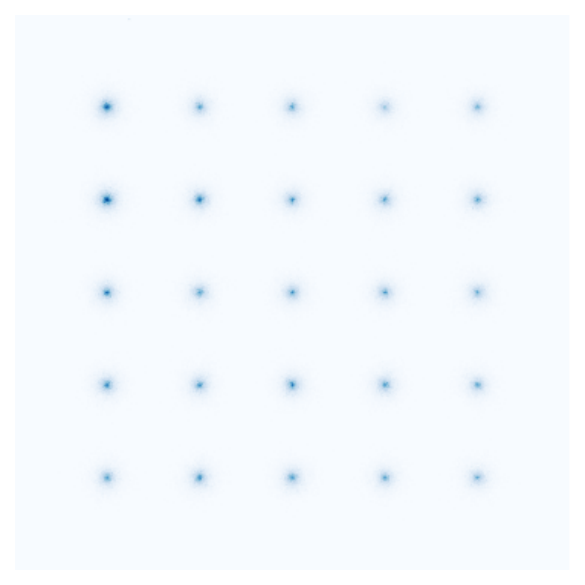}}
\caption{Sampling on a 2d distribution with multiple modes. (a): ground truth. (b): results from a random walk sampler. (c): results from DMALA~\citep{zhang2022langevinlike} with a manually tuned step size. (d): results from AB~\citep{sun2023anyscale}. (e): results from our method ACS. While the random walk sampler can find all modes, its characterization is noisy and lacks details for each mode. Gradient-based samplers (b) and (c) effectively characterize a specific mode but are easily trapped in some local modes. Our method (d) can find all modes efficiently and characterize each mode accurately.}
\label{lots_of_dots}
\vspace{-1em}
\end{figure*}
\label{introduction}

\section{Related Work}
\vspace{-1mm}
\paragraph{Gradient-based Discrete Sampling}

\citet{zanella2017informed} introduced a family of locally informed proposals, laying the foundation for recent developments in efficient discrete sampling. Building upon this, \citet{grathwohl2021gwg} further incorporates gradient approximation, significantly reducing computational costs. Following these pioneering efforts, numerous studies have proposed various gradient-based discrete sampling techniques \citep{rhodes2022enhanced,sun2021path,sun2022optimal,sun2023discrete,xiang2023efficient}. \citet{zhang2022langevinlike} develops a discrete Langevin proposal, translating the powerful Langevin algorithm to discrete spaces. \citet{sansone2022lsb} introduces a self-balancing method to optimize the balancing functions in locally balanced proposals. While our work also utilizes an adaptive phase, it differs in that our parameterization extends beyond the local regime, and our proposal parameterization is considerably simpler.

Perhaps the most closely related study is the any-scale balanced sampler~\citep{sun2023anyscale}. This method uses a non-local balancing proposal and adaptively tunes it. Our work, however, differs in several key aspects: (1) We focus on combining both local and non-local proposals to effectively characterize multimodal discrete distributions, as opposed to focusing on a single optimal proposal. (2) Our automatic tuning algorithm adjusts the step size and balancing parameter by considering the special discrete structures and targets a specific Metropolis-Hastings acceptance rate, rather than maximizing the average coordinates changed per step. (3) Our method can be applied to learning energy-based models (EBM) and sampling large language models, whereas their approach cannot.
\vspace{-2mm}
\paragraph{Sampling on Multimodal Distributions}
There exist several sampling methods targeting discrete multimodal distributions, such as simulated tempering~\citep{marinari1992simulated}, the Swendsen-Wang algorithm~\citep{swendsen1987nonuniversal}, and the Wolff algorithm~\citep{wolff1989collective}. However, these methods usually use random walk or Gibbs sampling as their proposals. It is unclear how these methods can be adapted for gradient-based discrete sampling.

In continuous spaces, various gradient-based methods have been developed specifically for multimodal distributions~\citep{Zhang2020Cyclical, deng2020non, deng2020contour}. Our method distinguishes from the cyclical step size in~\citet{Zhang2020Cyclical} by incorporating an additional cyclical balancing parameter schedule and an automatic tuning scheme, which are crucial for efficient exploration in discrete distributions. Furthermore, our theoretical analysis of convergence is different from that in \citet{Zhang2020Cyclical} which relies on continuous stochastic processes.

\label{related_work}
\section{Preliminaries}
\subsection{Problem Definition}
We consider the task of sampling from some target distribution defined over a discrete space
\begin{align*}
    \pi(\theta) &= \frac{1}{Z} \exp(U(\theta)), \ \ \ \theta \in \Theta.
\end{align*}
Here, $\theta$ is a $d$ dimensional discrete variable in domain $\Theta$, $U$ is the energy function, and $Z$ is the normalizing constant. We make the following assumptions of the domain and the energy function, following the literature of gradient-based discrete sampling \citep{grathwohl2021gwg,sun2021path,zhang2022langevinlike}: (1) The domain is coordinatewisely factorized, $\Theta = \Pi_{i=1}^d \Theta_i$. (2) The energy function $U$ can be extended to a differentiable function in $\mathbb{R}^d$. 

\subsection{Locally Balanced Proposals}
\cite{zanella2017informed} introduces a family of informed proposals, which is defined below: 
\begin{align}
    Q_{g, \alpha}(\theta' | \theta) &= \frac{g\left(\frac{\pi(\theta')}{\pi(\theta)}\right) K_{\alpha}(\theta' - \theta)}{Z_{g, \alpha}(\theta)} 
    \label{bal_prop}
\end{align}
Here, $K_\alpha$ is a kernel that determines the scale of the proposal where $\alpha$ plays a similar role as the step size.  $g(t)$ is a balancing function that determines how to incorporate the information about $\pi$. If $g(t) = t g(\frac{1}{t})$, the proposal becomes a locally balanced proposal, which is asymptotically optimal in the local regime, that is, when the step size $\alpha\rightarrow 0$. 
\label{prelims}

\section{Automatic Cyclical Sampler}
\label{acs_description}
We aim to develop a sampler capable of escaping local modes in general multimodal discrete distributions, including those that appear in deep energy-based models and large language models. First, we motivate using the cyclical schedule by demonstrating the issue of gradient-based samplers getting stuck in local modes on a toy dataset.  We then present our sampler's parameterization of the step size and balancing function. Next, we introduce a cyclical schedule for the proposal distribution that enables effective exploration and characterization of discrete multimodal distributions. Finally, we develop an automatic tuning method that simplifies the process of identifying hyperparameters in cyclical schedules.

\subsection{Motivating Example: A Synthetic Multimodal Discrete Distribution}
To demonstrate the crucial issue of local modes trapping gradient-based samplers, we construct a 2-dimensional dataset consisting of integers. We define $\Theta = \{0, 1, \cdots N\}^2$, where $N$ is the maximum value for each coordinate. Given a set of modes $\{\mu_1, \mu_2, \dots \mu_l\}$, we define the energy as follows: 
\begin{align}
U(\theta) &= \log \left( \sum_{i=1}^{l} \exp \left(\frac{||\theta - \mu_i||^2}{2 \sigma} \right) \right).
\end{align}
This distribution enables easy comparison between different methods in terms of their ability to both explore and exploit the target distribution. We demonstrate the results of various samplers in Figure \ref{lots_of_dots}. More experimental details can be found in Appendix \ref{appndx:sec:exp:mm-sample}.

A visual comparison reveals that while gradient-based samplers (DMALA~\citep{zhang2022langevinlike} and AB~\citep{sun2023anyscale}) are very effective at characterizing a given mode, they tend to get trapped in some small neighborhood, preventing a proper characterization of the distribution as a whole.

We can understand this behavior of gradient-based samplers by comparing them to a random walk sampler (RW), which is able to explore all the modes but unable to fully characterize the detail of each one. While the RW sampler proposes movements uniformly over the sample space, gradient-based samplers propose movement based on the geometry of the distribution as captured by the gradient. Because the proposed movements are in the direction of increasing density, these proposals are able to characterize a given mode in detail. At the same time, these proposals hinder escape to more distant modes as the gradient points away from their direction. For this reason, we observe that local modes are able to ``trap" gradient-based samplers. 

\subsection{Parameterized Proposal Distribution}
\label{main_body:parameterized_distribution}
To derive an automatic schedule for the proposal, we need to parameterize the proposal first. 
We define $K_{\alpha}$ and $g(t)$ in the informed proposal~\citep{zanella2017informed} as follows: 
\begin{align}
    K_{\alpha}(\theta'-\theta) = \frac{\exp{\frac{-|| \theta' - \theta||^2}{2 \alpha}}}{Z},~~~\alpha\in (0,\infty); \hspace{2em}
    g(t) = t^{\beta},~~~\beta\in [0.5,1)
\end{align}
where $\beta$ is called a balancing parameter. $\alpha\rightarrow 0, \beta = 0.5$ correspond to a locally-balanced proposal and $\alpha\rightarrow \infty, \beta = 1$ correspond to a globally-balanced proposal. Values in between result in interpolations between locally-balanced and globally-balanced proposals. Note that $\beta\in(0,1)$ in \citet{sun2023anyscale} while our range is narrower.

We substitute these definitions into Equation~\eqref{bal_prop} and apply the first-order Taylor expansion: 
\begin{align}
    Q_{\alpha, \beta}(\theta' | \theta) &\propto \exp{ \left(
        \beta(U(\theta') - U(\theta)) 
        - \frac{||\theta' - \theta||^2}{2 \alpha}
        \right)}
        \approx \exp{ \left(
    \beta(\nabla_{\theta} U(\theta) (\theta' - \theta)) 
    - \frac{||\theta' - \theta||^2}{2 \alpha} 
    \right)}.
    \label{equation:param_proposal}
\end{align}
As in \citet{zhang2022langevinlike}, we use the assumption of coordinate-wise factorization to obtain the following coordinate-wise proposal function:
\begin{align}\label{eq:proposal}
    Q^i_{\alpha, \beta}(\theta_i' | \theta)=\text{Cat} \left( \text{Softmax} \left( \beta \nabla U(\theta)_i (\theta'_i - \theta_i) - \frac{(\theta'_i - \theta_i)^2}{2 \alpha}\right) \right).
\end{align}

In order to make the resulting Markov chain reversible, we apply the Metropolis-Hastings correction, where we accept the proposed step with the following probability: 
\begin{align}
	A(\theta' | \theta, \alpha, \beta) = \min \left(1, \exp({U(\theta') - U(\theta)}))\frac{Q_{\alpha, \beta}(\theta | \theta')}{Q_{\alpha, \beta}(\theta' | \theta)} \right).
 \label{equation:mh_a_s}
\end{align}
In summary, we parameterize our proposal as in Equation~\eqref{eq:proposal} which includes a spectrum of local and global proposals. Our proposal is determined by two hyperparameters, the step size $\alpha$ and the balancing parameter $\beta$. 
\subsection{Cyclical Hyperparameter Schedules}
\paragraph{Cyclical Step Size Schedule}
In order to effectively explore the whole target distribution while retaining the ability to exploit local modes, we adopt the cyclical step size schedule from \citet{Zhang2020Cyclical}. The definition of step size $\alpha$ for iteration $k$ is as follows: 
\begin{equation}
	\alpha_k = \max \left( \alpha_{\text{max}} \cdot \cos \left( \frac{\pi \text{mod}(k, s)}{s}\right) + 1, \alpha_{\text{min}} \right),
 \label{equation:step_sched}
\end{equation}
where $\alpha_{\text{max}}$ is the initial step size, $\alpha_{\text{min}}$ is the minimum step size, and $s$ is the number of sampling steps per cycle. Differing from the cyclical schedule in~\citet{Zhang2020Cyclical}, we additionally add $\alpha_{\text{min}}$ to make sure that even the smallest step size remains effective in discrete spaces. 

\begin{figure}
  \centering
    \begin{subfigure}[t]{0.67\textwidth}
        \centering
        \includegraphics[width=.9\textwidth]{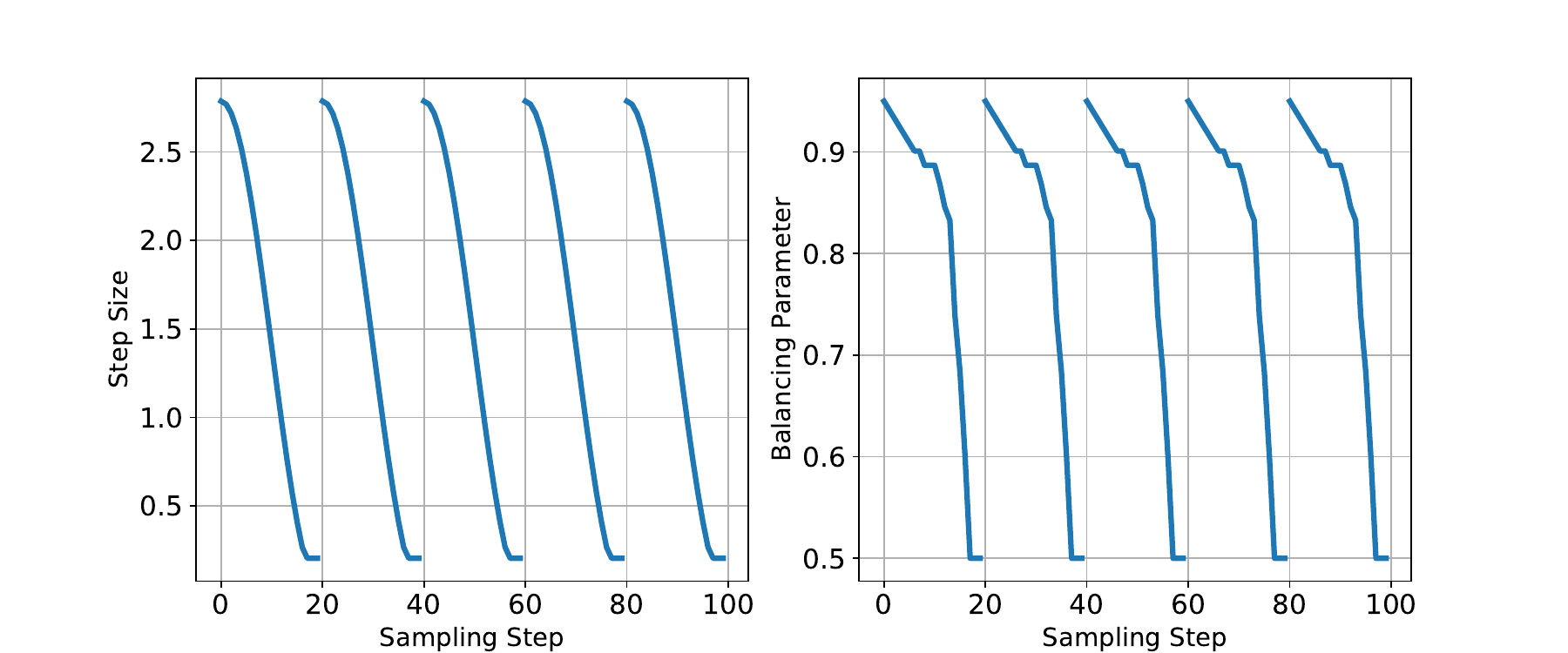}
        \caption{}
    \end{subfigure}%
    ~ 
    \begin{subfigure}[t]{0.33\textwidth}
        \centering
        \includegraphics[width=\textwidth]{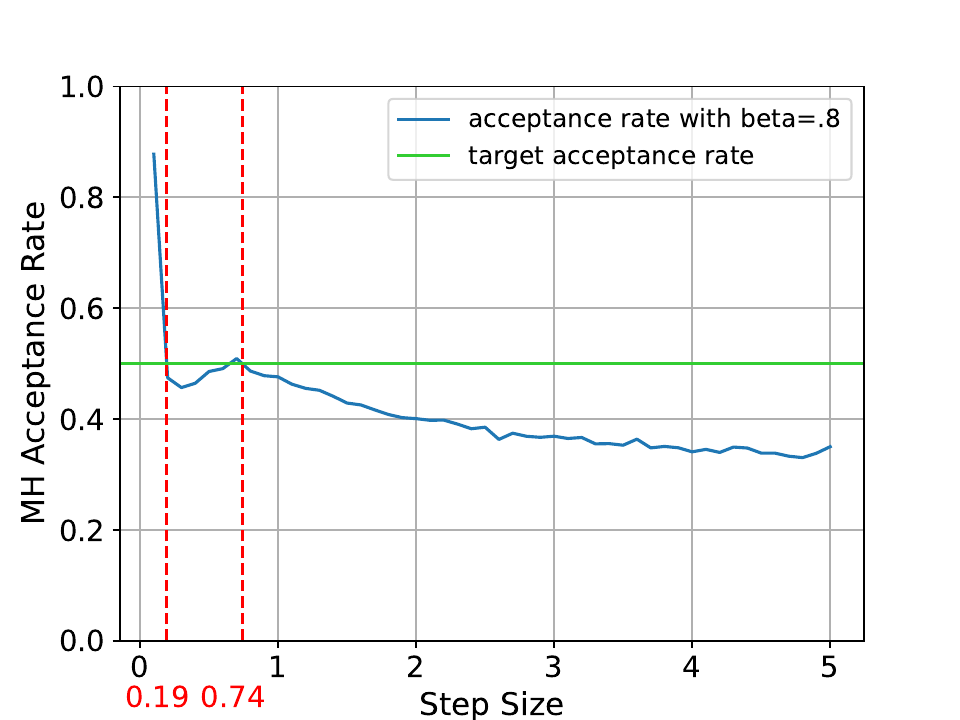}
        \caption{}
    \end{subfigure}
    \caption{(a) $\alpha$-schedule along with the corresponding $\beta$ schedule. The initial large steps enable the sampler to explore different regions of the distribution, while the smaller steps enable good characterization of each region. The balancing parameter $\beta$ varies with the step size to enable high acceptance rates for all step sizes. (b) Acceptance rate v.s. step size on EBM sampling on MNIST shows a non-monotonic relationship.}
    \vspace{-2em}
\label{fig:step_size_acc}
\end{figure}
\begin{figure*}[t]
    \label{alg:alg1}
    \input{main_body/sampling_alg}
    \vspace{-.75cm}
\end{figure*}

\paragraph{Cyclical Balancing Schedule}
Using large step sizes in \eqref{equation:step_sched} can easily result in very low acceptance rates, removing any benefit of exploration. To address this issue, we introduce a balancing parameter schedule, which enables reasonable acceptance rates for large step sizes.
As discussed in \citet{zanella2017informed,sun2023anyscale}, the balancing parameter should vary with different step sizes to achieve a ``balanced" proposal. A balanced proposal ensures that the Markov chain is reversible with respect to a certain distribution, which will converge weakly to the target distribution. For example, when the step size $\alpha \to 0$, the optimal balancing parameter is $\beta = 0.5$, whereas for $\alpha \to \infty$, the ideal balancing parameter becomes $\beta = 1$. 

Thus for a schedule of step sizes, each $\alpha_i$ requires a different $\beta_i \in [.5, 1)$, with larger step sizes having $\beta_i$ closer to 1 and smaller step sizes having $\beta_i$ closer to 0.5. Using the Metropolis-Hastings acceptance rate to characterize the quality of a given $\alpha, \beta$ pair, we define the value of $\beta_i$ as follows: 
\begin{align}
\beta_i=
    \text{argmax}_{\beta \in [.5, \beta_{i-1}]} \left( \mathbb{E}_{\theta \sim \pi, \theta' \sim Q_{\alpha, \beta}} \left[
    \{ A(\theta' | \theta, \alpha_i, \beta) 
    \right]
    \right)
    \label{eq:balance_schedule}
\end{align}
Intuitively, this definition means that the best $\beta_i$ for a given step size $\alpha_i$  maximizes the average acceptance rate for the proposal function $Q_{\alpha, \beta}$. It also conveys that larger step sizes will have larger balancing parameters. 

We include a visualization of the resulting schedules in Figure \ref{fig:step_size_acc}a and outline our algorithm using the $\alpha$, $\beta$ schedules in Algorithm~\ref{cs-algo}. 
Note that it incurs no extra overhead compared to previous gradient-based discrete sampling methods as it only adjusts hyperparameters $\alpha$ and $\beta$. 
By using a combination of large and small $\alpha$ and $\beta$, we enable the sampler to explore the distribution fully without sacrificing the ability to characterize each mode. This is demonstrated in Figure \ref{lots_of_dots}e. 
\subsection{Automatic Schedule Tuning}
\label{main_body:tuning_alg}
For schedules in Equations~\eqref{equation:step_sched} and ~\eqref{eq:balance_schedule}, we have parameters $\alpha_{\text{max}}$, $\alpha_{\text{max}}$, and $\{\beta_{1}, \beta_{2} \dots \beta_{s}\}$ to be decided. In this section, we will introduce an automatic tuning algorithm to easily find suitable values.
\paragraph{Main Idea} 
Our automatic tuning algorithm depends on the initial balancing parameter $\beta_{\text{max}}$, the final balancing parameter $\beta_{\text{min}}$, a target acceptance rate $\rho^*$, and the number of steps per cycle $s$. These values are relatively easy to select, as detailed in Appendix \ref{appndx:deriv:adapt_alg}. Below, we assume they are already determined. 
The tuning algorithm first estimates the optimal choices for $\alpha_{\text{max}}$ and $\alpha_{\text{min}}$ based on $\rho^*$, which can then be used to construct the full step-size schedule using \eqref{equation:step_sched}.  
We then construct the balancing parameter schedule using \eqref{eq:balance_schedule}. The method is summarized in Algorithm~\ref{auto-algo} with details regarding subroutines in Appendix \ref{appndx:deriv:adapt_alg}. 
Our automatic tuning introduces minimal overhead relative to the more expensive sampling process. For example, in Section~\ref{sec:exp}, we use 500 steps as the budget for Algorithm~\ref{auto-algo} where the total number of sampling steps is at least 5000. We further demonstrate that our algorithm is relatively robust to hyperparameters in Appendix~\ref{hyperparam-sensitivity}.

In short, our tuning algorithm adopts an alternative optimization strategy, leveraging existing knowledge about hyperparameter values (e.g. $\beta_{\text{min}}$ and $\beta_{\text{max}}$ should be around $0.5$ and $1$ respectively). While estimating the best pair $\alpha, \beta$ is challenging due to their interdependence, it is much easier to fix one and optimize the other \citep{sun2023anyscale}. 

\begin{figure*}[t]
    \label{alg:alg2}
    \input{main_body/main_adaptive_algorithm}
    \vspace{-.75cm}
\end{figure*}

\paragraph{Estimating $\alpha_{\text{max}}, \alpha_{\text{min}}$} 

For a given $\beta_{\text{max}}, \beta_{\text{min}}$, our goal is to find step sizes $\alpha_{\text{max}}, \alpha_{\text{min}}$ that enable an acceptance rate close to $\rho^*$. We can formally state this goal as follows: 
\begin{align}
    J(\alpha, \beta) &= \mathbb{E}_{\theta \sim \pi} 
                        \left[ 
                            \mathbb{E}_{\theta' \sim Q_{\alpha, \beta}(\cdot  | \theta)}                              
                                \left| \rho^* - A(\theta' | \theta, \alpha, \beta) \right|                            
                        \right].
    \label{equation:alpha_beta_obj}
\end{align}
Given $\beta_{\text{max}}, \beta_{\text{min}}$, we construct the following objectives to pick the corresponding $\alpha_{\text{max}}, \alpha_{\text{min}}$:
\begin{align}\label{eq:alpha-max-min}
    \alpha_{\text{max}} &= \max \{ \alpha \ \text{s.t} \ J(\alpha, \beta_{\text{max}}) \approx 0 \} \nonumber\\
     \alpha_{\text{min}} &= \min \{ \alpha \ \text{s.t} \ J(\alpha, \beta_{\text{min}}) \approx 0 \}.
\end{align}
By defining the initial and final step sizes in this manner, we ensure that our cyclical schedule includes a wide range of hyperparameter pairs with different trade-offs in exploration and exploitation.

To solve \eqref{eq:alpha-max-min}, we estimate $\alpha_{\text{max}}$ by starting with a large step size and gradually decreasing it to find the step size that yields $\rho^*$. Unlike existing works that start with small step sizes, we observed that multiple $\alpha$ values can yield the same acceptance rate for a given $\beta$, as shown in Figure \ref{fig:step_size_acc}b. Therefore, we start with an upper limit $\alpha_{\text{ceil}}$ and reduce the step size to avoid missing any larger $\alpha$ values that meet our criteria. Detailed implementation is provided in Algorithm \ref{alg:alpha_max} in the Appendix. $\alpha_{\text{min}}$ can be obtained similarly.

\paragraph{Estimating Balancing Schedule}
After setting the start and end pairs for the $\alpha$ and $\beta$ schedules, we now define intermediate $\beta$ values. As the entire step size schedule is fixed by \eqref{equation:step_sched}, the problem is to determine the best balancing parameter for each step size. A simple strategy is to test different $\beta$ spaced out evenly throughout the interval $[.5, \beta_{i-1}]$ and select the best value in terms of acceptance rate. This approach leverages the observation that smaller step sizes tend to have smaller optimal balancing parameters. Detailed implementation is given in Algorithm \ref{alg:bal-est} in Appendix.

\section{Theoretical Analysis}
\label{main_body:theoretical_results}
In this section, we present a convergence rate analysis for Algorithm~\ref{cdlp_sample_alg}. For general step size and balancing parameter schedules, i.e., at each cycle, the algorithm will go through $s$ steps in which it will use step sizes $\alpha_1,\alpha_2,\cdots,\alpha_s$ and balancing parameters $\beta_1,\beta_2,\cdots,\beta_s$. Note that for each pair $(\alpha_i, \beta_i)$, we have a Markov transition operator which we label $P_i$ for $i=1,2,\cdots,s$.
The Markov operator for a single cycle is given by $\hat{P}=P_1 P_2 \cdots P_s$.
We have the following two assumptions:

\begin{assumption}\label{assm:g:Lipschitz}
    The function $U(\cdot) \in C^2(\mathbb{R}^d)$ has $M$-Lipschitz gradient. That is 
    \[\|\nabla U(\theta)-\nabla U(\theta')\|\le M \left\|\theta-\theta'\right\|.\]
\end{assumption}
Note that it implicitly assumes that the set in domain $\Theta$ is finite. We define $conv(\Theta)$ as the convex hull of the set $\Theta$. 

\begin{assumption}\label{assm:Hessian}
For each $\theta \in \mathbb{R}^d$, there exists an open ball containing $\theta$ of some radius $r_{\theta}$, denoted by $B(\theta,r_{\theta})$, such that the function $U(\cdot)$ is $m_{\theta}$-strongly concave in $B(\theta,r_{\theta})$ for some $m_{\theta}>0$. 
\end{assumption}

Assumptions~\ref{assm:g:Lipschitz} and \ref{assm:Hessian} are standard in optimization and sampling literature~\citep{bottou2018optimization,dalalyan2017theoretical}. Under Assumption~\ref{assm:Hessian}, $U(\cdot)$ is $m$-strongly concave on $conv(\Theta)$, following Lemma \ref{lemma:strng:cncvxity} in Appendix.

We define $diam(\Theta)=\sup_{\theta,\theta' \in \Theta} \|\theta-\theta'\|$ and ${\epsilon}_{\alpha_i,\beta_i}$ to be
\begin{align*}
    \exp\left\{-\left(\frac{1}{2\alpha_i}+\beta_i{M}-\frac{ \beta_i m}{2}\right)\, diam(\Theta)^2-\|\nabla U(a)\|\, diam(\Theta)\right\}.
\end{align*}
The Markov kernel corresponding to each $P_i$ in each step of the cycle in Algorithm~\ref{cdlp_sample_alg} is
\begin{align}
    p_i(\theta' \vert \theta)=   A(\theta' \vert \theta, \alpha_i, \beta_i) Q_{\alpha_i,\beta_i}(\theta' \vert \theta)+\left(1-L(\theta)\right)\delta_{\theta}(\theta')\label{kernel:DMALA}
\end{align}
where
\[L(\theta)=\sum_{\theta'\in \Theta} \left(\frac{\pi(\theta') Q_{\alpha_i,\beta_i}(\theta \vert \theta')}{\pi(\theta) Q_{\alpha_i,\beta_i}(\theta' \vert \theta)}\wedge 1\right) Q_{\alpha_i,\beta_i}(\theta' \vert \theta)\] 
is the total rejection probability from $\theta$. Finally, recall that the total variation distance between two probability measures $\mu$ and $\nu$, defined on some space $\Theta \subset \mathbb{R}^d$ is 
\[\|\mu-\nu\|_{TV}=\sup_{A \in \mathcal{B}(\Theta)} |\mu(A)-\nu(A)|\]
where $\mathcal{B}(\Theta)$ is the set of all measurable sets in $\Theta$.
\paragraph{Constant Step Size and Balancing Parameter}
To analyze Algorithm~\ref{cdlp_sample_alg} with step size and balancing parameter schedules, we first solve a simpler problem where the step size and balancing parameter are fixed and then extend the analysis to the setting of Algorithm~\ref{cdlp_sample_alg}. 

Our main method of proof is to establish uniform ergodicity of the Markov chain $P$, for a single $\alpha, \beta$, by establishing a uniform minorization for $P$. We denote the transition kernel for this Markov chain $P$ as $p(\cdot \mid \cdot)$, which is given in \eqref{kernel:DMALA} with $\alpha_i,\beta_i$ replaced by a fixed $\alpha,\beta$. 
\begin{lemma}\label{lemma:beta:maj}
    Let Assumptions~\ref{assm:g:Lipschitz}-\ref{assm:Hessian} with $\alpha <\frac{1}{ \beta M}$ hold. Then for the Markov chain $P$ we have, for any $\theta, \theta' \in \Theta$, 
    \begin{align*}
        p(\theta \mid \theta') \ge {\epsilon}_{\beta , \alpha} \, \frac{\exp\left\{\beta U(\theta')\right\}}{\sum_{\theta'\in\Theta}\exp\left\{\beta U(\theta')\right\}},
    \end{align*}
    where
    \begin{align*}
    {\epsilon}_{\beta , \alpha} =  
     & \exp \left\{ -\left(\frac{1}{2\alpha}+\beta M-\frac{\beta \, m}{2}\right) diam(\Theta)^2 \right.  \\ 
     & \quad \quad \left.  -\|\nabla U(a)\| diam(\Theta) \right\} 
     \end{align*}
     with
    $a \in  \argmin_{\theta \in \Theta} \|\nabla U(\theta)\|$ . 

\begin{proof}
    The proof is provided in Appendix~\ref{proof:lmj}. 
\end{proof}
    
\end{lemma}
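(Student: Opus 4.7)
Since the self-loop contribution $(1-L(\theta'))\delta_{\theta'}(\theta)$ in \eqref{kernel:DMALA} is nonnegative, it suffices to lower bound $A(\theta\mid\theta',\alpha,\beta)\, Q_{\alpha,\beta}(\theta\mid\theta')$. The Metropolis--Hastings identity writes this product as
\begin{align*}
    \min\bigl\{\,Q_{\alpha,\beta}(\theta\mid\theta'),\; e^{U(\theta)-U(\theta')}Q_{\alpha,\beta}(\theta'\mid\theta)\,\bigr\},
\end{align*}
so the plan is to exhibit a common lower bound of the form $\epsilon_{\beta,\alpha}\cdot e^{\beta U(\theta)}/\sum_{\tilde\theta}e^{\beta U(\tilde\theta)}$ for each argument and then take the minimum.

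\textbf{Forward branch.} Expanding the Taylor-based proposal as $Q_{\alpha,\beta}(\theta\mid\theta')=\exp\bigl(\beta\nabla U(\theta')^\top(\theta-\theta')-\tfrac{1}{2\alpha}\|\theta-\theta'\|^2\bigr)/Z_{\alpha,\beta}(\theta')$, I would bound the numerator from below by the descent-lemma inequality $\nabla U(\theta')^\top(\theta-\theta')\geq U(\theta)-U(\theta')-\tfrac{M}{2}\|\theta-\theta'\|^2$ (Assumption~\ref{assm:g:Lipschitz}), and bound $Z_{\alpha,\beta}(\theta')$ from above by applying the opposite side of the descent lemma termwise. Under $\alpha<1/(\beta M)$ the residual quadratic coefficient in the normalizer becomes nonpositive and can be dropped, giving $Z_{\alpha,\beta}(\theta')\leq e^{-\beta U(\theta')}\sum_{\tilde\theta}e^{\beta U(\tilde\theta)}$. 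Replacing the descent-lemma upper bound by the $m$-strong-concavity bound on $\mathrm{conv}(\Theta)$ (Assumption~\ref{assm:Hessian} extended via Lemma~\ref{lemma:strng:cncvxity}) then produces the $-\beta m/2$ term in the exponent of $\epsilon_{\beta,\alpha}$. Bounding the remaining $\|\theta-\theta'\|^2\leq diam(\Theta)^2$ delivers the desired bound for this branch.

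\textbf{Reverse branch and the $\|\nabla U(a)\|$ term.} Applying the analogous scheme to $e^{U(\theta)-U(\theta')}Q_{\alpha,\beta}(\theta'\mid\theta)$ yields, after cancellations, a residual factor $\exp\bigl((1-\beta)(U(\theta)-U(\theta'))\bigr)$. I would control it by
\[|U(\theta)-U(\theta')|\leq\max_{\xi\in\mathrm{conv}(\Theta)}\|\nabla U(\xi)\|\cdot\|\theta-\theta'\| \leq \bigl(\|\nabla U(a)\|+M\cdot diam(\Theta)\bigr)\, diam(\Theta),\]
where the second step applies Lipschitz gradient to bound the maximum of $\|\nabla U\|$ over $\mathrm{conv}(\Theta)$ in terms of its minimum $\|\nabla U(a)\|$ with $a\in\argmin_{\theta\in\Theta}\|\nabla U(\theta)\|$. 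The linear $\|\nabla U(a)\|\, diam(\Theta)$ piece passes directly into $\epsilon_{\beta,\alpha}$, while the extra $M\, diam(\Theta)^2$ combines with the quadratic correction from the previous step to give the full $\beta M\, diam(\Theta)^2$ coefficient.

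\textbf{Main obstacle.} The chief difficulty is careful bookkeeping of constants: confirming that the descent-lemma corrections applied to four different quantities (numerators and normalizers of the forward and reverse proposals) accumulate to exactly $\tfrac{1}{2\alpha}+\beta M-\tfrac{\beta m}{2}$, and that both branches of the min admit the same prefactor $\epsilon_{\beta,\alpha}$. The hypothesis $\alpha<1/(\beta M)$ is essential precisely because it guarantees nonpositivity of the residual quadratic coefficients in both normalizer bounds, so they can be absorbed without further loss; one should also verify that the strong-concavity refinement on $\mathrm{conv}(\Theta)$ applies globally rather than only within the local balls of Assumption~\ref{assm:Hessian}.
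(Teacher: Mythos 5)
Your strategy is sound and belongs to the same family as the paper's proof of Lemma~\ref{lemma:beta:maj}: drop the nonnegative self-loop term, reduce $A\cdot Q$ to a minimum of two explicit quantities, trade $\nabla U^{\top}(\theta-\theta')$ for $U$-differences plus quadratic corrections controlled by $M$, use $\alpha<1/(\beta M)$ to kill the residual quadratic in the normalizer bound $Z_{\alpha,\beta}(\cdot)\le e^{-\beta U(\cdot)}\sum_{x}e^{\beta U(x)}$, and absorb distances into $diam(\Theta)$ and gradients into $\|\nabla U(a)\|+M\,diam(\Theta)$. The structural difference is how the minimum is handled. The paper writes $A\cdot Q=\bigl(Z_{\alpha,\beta}(\theta)/Z_{\alpha,\beta}(\theta')\wedge 1\bigr)\,Q_{\alpha,\beta}(\theta'\mid\theta)$ and therefore needs a \emph{lower} bound on the normalizer as well; that lower bound is precisely where Assumption~\ref{assm:Hessian} (via Lemma~\ref{lemma:strng:cncvxity}) enters and where the $-\beta m/2$ term in $\epsilon_{\beta,\alpha}$ originates. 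Your two-branch decomposition only ever uses the \emph{upper} bound on $Z_{\alpha,\beta}$, so strong concavity is not actually needed on your route; your remark that swapping in the strong-concavity bound ``produces the $-\beta m/2$ term'' does not correspond to any step you must perform, and carrying your plan through yields a valid minorization whose constant simply lacks that term and instead carries a factor $(1-\beta)$ on the $\|\nabla U(a)\|\,diam(\Theta)$ piece. The resulting constant does not reproduce $\epsilon_{\beta,\alpha}$ exactly: your reverse branch accumulates $\tfrac{\beta M}{2}+(1-\beta)M$ on the $diam(\Theta)^2$ term rather than $\beta M-\tfrac{\beta m}{2}$, which for $\beta$ near $1/2$ is strictly worse, so as written you establish the lemma with a possibly smaller $\epsilon$. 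Since every downstream use (Theorems~\ref{thm:beta-erg} and~\ref{thm:beta-ss-erg}) only requires that \emph{some} uniform $\epsilon\in(0,1)$ of this form exist, this is a cosmetic rather than substantive discrepancy. One point where your version is actually more careful: you track the residual factor $\exp\{(1-\beta)(U(\theta)-U(\theta'))\}$ explicitly, whereas the paper's identification of the acceptance ratio with $Z_{\alpha,\beta}(\theta)/Z_{\alpha,\beta}(\theta')$ silently discards the analogous factor $\exp\{(1-2\beta)(U(\theta')-U(\theta))\}$, which vanishes only at $\beta=1/2$.
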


\begin{theorem}\label{thm:beta-erg}

    Let Assumptions~\ref{assm:g:Lipschitz}-\ref{assm:Hessian} hold with $\alpha<1/\beta M$. Then for the Markov chain $P$, the following hold:\\
    i. $P$ is uniformly ergodic with 
    \begin{align*}
        \|P^n-\pi\|_{TV} \le \left(1-{\epsilon}_{\beta , \alpha}\right)^n.
    \end{align*}
    ii. For any real-valued function $f$ and samples $X_1,X_2,X_3,\cdots,X_n$ from $P$, one has
    \begin{align*}
    \sqrt{n}\left(\frac{1}{n} \sum_{i=1}^{n} f(X_i)-\sum_{\theta \in \Theta} f(\theta) \pi(\theta)  \right) \overset{d}{\to} N(0, \tilde{\sigma}^2_{*})
\end{align*}
for some $\tilde{\sigma}_*>0$ as $n\to \infty$.
\end{theorem}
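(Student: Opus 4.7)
The plan is to deduce both parts from the minorization condition already supplied by Lemma~\ref{lemma:beta:maj}. That lemma says exactly that the one-step kernel of $P$ admits a \emph{Doeblin} minorization
\[
p(\theta \mid \theta') \;\ge\; \epsilon_{\beta,\alpha}\,\nu(\theta), \qquad
\nu(\theta)=\frac{\exp\{\beta U(\theta)\}}{\sum_{\theta''\in\Theta}\exp\{\beta U(\theta'')\}},
\]
uniformly in $\theta,\theta'\in\Theta$, with $\nu$ a bona-fide probability measure on $\Theta$ (finite since Assumption~\ref{assm:g:Lipschitz} forces $\Theta$ to be finite). The constraint $\alpha<1/(\beta M)$ is inherited from the lemma and ensures $\epsilon_{\beta,\alpha}\in(0,1)$.

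For part (i), I would invoke the classical Doeblin/coupling consequence: whenever a kernel $P$ satisfies a uniform one-step minorization $P(\theta',\cdot)\ge\epsilon\,\nu(\cdot)$, the chain is uniformly ergodic with $\|P^n(\theta',\cdot)-\pi\|_{TV}\le(1-\epsilon)^n$ for every starting state (see, e.g., Theorem 16.2.4 of Meyn and Tweedie, or the coupling construction where at each step the two chains couple with probability at least $\epsilon$ by drawing jointly from the $\nu$-component). Stationarity of $\pi$ for $P$ follows from the Metropolis--Hastings construction in~\eqref{kernel:DMALA}, which enforces reversibility with respect to $\pi$. Taking the supremum over initial distributions gives $\|P^n-\pi\|_{TV}\le(1-\epsilon_{\beta,\alpha})^n$, which is exactly (i).

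For part (ii), I would cite a standard Markov chain CLT available for uniformly ergodic chains: if $P$ is uniformly ergodic with stationary distribution $\pi$ and $f$ satisfies $\pi(f^{2+\delta})<\infty$ for some $\delta>0$, then $\sqrt{n}\bigl(n^{-1}\sum_{i=1}^n f(X_i)-\pi(f)\bigr)\Rightarrow N(0,\tilde\sigma_*^2)$ with
\[
\tilde\sigma_*^2=\mathrm{Var}_{\pi}(f(X_1))+2\sum_{k=1}^{\infty}\mathrm{Cov}_{\pi}(f(X_1),f(X_{1+k})),
\]
the series converging absolutely by the geometric mixing from (i); see, e.g., Jones (2004, ``On the Markov chain central limit theorem''), or Chan and Geyer (1994). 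The moment hypothesis is trivially met here because $\Theta$ is finite, so any real-valued $f$ on $\Theta$ is bounded, giving $\pi(f^{2+\delta})<\infty$ for every $\delta$. Positivity of $\tilde\sigma_*^2$ can be assumed in the non-degenerate case (it vanishes only for $\pi$-a.s.\ constant $f$); this is what the theorem encodes by writing ``for some $\tilde\sigma_*>0$''.

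I do not foresee a genuine obstacle: the hard analytic work is already done inside Lemma~\ref{lemma:beta:maj}, which produces the uniform lower bound on $p(\cdot\mid\cdot)$ via Taylor expansion of $U$ on the convex hull of $\Theta$ together with the Lipschitz-gradient and local strong-concavity assumptions. Once that minorization is in hand, parts (i) and (ii) are essentially textbook consequences, and the only care needed is to (a) verify that $\nu$ is indeed a probability measure supported on $\Theta$ so that the Doeblin argument applies verbatim, and (b) cite the correct uniform-ergodicity CLT to avoid having to re-derive the asymptotic variance formula.
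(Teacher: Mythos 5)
Your proposal is correct and follows essentially the same route as the paper: the uniform Doeblin minorization from Lemma~\ref{lemma:beta:maj} gives uniform ergodicity with rate $(1-\epsilon_{\beta,\alpha})^n$, and the CLT is then obtained by citing the standard result for uniformly ergodic chains in Jones (2004) --- precisely the reference the paper uses. The extra detail you supply (verifying $\nu$ is a probability measure, boundedness of $f$ on the finite state space, and the autocovariance form of $\tilde{\sigma}_*^2$) is a welcome elaboration of what the paper leaves implicit, but it is not a different argument.
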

\begin{proof}
    The proof directly follows from our Lemma~\ref{lemma:beta:maj} and~\cite{jones2004markov}[Corollary 5].
\end{proof}
Note that as $\alpha \to 0$, we have ${\epsilon}_{\beta, \alpha}\to 1$ which implies that small step sizes result in low convergence rates. This is intuitive as the algorithm could not explore much in this case. 
Furthermore, our results suggest that large $\beta$ restricts $\alpha$ to small values. Given that large $\beta$ generally requires large $\alpha$, our findings imply an upper bound for the step size.

\paragraph{Adaptive Step Size and Balancing Parameter}
Now we tackle the case of varying step sizes and balancing parameters. Each cycle has $s$ steps with step sizes $\alpha_1,\alpha_2,\cdots,\alpha_s$ and balancing parameters $\beta_1,\beta_2,\cdots,\beta_s$. Note that this case is more challenging as at each step the transition operator changes and the Markov chain is no longer homogeneous. However, the marginal chain for each cycle is indeed homogeneous and can be analyzed. We present our results in this setting as follows:

\begin{theorem}\label{thm:beta-ss-erg}
    Let Assumptions~\ref{assm:g:Lipschitz} and~\ref{assm:Hessian}  hold with $\alpha_{i}<1/\beta_{i} M$ , $i = 1, 2, \cdots s$. 
    Then for the Markov chain $\hat{P}$,
the following hold \\ 
    i. $ \hat{P}$ is uniformly ergodic with 
    $${\left \lVert \hat{P}^{n} - \pi  \right \rVert}_{TV} \le (1 - {\epsilon}_{\beta_s , \alpha_s })^{n}. $$ 
    ii. For any real-valued function $f$ and samples $X_1,X_2,X_3,\cdots,X_n$ from $\hat{P}$, one has 
    \begin{align*}
    \sqrt{n}\left(\frac{1}{n} \sum_{i=1}^{n} f(X_i)-\sum_{\theta \in \Theta} f(\theta) \pi(\theta)  \right) \overset{d}{\to} N(0, \tilde{\sigma}^2_{*}) 
    \end{align*}
    for some $\tilde{\sigma}_*>0$ as $n \to \infty$, where, 
\begin{align*}
    {\epsilon}_{\beta_s , \alpha_s } =  
     & \exp \left\{ -\left(\frac{1}{2\alpha_{s}}+\beta_{s} M-\frac{\beta_{s} \, m}{2}\right) diam(\Theta)^2 \right\}  \\ 
     & \cdot \exp \left\{ -\|\nabla U(a)\| diam(\Theta) \right\} \end{align*} 
with $a \in  \argmin_{\theta \in \Theta} \|\nabla U(\theta)\|$.
\end{theorem}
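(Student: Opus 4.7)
The plan is to lift the per-step minorization of Lemma~\ref{lemma:beta:maj} to a Doeblin-type minorization for the cycle kernel $\hat{P}=P_1 P_2\cdots P_s$, and then feed this into the same abstract result (\cite{jones2004markov}[Corollary 5]) that was used for Theorem~\ref{thm:beta-erg}. Since the hypothesis $\alpha_i<1/(\beta_i M)$ holds for every $i$, Lemma~\ref{lemma:beta:maj} applies to each $P_i$. The key observation is that we only need to invoke it for the \emph{last} step of the cycle: writing
\[
\hat{p}(\theta\mid\theta_0) \;=\; \sum_{\theta_1,\ldots,\theta_{s-1}} p_1(\theta_1\mid\theta_0)\, p_2(\theta_2\mid\theta_1)\cdots p_{s-1}(\theta_{s-1}\mid\theta_{s-2})\, p_s(\theta\mid\theta_{s-1}),
\]
and bounding the innermost factor by $p_s(\theta\mid\theta_{s-1})\ge \epsilon_{\beta_s,\alpha_s}\,\nu_s(\theta)$ with $\nu_s(\theta)\propto \exp\{\beta_s U(\theta)\}$, the factor $\nu_s(\theta)$ pulls outside the sum because it does not depend on $\theta_{s-1}$. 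The remaining iterated sum over $\theta_1,\ldots,\theta_{s-1}$ telescopes to $1$ since each $p_i(\cdot\mid\cdot)$ is a probability kernel, giving the uniform minorization $\hat{p}(\theta\mid\theta_0)\ge \epsilon_{\beta_s,\alpha_s}\,\nu_s(\theta)$ for every $\theta_0\in\Theta$.

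With this Doeblin condition on the whole finite state space $\Theta$, the rest mirrors Theorem~\ref{thm:beta-erg}. First, each $P_i$ is a Metropolis--Hastings kernel with target $\pi$ and hence $\pi$-reversible, so $\pi$ is a stationary distribution of $\hat{P}=P_1\cdots P_s$ (reversibility of the cycle itself is not needed, only invariance). Combining this with the minorization gives part~(i) via the standard coupling bound $\|\hat{P}^n-\pi\|_{TV}\le (1-\epsilon_{\beta_s,\alpha_s})^n$, and part~(ii) follows at once from \cite{jones2004markov}[Corollary~5], whose hypotheses reduce to uniform ergodicity of $\hat{P}$ together with boundedness of $f$ (automatic since $\Theta$ is finite under Assumption~\ref{assm:g:Lipschitz}).

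The main obstacle -- really the only conceptual point -- is choosing to minorize via $P_s$ alone rather than attempting to combine minorizations from all $s$ transitions. Minorizing on any interior $P_i$ would leave a residual composition $P_{i+1}\cdots P_s$ to control, for which we have no cheap uniform lower bound since these operators do not share a common reversing measure that can be factored out. Using the last step is what makes the marginalization trivial and explains why the constant in the theorem is $\epsilon_{\beta_s,\alpha_s}$ and not, say, $\prod_{i=1}^{s}\epsilon_{\beta_i,\alpha_i}$. Once this observation is made, the argument is a one-line composition of Lemma~\ref{lemma:beta:maj} with the Doeblin/CLT machinery already invoked in the constant-$(\alpha,\beta)$ case.
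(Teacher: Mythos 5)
Your proposal is correct and follows essentially the same route as the paper: the paper's Proposition~\ref{prop:dulaprop} establishes exactly the minorization you describe, bounding only the last factor $p_s(\theta\mid\theta_{s-1})\ge \epsilon_{\beta_s,\alpha_s}\nu_s(\theta)$ with $\nu_s\propto\exp\{\beta_s U\}$, pulling $\nu_s(\theta)$ out of the integral over the intermediate states (which then integrates to $1$), and feeding the resulting Doeblin condition for $\hat{P}$ into \cite{jones2004markov}[Corollary~5] together with Lemma~\ref{lemma:beta:maj}. Your closing remark about why the minorization must be taken at the final step, and why the constant is $\epsilon_{\beta_s,\alpha_s}$ rather than a product over the cycle, accurately captures the one substantive point of the argument.
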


\begin{proof}
    The proof follows from our Lemma~\ref{lemma:beta:maj}, Proposition~\ref{prop:dulaprop} and~\cite{jones2004markov}[Corollary 5]. 
\end{proof}

Both Theorems~\ref{thm:beta-erg} and~\ref{thm:beta-ss-erg} hold uniformly over all functions in the class of functions with at least a local minima in $\Theta$.
 The Central Limit Theorem results in Theorems~\ref{thm:beta-erg} and~\ref{thm:beta-ss-erg} imply that we may perform inference on the target distribution $\pi(\cdot)$ even though the asymptotic variances are unknown, as we may perform batch-means to estimate these variances~\cite{vats2019multivariate}.

In summary, we have established a geometric convergence rate to the target distribution for our sampler. Previous research has only established asymptotic convergence~\citep{zhang2022langevinlike} or relative convergence rate bounds~\citep{grathwohl2021gwg} for gradient-based discrete samplers. To the best of our knowledge, our results present the first non-asymptotic convergence bounds that explicitly quantify the distance between the estimated and target distributions. 
Further, our convergence bound also shows that discrete spaces play a fundamental part in the ergodic nature of these algorithms.

\section{Experiments}\label{sec:exp}
\label{main_body:exp}
\begin{figure*}[t]
\centering
\includegraphics[width=.9\textwidth]{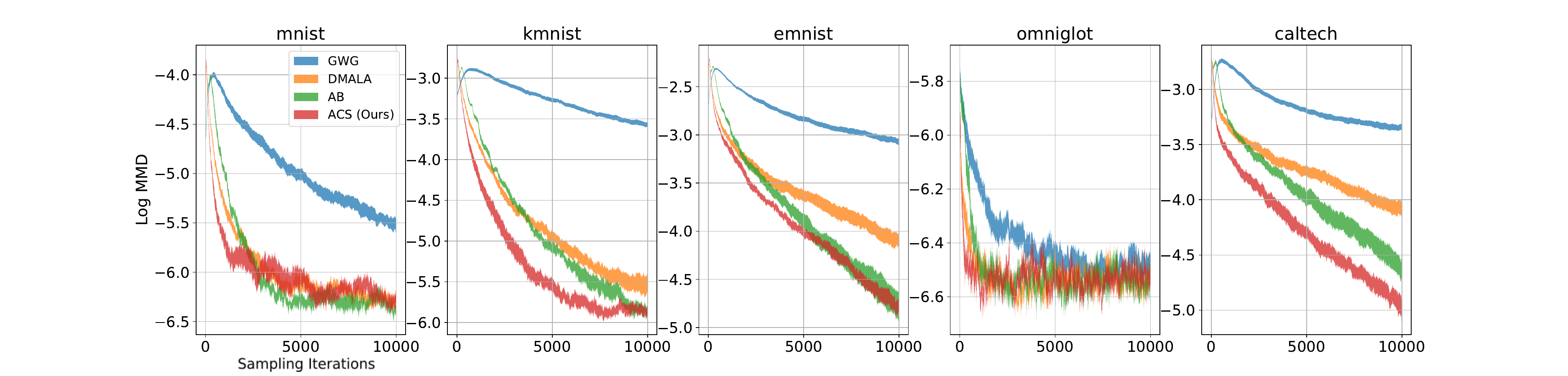}
\includegraphics[width=.85\textwidth]{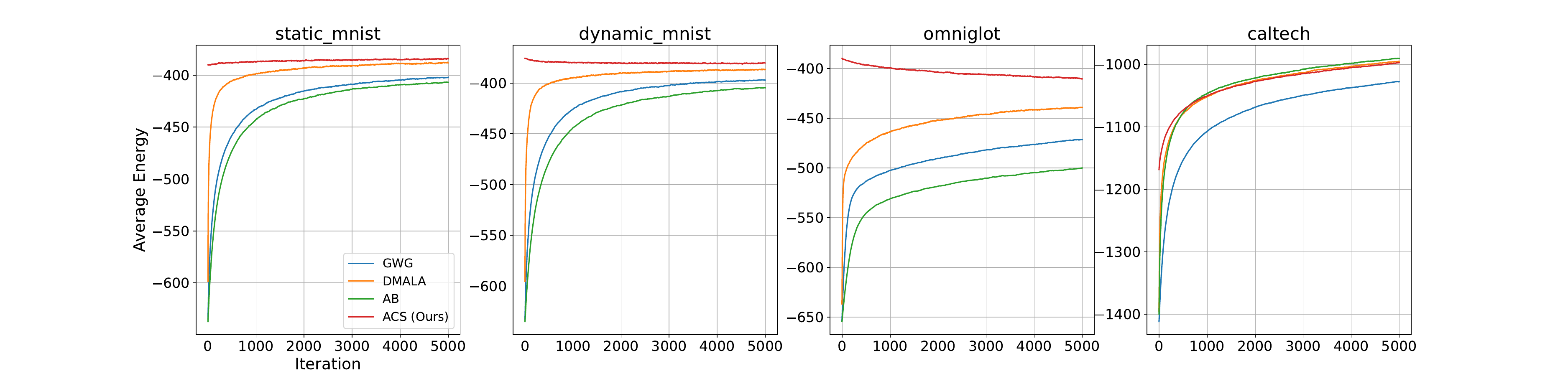}
\caption{Sampling performance of various methods. Top row demonstrates convergence to ground truth on RBMs, bottom row demonstrates convergence speed on deep EBMs. We report the average performance across 11 random seeds within 1 standard error for the top row, and we show the average performance for the bottom row, as the error area is not visibly clear. For both distribution types, ACS demonstrates competitive performance with all baselines. 
}
\label{fig:sampling_comp}
\end{figure*}

We call our method that combines Algorithm~\ref{cs-algo} and \ref{auto-algo} \emph{Automatic Cyclical Sampler} (ACS).
For RBM and EBM sampling tasks, we compare our method to Gibbs-with-Gradient (GWG) \citep{grathwohl2021gwg}, Any-scale sampler (AB)  \citep{sun2023anyscale}, and Discrete Metropolis Adjusted Langevin Algorithm (DMALA) \citep{zhang2022langevinlike}, which are popular and recent gradient-based discrete samplers. For learning tasks, we omit AB sampler as it is not originally applied to the model learning tasks. 
More experimental details are in Appendix \ref{app:sec:experiment}. We released our code at the following link: \url{https://github.com/patrickpynadath1/automatic_cyclical_sampling}.
\subsection{Sampling Tasks}

We evaluate our sampling method on both Restricted Boltzmann Machines (RBMs) and deep convolutional Energy-Based Models (EBMs). For RBMs, we measure accuracy by comparing the Maximum Mean Divergence (MMD) between samples generated by our method and Block Gibbs, which can be considered the ground truth. We sample on EBMs to demonstrate our method's scalability to more complex distributions. Experimental details are provided in Appendices \ref{appndx:exp:rbm_sample} and \ref{appndx:sec:exp:ebm_sample} for RBM and EBM sampling, respectively.

\paragraph{Results} 
In Figure \ref{fig:sampling_comp}, our proposed ACS method performs competitively for both RBMs and EBMs across all datasets. For RBM sampling, ACS is able to converge to the ground truth quicker than other methods due to the ability to capture the multi-modal nature of the target distribution. We see that this performance generalizes to more complex distributions as represented by deep EBMs.
\subsection{Learning RBMs and EBMs}
\begin{wraptable}[8]{R}{.6\textwidth}
\centering
\vspace{-1.2cm}
\caption{Deep Convolution EBM Log likelihood scores on test data as estimated by AIS. GWG results are taken from \citep{grathwohl2021gwg}. ACS is able to achieve better results than the baselines. }
\resizebox{.6\textwidth}{!}{
\begin{tabular}[\textwidth]{lcccl}\toprule
      & GWG\textsuperscript{*}  & DMALA & ACS \\\midrule
Static MNIST & $-80.01$ &  $-80.031 \pm 0.038$  & $\mathbf{-79.905 \pm 0.057}$\\
Dynamic MNIST & $-80.51$ &  $-80.120 \pm 0.036$ &  $-79.634 \pm 0.024$ \\
Omniglot & $-94.72 $&  $-99.243 \pm 2.101$  &  $\mathbf{-91.487 \pm 0.128}$ \\
Caltech & $-96.20$  &  $-98.001 \pm 0.371$ & $\mathbf{-89.262 \pm 0.290}$
\\\bottomrule
\end{tabular}}
\label{table:ebm_res}
\end{wraptable}

One common application of MCMC techniques is learning energy-based models (EBMs), where a neural network parameterized by $\phi$ represents an energy function $E_{\phi}$. These models are typically trained using Persistent Contrastive Divergence (PCD) and evaluated with Annealed Importance Sampling (AIS). Details on ACS for EBM learning are in Appendix~\ref{appndx:deriv:ebm_intro}. We test our algorithm on learning deep convolutional EBMs, with experimental details in Appendix \ref{appndx:sec:exp:ebm_learn}. We include additional experimentation with learning RBMs in Appendix \ref{appndx:sec:exp:rbm_learn}.

\textbf{Results}
Table \ref{table:ebm_res} demonstrates that ACS is capable of learning better quality EBMs given the same computational budget as DMALA. Furthermore, ACS learns better quality models with \textit{less} computational budget than GWG. 

\subsection{Text Infilling}
One challenging application of discrete MCMC methods is text-infilling, where the goal is to complete a sentence with some missing words. Given a dataset of sentences, we randomly mask our 50\% of the words and fill them in using the distribution given by a pretrained RoBERTa model. 
We include experiment details in Appendix \ref{appndx:sec:exp:text_infill}.

\paragraph{Results} Table \ref{tab:text_infilling_metrics} demonstrates that ACS is capable of generating more diverse sentences, as ACS has a lower self-BLEU and higher percentage of unique n-grams. While the perplexity results seem to imply that ACS generates lower quality than DMALA, we note that the ACS generations are more likely to be predicted as linguistically acceptable as shown by the CoLA scores. 
We discuss the results more extensively in Appendix \ref{appndx:sec:exp:text_infill}. 

\begin{table}[H]
    \centering
    \resizebox{\textwidth}{!}{
    \begin{tabular}{c c c c c c}
    \toprule
         Dataset & Method & Perplexity ($\downarrow$) & CoLA ($\uparrow$) & Self-Bleu ($\downarrow$) & \begin{tabular} {c | c } \multicolumn{2}{c}{Unique n-gram ($\uparrow$)} \\ \midrule n=2 & n=3 \end{tabular} \\
         \midrule \multirow{2}{*}{Grimm} & DMALA & $\mathbf{280.82 \pm 27.26}$ & $50.46 \pm 1.25$ & $41.83 \pm 6.85$  &  \begin{tabular} {c | c } $48.55$ & $70.56$ \end{tabular}\\
        & ACS & $369.44 \pm 30.85 $ & $\mathbf{53.42 \pm 1.26}$ & $\mathbf{36.70 \pm 6.42}$  & \begin{tabular} {c | c } $\mathbf{53.91}$ & $\mathbf{74.70}$ \end{tabular}\\ 
        \midrule 
        \multirow{2}{*}{SST2} & DMALA & $\mathbf{256.66 \pm 10.53}$ & $42.62 \pm 1.14$ & $37.47 \pm .79$  & \begin{tabular} {c | c } $57.68$ & $75.21$ \end{tabular} \\
        & ACS & $307.05 \pm 14.84$ & $\mathbf{47.12 \pm 1.20}$ & $\mathbf{32.42 \pm .75}$  & \begin{tabular} {c | c } $\mathbf{62.54}$ & $\mathbf{78.87}$ \end{tabular} \\
        \toprule
    \end{tabular}}
    \caption{Empirical evaluation of the generated sentences. ACS outperforms DMALA for all metrics related to diversity. 
    }
    \label{tab:text_infilling_metrics}
\end{table}

\section{Conclusion and Limitations}
In this work, we propose Automatic Cyclical Sampler (ACS) to more effectively characterize multimodal distributions in discrete spaces. First, we demonstrate that gradient-based samplers are prone to getting trapped in local modes, preventing a full characterization of target distributions. 
To address this issue, we combine a cyclical step size schedule with a cyclical balancing parameter schedule along with an automatic tuning algorithm to configure these schedules. We also theoretically establish the non-asymptotic convergence bound of our method to the target distribution in addition to providing extensive experimental results. 

While our proposed ACS method generates impressive results on a wide range of experiments, there are some limitations to our work that should be mentioned. Specifically, though we have proven a geometric convergence rate and the relationship between $\alpha$ and $\beta$ in our theoretical analysis, we require $U(\cdot)$ to be twice differentiable as well as locally strongly concave and the proof is not based on the specific tuning algorithm implemented. This is why we provide extensive experimentation to demonstrate that our algorithm is capable of picking good $\alpha, \beta$ schedules. 

\newpage
\bibliographystyle{plainnat}
\bibliography{citations}

\newpage
\appendix
\section{Details of Automatic Cyclical Sampler Algorithm}
\label{appndx:deriv:adapt_alg}
Here we include more details regarding the Automatic Cyclical Sampler algorithm. We discuss all the individual sub-routines that compose the algorithm shown in Algorithm \ref{auto-algo}. We also include an ablation study to demonstrate the robustness of our algorithm to various hyper-parameter configurations.

\paragraph{InitialBurnin} We find that in order to produce meaningful estimates for the objective in \eqref{equation:alpha_beta_obj}, it is necessary to burn in the MCMC sampling chain. This is due to the dependence of the acceptance rate on current sample $\theta$. If we use $\theta$ very low in density with respect to the target distribution, the acceptance rates estimated by the tuning algorithm will lose accuracy as the sampler converges to the target distribution. In order to avoid this issue, we run a quick burn-in stage with two distinct stages. 

The first stage uses the gradient information to move the sampler away from the initialized point as quickly as possible. We use the parameterized proposal from Equation \eqref{equation:param_proposal} with stepsize $\alpha_{\text{ceil}}, \beta_{\text{max}}$ without any Metropolis-Hastings correction as this enables very large movements from the initial sample. 

For some datasets, this enables a very quick burn-in. This can be noticed in Figure \ref{fig:sampling_comp} for Static/Dynamic MNIST and Omniglot. We hypothesize that this is due to the distribution having a relatively simple structure that enables the gradient to provide meaningful information for very large sampling steps. It is impossible to determine \textit{a priori} whether a given distribution will have this property, so we include a following stage that uses a Metropolis-Hastings correction to increase the chance of arriving at a reasonable sample $\theta$.

For this stage, we construct a naive step size schedule and balancing constant schedule using the values of $\alpha_{\text{ceil}}, \alpha_{\text{floor}}, \beta_{\text{max}}, \beta_{\text{min}}$. We then run the parameterized sampler from Equation \eqref{equation:param_proposal} with the Metropolis-Hastings correction. Our goal is to move the sampler to samples $\theta$ that are more likely in the target distribution. This will enable the acceptance rates computed during the tuning algorithm to be closer to the acceptance rates for the steady-state chain. 

For all the sampling experiments, these two stages combined use 100 sampling steps.

\paragraph{EstimateAlpha} Here we discuss the algorithm used to calculate both $\alpha_{\text{max}}, \alpha_{\text{min}}$ as defined in Equation \eqref{eq:alpha-max-min}. When calculating $\alpha_{\text{max}}$, the goal is to pick the largest stepsize $\alpha_{\text{max}}$ that acheives the acceptance rate $\rho^*$ for a given $\beta_{\text{max}}$. When calculating $\alpha_{\text{min}}$, the goal is to determine the smallest step-size capable of acheiving the target acceptance rate. We put the full pseudo-code in Algorithm \ref{alg:alpha_max}.

For calculating $\alpha_{\text{max}}$ and $\alpha_{\text{min}}$, the algorithm follows the general pattern of automatically shifting the range of potential $\alpha$ based on the best values calculated from the previous iteration. When calculating $\alpha_{\text{max}}$, the algorithm starts with an upper-bound initialized to $\alpha_{\text{bound}} = \alpha_{\text{ceil}}$ and iteratively decreases the range of proposed $\alpha$. For $\alpha_{\text{min}}$, the algorithm starts with a lower bound $\alpha_{\text{bound}} = \alpha_{\text{floor}}$ and iteratively increases the range. For both, the other bound is calculated by the following learning rule: 
\begin{align*}
    \alpha_{\text{prop}} &= \alpha_{\text{bound}} \pm \zeta |\rho - \rho^*|.
\end{align*}
Here, $\zeta$ is the learning rate that determines how much we can adjust the step size in one tuning step. We found $\zeta$ insensitive and set $\zeta = .5$ in all tasks. Additionally, $\rho$ is the best acceptance rate computed from the previous iteration of the algorithm. For the first step of the algorithm, we set $\rho = 0$. 

The algorithm uses $\alpha_{\text{prop}}$, $\alpha_{\text{bound}}$ to determine the range of $\alpha$ to test. For calculating $\alpha_{\text{max}}$, the algorithm searches in the range of $[\alpha_{\text{prop}}, \alpha_{\text{bound}}]$. For calculating $\alpha_{\text{min}}$, the range is $[\alpha_{\text{bound}}, \alpha_{\text{prop}}]$.

Given the appropriate range of $\alpha$ and an initial $\theta$, we test $t$ potential $\alpha$ and calculate their respective acceptance rates using Equation \eqref{equation:mh_a_s}. Once we have computed all the acceptance rates, we set $\alpha_{\text{bound}}$ to the value that resulted in the most optimal acceptance rate as determined by Equation \eqref{equation:alpha_beta_obj}, $\theta$ to the corresponding $\theta'$, and $\rho$ to the corresponding acceptance rate.  

\paragraph{Choice of $\beta_{\text{max}},\beta_{\text{min}}, \rho^*, s$} 
The automatic tuning algorithm depends on an initial choice of $\beta_{\text{max}}, ,\beta_{\text{min}}, \rho^*, s$ that enable it to automatically configure an effective hyper-parameter schedule. Here we describe the general approach to picking these values. 

For some target distributions, it is possible that the best possible acceptance rate with a very high $\beta_{\text{max}}$, such as $\beta_{\text{max}} = .95$, will not be close to the target acceptance rate $\rho^*$. In this case, the EstimateAlphaMax algorithm will keep on decreasing the proposed $\alpha_{\text{max}}$, which will result in a very small $\alpha_{\text{max}}$. In order to avoid this behavior, we recommend starting with $\beta_{\text{max}} = .95$, and decreasing it by $.05$ if the resulting $\alpha_{\text{max}}$ is reasonable. 

We always set $\beta_{\text{min}}=0.5$ which is the smallest value $\beta$ can take. 

We determine the target $\rho^*$ by starting with a value of $.5$ and increasing it by $.05$ until desirable performance metrics are obtained. While this process is essentially the same as a grid search, we note that we only needed to apply this process in the specific case of training a deep EBM on the Caltech Silhouettes dataset. For all other tasks and datasets, the target acceptance rate of $\rho^* = .5$ was effective. We discuss the unique difficulty presented within the Caltech Silhouettes dataset in \ref{appndx:sec:exp:ebm_learn}. 

To determine the steps per cycle $s$, we required a similar approach to determine the optimal value. In our experiments, we only look at two different values: either $s = 8$, or $s = 20$. Having a longer cycle length tends to enable more exploitation of the target distribution, whereas having a shorter cycle enables more exploration. While we do not have an algorithm for automatically configuring this value, we were able to achieve good results across all tasks and datasets by choosing either of these two values. For more details on the resulting hyper-parameters used for each experiment, see Appendix \ref{app:sec:experiment}. 

\subsection{Hyper-parameter Sensitivity}\label{hyperparam-sensitivity}

Our method introduces the following hyperparameters: 
$\beta_{\text{max}}$, $\beta_{\text{min}}$, $\alpha_{\text{ceil}}$, $\alpha_{\text{floor}}$, learning rate for tuning $\gamma$, steps per cycle $s$, target acceptance rate $\rho^*$, and budget $B$. This may seem like many additional hyperparameters, but the majority of these are introduced due to the automatic tuning mechanism and are not changed across all tasks and datasets in the paper: $\gamma = .5$, $\alpha_{\text{floor}} = .05$, $\alpha_{\text{ceil}} = 5$, $\beta_{\text{min}} = .5$, $B=200$. Thus the only hyperparameters requiring tuning in practice are $\beta_{\text{max}}$, $\rho^*$, and $s$. Note that the existing adaptive discrete sampler, any-scale sampler introduced in \citep{sun2023anyscale}, has a similar number of hyper-parameters: initial step size $\sigma$, initial balancing parameter $\alpha$, update rate $\gamma$, decay rate $\beta$, buffer size $N$, initial Hessian matrix $W$, and initial diagonal matrix $D$. Like our method, most of these hyperparameters are fixed across experiments. 

We conduct an ablation study to evaluate the sensitivity of our tuning algorithm to these hyperparameters choices. We choose one hyperparameter at a time to ablate and keep the rest at default values of the hyperparameters at their default setting. We run the RBM sampling experiment over multiple datasets, each for 10 random seeds, and report the average results in Figure \ref{fig:ablation_rbm}. We omit the standard error as that would harm the interpretability of the graph as many of the plots are quite close together. 

We can summarize the main takeaways as follows: 
\begin{enumerate}
    \item The sensitivity of our algorithm to the hyperparameters depends on the dataset. For example, the sensitivity of our algorithm is low on MNIST, kMNIST, eMNIST, and Omniglot while the sensitivity is relatively high on Caltech.
    \item The optimal hyperparameter values depend on the dataset. For example, high values of $s$ generally yield superior results, except for Caltech, where lower values excel. Similarly, low $\beta_{\text{max}}$ values are usually less effective, though Caltech is an exception, showcasing decent outcomes. In general, the hyperparameter values we selected to generate the final results in the experiment section were the ones that generalized across the datasets. 
    \item For each ablation, the values tested demonstrate reasonable results when compared with the baselines. While not all hyperparameter values result in equally competitive performance, all of them outperform the Gibbs-With-Gradient sampler \cite{grathwohl2021gwg}. This demonstrates that our method performs well with a wide range of hyperparameters and can achieve even better performance with careful hyperparameter tuning. 
\end{enumerate}

In conclusion, we believe these results demonstrate that our algorithm is relatively robust to choice in hyperparameters. 
\begin{figure}[h]
\hspace{-4mm}
\includegraphics[width=15cm]{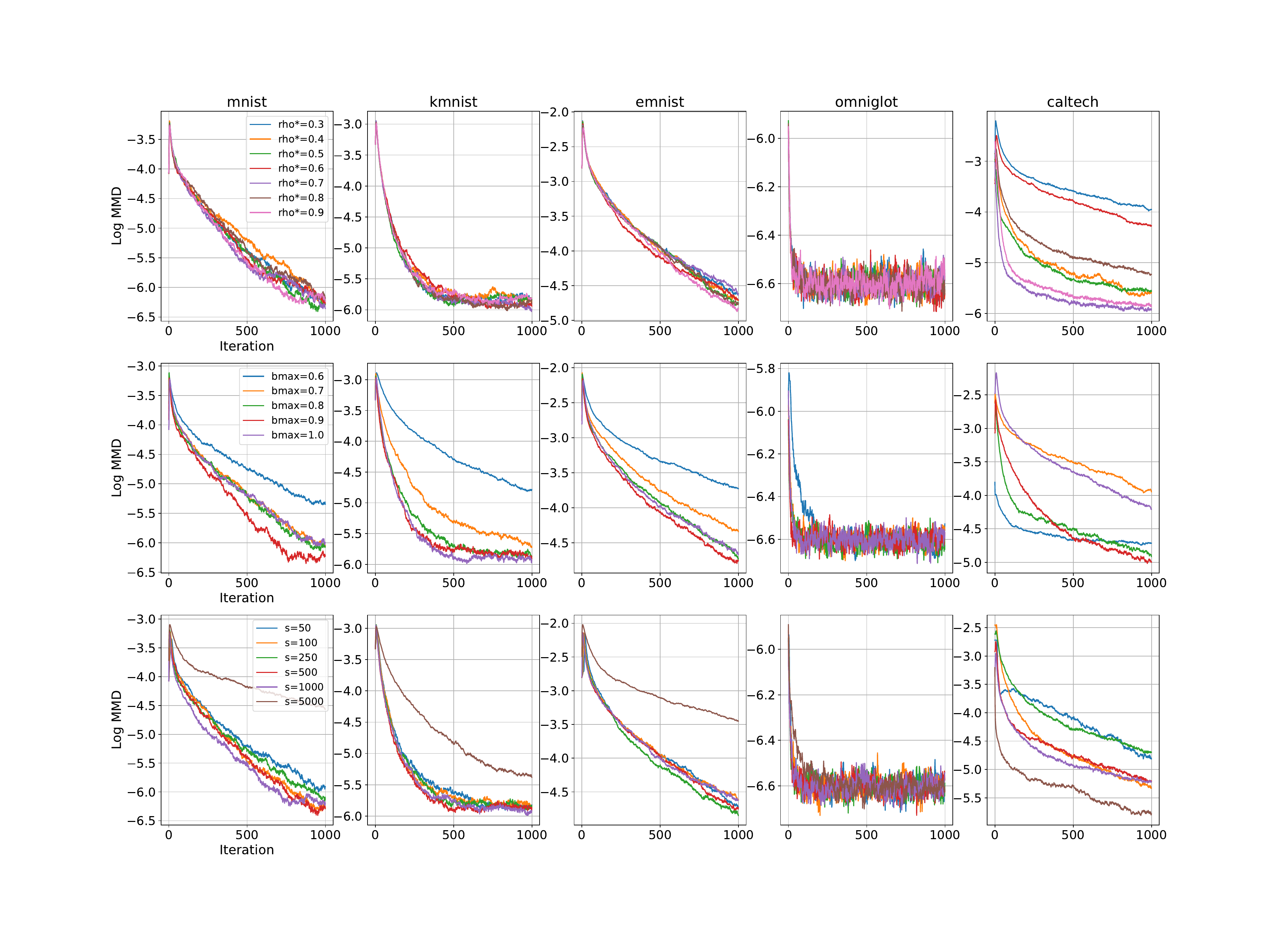}
\caption{Average performance across multiple seeds for various hyper-parameter settings. We note that all configurations to exhibit convergence to the ground truth as indicated by the maximum mean discrepancy (log MMD), albeit with varying convergence speeds. In some cases, specific hyper-parameter configurations are able to achieve better performance than what we report in the RBM sampling experiment. Overall, we can observe that our algorithm is reasonably robust to various hyper-parameter configurations as it will still demonstrate convergent behavior towards the ground truth.}
\label{fig:ablation_rbm}
\end{figure}

\begin{algorithm}
	\caption{InitBurnin}\label{euclid}
	\begin{algorithmic}[1]
		\REQUIRE $\alpha_{\text{ceil}}, \alpha_{\text{floor}}$, $\beta_{\text{max}}, \beta_{\text{min}}$, steps per cycle $s$, steps to take without MH correction $l=50$, steps to take with MH correction $l_{\text{MH}}=50$, initial state $\theta$
		\FOR{step $i$ in range($l$)}
            \STATE $\theta \sim Q_{\alpha_{\text{ceil}}, \beta_{\text{max}}}(\cdot | \theta)$ \LineComment{Run burnin steps without MH correction}
            \ENDFOR
            \STATE $\{\alpha_{0}, \alpha_{1} \cdots \alpha_{s-1}\} \gets $ values from Equation \eqref{equation:step_sched} using $\alpha_{\text{ceil}}$, $\alpha_{\text{floor}}$. 
            \STATE $\{\beta_{0}, \beta_{1} \cdots \beta_{s-1}\} \gets$ values from Equation \eqref{equation:step_sched} using $\beta_{\text{max}}$, $\beta_{\text{min}}$
             \LineComment{We can use Equation \eqref{equation:step_sched} to get interpolations of $\beta$}
            \STATE number of cycles $n = \text{floor}(\frac{l_{\text{MH}}}{s})$
            \STATE Obtain $\theta$ by running Algorithm \ref{cs-algo} using the calculated $\alpha$, $\beta$ schedule \LineComment{Run burnin steps with MH correction}
		\STATE \textbf{return} $\theta$
	\end{algorithmic}
    \label{alg:init_burnin}
\end{algorithm}
\begin{algorithm}
    \caption{EstimateAlpha}\label{euclid}
    \begin{algorithmic}[1]
    \REQUIRE $\alpha_{\text{bound}}$, BUDGET, initial state $\theta$, Balancing parameter $\beta$, target acceptance rate $\rho^*$, learning rate $\zeta$, number of proposals per step $t=5$, flag MAX
    \STATE $\rho_{\text{cur}} \gets 0$
    \WHILE{iteration $ i \leq \text{BUDGET}$}
    \IF{MAX}
    \STATE $\alpha_{\text{prop}} = \alpha (1 - \zeta|\rho^* - \rho_{\text{cur}}|)$ \LineComment{adaptively decrease the range of potential $\alpha$}
    \STATE proposed-params $\gets $ LinSpace($\alpha_{\text{prop}}$, $\alpha_{\text{bound}}$, $t$) 
    \LineComment{we use $\alpha_{\text{bound}}=\alpha_{\text{ceil}}$ as the ceiling for proposed $\alpha$}
    \ELSE
    \STATE $\alpha_{\text{prop}} = \alpha (1 + \zeta|\rho^* - \rho_{\text{cur}}|)$ \LineComment{For AlphaMin, adaptively increase the range of potential $\alpha$}
    \STATE proposed-params $\gets $ LinSpace($\alpha_{\text{bound}}$, 
    $\alpha_{\text{prop}}$, $t$)
    \LineComment{For AlphaMin, use $\alpha_{\text{bound}}=\alpha_{\text{floor}}$ as the floor for proposed $\alpha$}
    \ENDIF
    \STATE initialize bookkeeping to keep track of proposed states and acceptance rates
    \FOR{$\alpha \in $ proposed-params}
    \STATE $\theta' \sim Q_{\alpha_{\text{prop}}, \beta}(\cdot | \theta)$
    \LineComment{Use proposed $\alpha$ to take sampling step}
    \STATE $\rho = A(\theta' | \theta, \alpha_{\text{prop}}, \beta) $ 
    \LineComment{Compute acceptance rate for proposed $\alpha$}
    \STATE $i = i + 1$
    \ENDFOR
    \STATE Set $\rho_{\text{cur}}$ to the acceptance rate closest to the target $a^*$
    \STATE Set $\alpha_{\text{bound}}$ to the corresponding $\alpha$ \LineComment{Update $\alpha_{\text{bound}}$ to shift the range of proposed $\alpha$ for the next step}
    \STATE set $\theta_\text{cur}$ to the corresponding $\theta$
    \ENDWHILE
    \IF{MAX}
        \STATE return $\alpha_{\text{max}} = \alpha_{\text{bound}}$
    \ELSE 
        \STATE return $\alpha_{\text{min}} = \alpha_{\text{bound}}$
    \ENDIF
    \end{algorithmic}
    \label{alg:alpha_max}
\end{algorithm} 
\begin{algorithm}
    \caption{EstimateBalSched}\label{euclid}
    \begin{algorithmic}[1]
    \REQUIRE Step size schedule $\{\alpha_{\text{max}}, \alpha_1, \dots \alpha_{\text{min}}\}$, $\beta_{\text{max}}$, $\beta_{\text{min}}$, number of proposals per step $t=10$, initial state $\theta$, target acceptance rate $\rho^*$
    \STATE $\beta_{\text{floor}} = \beta_{\text{min}}$, $\beta_{\text{ceil}} = \beta_{\text{max}}$
    \STATE $\beta$-sched $\gets \{\beta_{\text{max}}\}$
    \FOR{$i$ in $\{1, 2, \dots s-1\}$}
    \STATE proposed-params $\gets $ LinSpace($\beta_{\text{floor}}$, $\beta_{\text{max}}$, $t$)
    \LineComment{Create $t$ potential balance parameters for index $i$ in the schedule}
     \STATE initialize bookkeeping to keep track of proposed states and acceptance rates
    \FOR{$\beta \in $ proposed-params}
    \STATE $\theta' \sim Q_{\alpha_i, \beta}(\cdot | \theta)$
    \LineComment{Use current proposed $\beta$ to take a sampling step} 
    \STATE $\rho = A(\theta' | \theta, \alpha_i, \beta) $
    \LineComment{Evaluate the acceptance rate of proposed $\beta$ for current $\alpha_i$} 
    \STATE bookkeeping[$\beta$] $\gets \theta', \rho$
    \ENDFOR
    \STATE pick $\beta_i$ as $\beta \in $ bookkeeping largest $\rho$
    \STATE $\beta_{\text{ceil}} \gets \beta_i$
    \LineComment{Shrink the range of potential balancing parameters by using assumption $\beta_i > \beta_{i+1}$} 
    \STATE $\theta = \theta'$ correspending to $\beta_i$
    \ENDFOR
    \STATE $\beta$-sched.append($\beta_{\text{min}}$)
    \STATE return $\beta$-sched   
    \end{algorithmic}
    \label{alg:bal-est}
\end{algorithm} 

\label{app:sec:acs-for-learning}
\section{ACS for EBM Learning}
\label{appndx:deriv:ebm_intro}
\subsection{Background}
Energy Based Models (EBMs) are a class of generative models that learn some unnormalized distribution over a sample space. As discussed in \cite{hinton2002training}, these models can be trained via the following Maximum Likelihood objective: 
\begin{align}
    \mathcal{L}(\phi) = \mathbb{E}_{x \sim p_{\text{data}}} \left[- \log p_{\phi} (x)\right]
    \label{eq:ml_cd}
\end{align}
The gradient updates for this loss function are known to be as follows: 
\begin{align}
\nabla_{\phi} L(\phi) = \mathbb{E}_{x \sim p_{\text{data}}} \left[\nabla_{\phi} E_{\phi}(x) \right] - \mathbb{E}_{x \sim p_{\phi}} \left[\nabla_{\phi} E_{\phi}(x) \right]
\end{align}
While the expectation on the left is straight forward to calculate given a dataset, calculating the right expectation is not as clear. Here we will mention the two methods that are relevant towards our experiments with EBMs. 

\paragraph{Contrastive Divergence (CD)} In order to estimate the second term, we initialize some sampler using the $x$ in the first term and run it for a set number of sampling steps. For a more detailed description, refer to \cite{hinton2002training}.

\paragraph{Persistent Contrastive Divergence (PCD)}

The expectation on the right can be calculated using samples from a persistent Markov Chain that approximates the true distribution \cite{tieleman2008training}. Instead of resetting the chain each training iteration, we maintain a buffer of the generated samples that we use to calculate the second expectation. This method relies on the intuition that the model distribution does not vary too widely within one iteration. Using the intuition provided by \citep{du2019implicit}, we can view this process as updating the model parameters $\phi$ to put more weight on true samples and less weight on fake samples. By doing so, the model will in turn generate samples that closer to those from the true distribution. 

\subsection{Persistent Contrastive Divergence with ACS}
\paragraph{Main Idea} We can apply the ACS algorithm combining the automatic tuning of the cyclical schedule with the original PCD learning algorithm. Our goal in doing so is to improve PCD through better characterization of the entire model distribution. During training, we can view PCD as adjusting the model parameters to ``push down" the probability of samples from the model distribution while ``pushing up" samples from the true data distribution. Because our sampling method is able to explore the model's distribution more effectively than other samplers, we can adjust more regions of the model distribution at a quicker rate than previous sampling methods, which should improve the quality of gradient updates and thus lead to better model parameters. We adapt ACS to work within PCD by having the step size depend on the training iteration as opposed to the sampling iteration, with the corresponding $\alpha, \beta$ pair being used for all the sampling steps within the iteration. We include the complete learning algorithm in Algorithm \ref{algo:acs-pcd}.

\paragraph{Cyclical Scheduling} We find that the learning task requires a  different approach to the cyclical scheduling than the sampling task. Rather than having a relative equal amount of exploration and exploitation, we find that it is more effective to use a cyclical schedule biased towards exploitation. However, exploration is still important as it enables the model to better represent the distribution as a whole rather than a few local modes. Given this, we construct a cyclical schedule consisting of one iteration that uses $\alpha_{\text{max}}, \beta_{\text{max}}$ with the rest using $\alpha_{\text{min}}, \beta_{\text{min}}$.

\paragraph{Tuning} One of the advantages of using the simplified cyclical schedule is that it only requires two pairs of hyper-parameters to be optimized. Thus we can leverage the EstimateAlphaMax and EstimateAlphaMin algorithm to both tune the respective $\alpha, \beta$ pair while also updating the persistent buffer. Not only does this reduce the additional overhead of the tuning component, but it allows the hyper-parameters to adapt to the changing EBM distribution. 

\begin{algorithm}
    \caption{ACS for Persistent Contrastive Divergence}\begin{algorithmic}[1]
    \REQUIRE Number Iterations $N$, EBM $E_\phi$, data-loader $D$, sampler $Q$, small sampling steps $S_{\text{small}}$, big sampling steps $S_{\text{big}}$ , initial buffer $X_f$, cycle length $s$, $\alpha_{\text{floor}}$, $\alpha_{\text{ceil}}$, adaptive learning rate $\zeta$, adaptive budget BUDGET
    \WHILE{iteration $i \leq N$}
    \FOR{$X_t \sim D$}
        \STATE cycle number $c = \text{floor}(\frac{i}{C})$
        \IF{$c \mod K = 0$} 
            \IF{$i \mod s = 0$}
                \STATE $X_f, \alpha_{\max} \gets$ EstAlphaMax($\alpha_{\text{ceil}}$, budget=BUDGET, learning-rate =$\gamma$)
            \ELSE
                \STATE $X_f, \alpha_{\min} \gets$ EstAlphaMin($\alpha_{\text{floor}}$, budget=BUDGET, learning-rate=$\gamma$) 
            \ENDIF
            \STATE Update Sampler Step Schedule            \LineComment{Update the buffer by running either the AlphaMax or AlphaMin estimation algorithm}
        \ELSE
            \IF{$i \mod s = 0$}
                \STATE $S = S_{\text{big}}$
                \STATE $\alpha = \alpha_{\text{max}}, \beta = \beta_{\text{max}}$ 
                \LineComment{Use the $\alpha, \beta$ pair that best enables exploration}
            \ELSE 
                 \STATE $S = S_{\text{small}}$
                 \STATE $\alpha = \alpha_{\text{min}}, \beta = \beta_{\text{min}}$
                 \LineComment{Use the $\alpha, \beta$ pair that best enables exploitation}
            \ENDIF
            \STATE Construct $Q = Q_{\alpha, \beta}( \cdot | X_f)$ using \eqref{equation:param_proposal}
        \FOR{sampling step in range($S_\text{big}$)}
            \STATE $X \sim Q( \cdot | X_f)$
            \IF{$i \mod s = 0$}
                \STATE $X_f \gets X$
                \STATE \textbf{continue}
                \LineComment{If $i$ is the first step of the cycle, omit the MH correction}
            \ENDIF
            \STATE $X_f \gets X$ with acceptance probability as calculated in \eqref{equation:mh_a_s}
        \ENDFOR
        \ENDIF
        \STATE Calculate $\mathbb{E}_{x \sim p_\phi}[\nabla_\phi E_\phi(x)]$ using $X_f$
        \STATE Calculate $\mathbb{E}_{x \sim p_\text{data}}[\nabla_\phi E_\phi(x)]$ using $X_t$ 
        \STATE $\nabla \mathcal{L}(\phi) = \mathbb{E}_{x \sim p_\phi}[\nabla_\phi E_\phi(x)] - \mathbb{E}_{x \sim p_\text{data}}[\nabla_\phi E_\phi(x)]$
        \LineComment{Estimate the gradient of the Maximum-Likelihood objective as in \eqref{eq:ml_cd}}
        \STATE $\phi = \phi - \gamma_\phi \nabla \mathcal{L}(\phi)$
        \STATE $i += 1$
        \ENDFOR 
        \ENDWHILE
    \end{algorithmic}
    \label{algo:acs-pcd}
\end{algorithm}

\newpage
\section{Theoretical Results}\label{appndx:sec:theoretical:results}
We define the problem setting in more detail. We have a target that is of the form \[\pi(\theta)=\frac{1}{Z}\exp(U(\theta)).\]
We consider the proposal kernel as
\[Q_{\alpha,\beta}(\theta' \vert \theta)\propto \exp\left\{ \beta \nabla U(\theta)^{T}\left(\theta'-\theta\right)-\frac{1}{2\alpha}\|\theta'-\theta\|^2\right\}\]
and consider the transition kernel as 
\[p(\theta'\mid \theta)=\left(\frac{\pi(\theta')Q_{\alpha,\beta}(\theta\mid \theta')}{\pi(\theta)Q_{\alpha,\beta}(\theta'\mid \theta)}\wedge 1\right)Q_{\alpha,\beta}(\theta'\mid \theta)+\left(1-L(\theta)\right)\delta_{\theta}(\theta')\]
where $\delta_{\theta}(\theta')$ is the Kronecker delta function and $L(\theta)$ is the total acceptance probability from the point $\theta$ with 
\[L(\theta)=\sum_{\theta'\in \Theta} \left(\frac{\pi(\theta') Q_{\alpha,\beta}(\theta \vert \theta')}{\pi(\theta) Q_{\alpha,\beta}(\theta' \vert \theta)}\wedge 1\right) Q_{\alpha,\beta}(\theta' \vert \theta).\]  We also define 
\begin{align*}
    Z_{\alpha, \beta}(\theta) &= \sum_{x\in \Theta}\exp\left\{ \beta \,\nabla U(\theta)^{T}\left(x-\theta\right)-\frac{1}{2\alpha}\|x-\theta\|^2\right\}
\end{align*}
which is the normalizing constant for the proposal kernel.

\subsection{Proof of Lemma~\ref{lemma:beta:maj}}
\begin{proof}\label{proof:lmj}
By including the balancing parameter, we start by noting that 
\begin{align}\label{eqn:bal-1} 
    Q_{\alpha,\beta}(\theta' \vert \theta)=\frac{\exp\left\{ \beta \nabla U(\theta)^{T}\left(\theta'-\theta\right)-\frac{1}{2\alpha}\|\theta'-\theta\|^2\right\}}{\sum_{\theta\in \Theta}\exp\left\{\beta \,\nabla U(\theta)^{T}\left(\theta-\theta\right)-\frac{1}{2\alpha}\|\theta-\theta\|^2\right\}}
\end{align}

Consider the term, 

\begin{align}\label{eqn:bal-2} 
     \beta \hspace{0.1cm} \nabla U(\theta)^{T}\left(\theta'-\theta\right) = \beta  \hspace{0.1cm} ( -U(\theta) + U(\theta') ) - \frac{\beta}{2}  (\theta - \theta') ^ {T} (\int_{0}^{1} \nabla^{2} U ( (1-s) \theta + s \theta' ) \, ds ) (\theta - \theta')  
\end{align}

Substituting \eqref{eqn:bal-2} in \eqref{eqn:bal-1}, the numerator of $Q_{\alpha,\beta}(\theta, \theta')$

\begin{align*}
 \beta \nabla U(\theta)^{T}\left(\theta'-\theta\right)-\frac{1}{2\alpha}\|\theta'-\theta\|^2 =  
&  \beta  \hspace{0.1cm} ( -U(\theta) + U(\theta') ) \\ 
& - \frac{\beta}{2}  (\theta - \theta') ^ {T}   \left(\int_{0}^{1} \nabla^{2} U ( (1-s) \theta + s \theta' ) \, ds \right)  (\theta - \theta') \\ 
& -\frac{1}{2\alpha} (\theta - \theta')^{T} I (\theta - \theta')    \\
= &   \beta  \hspace{0.1cm} ( -U(\theta) + U(\theta') ) \\
& - \frac{1}{2}  (\theta - \theta')^{T}  \left(  \beta \int_{0}^{1} \nabla^{2} U ( (1-s) \theta + s \theta' ) \, ds  + \frac{1}{\alpha} I  \right) (\theta - \theta')  \\
& . 
\end{align*}

From Assumption \ref{assm:g:Lipschitz} (U is $M$-gradient Lipschitz), we have 
\begin{align*}
  \beta \int_{0}^{1} \nabla^{2} U ( (1-s) \theta + s \theta' ) \, ds ) (\theta - \theta')  + \frac{1}{\alpha} I  \ge  \left( \frac{1}{ \alpha}  - \beta M \right) I \\
\end{align*}

Since $\alpha < 1/ \beta M$, the matrix $\left(\frac{1}{2 \alpha} - \beta M\right) I $    is positive definite.
We note that
\begin{align}
    p(\theta' \vert \theta) & =  \left(\frac{\pi(\theta') Q_{\alpha,\beta}(\theta \vert \theta')}{\pi(\theta) Q_{\alpha, \beta}(\theta' \vert \theta)}\wedge 1\right)
    Q_{\alpha,\beta}(\theta' \vert \theta)+\left(1-L(\theta)\right)\delta_{\theta}(\theta') \\
    & \ge \left(\frac{\pi(\theta') Q_{\alpha,\beta}(\theta \vert \theta')}{\pi(\theta) Q_{\alpha, \beta}(\theta' \vert \theta)}\wedge 1\right)
    Q_{\alpha,\beta}(\theta' \vert \theta) \\ 
    & = \left(\frac{Z_{\alpha, \beta}(\theta)}{Z_{\alpha, \beta}(\theta')}\wedge  1\right) Q_{\alpha,\beta}(\theta' \vert \theta).
\end{align}

\begin{align*}
    Z_{\alpha, \beta}(\theta) &= \sum_{x\in \Theta}\exp\left\{ \beta \,\nabla U(\theta)^{T}\left(x-\theta\right)-\frac{1}{2\alpha}\|x-\theta\|^2\right\} \\ 
    & = \sum_{x\in \Theta}\exp\left\{ - \beta  \hspace{0.1cm} ( U(\theta) - U(x) ) - \frac{1}{2}  (\theta - x)^{T}  (  \beta \int_{0}^{1} \nabla^{2}  U ( (1-s) \theta + s x ) \, ds ) (\theta - x)  + \frac{1}{\alpha} I ) (\theta - x ) \right \}.
\end{align*}

 This can be seen as 
    \[\pi(\theta) Q_{\alpha,\beta}(\theta' \vert \theta)=\frac{1}{Z \, Z_{\alpha, \beta}(\theta)} \exp\left\{ \beta \left(U(\theta)+ U(\theta')\right)-\left(\theta'-\theta\right)^{T}\left(\frac{1}{2\alpha}I + \frac{\beta}{2}\int_{0}^{1} \nabla^2 U((1-s)\theta+s\theta') ds
 \right)\left(\theta'-\theta\right)\right\}.\]
    Since Assumption~\ref{assm:Hessian} holds true in this setting, we have an $m>0$ such that for any $\theta \in conv(\Theta)$
    \[ - \nabla^2 U(\theta) \ge m\, I.\]
    From this, one notes that 
    \begin{align*}
        \exp\left(- \beta U(\theta)-\frac{1}{2}\left(\frac{1}{ \alpha}-\beta \hspace{0.1cm}m\right)\, diam(\Theta)^2\right)\sum_{x \in \Theta} \exp\left(\beta U(x)\right) \le Z_{\alpha, \beta}(\theta) \le \exp\left(- \beta U(\theta)\right)\sum_{x \in \Theta} \exp\left( \beta U(x)\right)
    \end{align*}

    where the right-hand side follows from the fact that $\alpha< 1/(\beta M)$. 
    Therefore,
    \begin{align*}
        \frac{Z_{\alpha, \beta}(\theta)}{Z_{\alpha, \beta}(\theta')}\ge \frac{\exp\left\{\beta \left(-U(\theta)+U(\theta')\right)\right\}}{\exp\left\{\frac{1}{2}\left(\frac{1}{ \alpha}- \beta m\right)diam(\Theta)^2\right\}}
    \end{align*}
    Also note that 
    \begin{align*}
        Q_{\alpha,\beta}(\theta' \vert \theta)&=\frac{\exp\left\{\beta \left(-U(\theta)+U(\theta')\right)-\left(\theta-\theta'\right)^{T} \left(\frac{1}{2\alpha}I + \frac{\beta}{2}\int_{0}^{1} \nabla^2 U((1-s)\theta+s\theta')\right)\left(\theta-\theta'\right)\right\}}{\sum_{\theta'\in \Theta}\exp\left\{\beta \left(-U(\theta)+U(\theta')\right)-\left(\theta-\theta'\right)^{T} \left(\frac{1}{2\alpha}I + \frac{\beta}{2}\int_{0}^{1} \nabla^2 U((1-s)\theta+s\theta')\right)\left(\theta-\theta'\right)\right\}}\\
        &\ge \frac{\exp\left\{\beta \left\langle\nabla U(\theta),\theta'-\theta\right\rangle-\frac{1}{2\alpha}\|\theta-\theta'\|^2\right\}}{\sum_{\theta'\Theta}\exp\left\{\beta \left(-U(\theta)+U(\theta')\right)\right\}}.
    \end{align*}
    We also note that
    \begin{align*}
        - \beta \left\langle\nabla U(\theta),\theta'-\theta\right\rangle+\frac{1}{2\alpha}\|\theta-\theta'\|^2
        & = \beta \left\langle - \nabla U(\theta) + \nabla U(a),\theta'-\theta\right\rangle+ \beta \left\langle - \nabla U(a),\theta'-\theta\right\rangle + \frac{1}{2\alpha}\|\theta-\theta'\|^2\\
        & \le \beta \left\langle - \nabla U(\theta) + \nabla U(a),\theta'-\theta\right\rangle+ \beta \left\langle - \nabla U(a),\theta'-\theta\right\rangle + \frac{1}{2\alpha}diam(\Theta)^2 \\
        & \le \beta \left\| - \nabla U(\theta) + \nabla U(a)\| \| \theta'-\theta\right\|+ \beta \left\|\nabla U(a) \| \| \theta'-\theta\right\| + \frac{1}{2\alpha}diam(\Theta)^2 \\
        & \le \beta \| - \nabla U(\theta) + \nabla U(a)\|   diam(\Theta) + \beta \|\nabla U(a) \| diam(\Theta) + \frac{1}{2\alpha}diam(\Theta)^2 \\
        & \le \left(\beta M+\frac{1}{2\alpha}\right)\, diam(\Theta)^2+ \beta \|\nabla U(a)\|\, diam(\Theta).
    \end{align*}
    Combining, we get
    \begin{align*}
        p(\theta' \vert \theta) \ge {\epsilon_{\beta , \alpha}} \, \frac{\exp\left\{\beta U(\theta')\right\}}{\sum_{\theta'\Theta}\exp\left\{\beta U(\theta')\right\}}
    \end{align*}
    where 
    \[{\epsilon_{\beta, \alpha}}=\exp\left\{-\left(\frac{1}{\alpha}+\beta{M}-\frac{\beta \, m}{2}\right)\, diam(\Theta)^2-\|\nabla U(a)\|\, diam(\Theta)\right\}.\]
\end{proof}

\subsection{Proofs of Proposition \ref{prop:dulaprop} and Corollary~\ref{coro:prop:dulaprop}}

\begin{proposition}\label{prop:dulaprop}
     Let $ P_1, P_2, \cdots P_s $ be Markov transition operators with kernels $ p_1, p_2, \cdots p_s$ with respect to a reference measure $\eta$. Also, let $p_i(\theta' \vert \theta) \ge \epsilon_i \nu_i (\theta')$  for some density $\nu_i$ on $\Theta$ and $\epsilon_i>0$ with respect to a reference measure $\eta$. 
     Then, for the Markov operator $\hat{P_i}$ defined with respect to the kernel as 
     \begin{align*}
     \hat{p_i}(\theta' \vert \theta)  = \int_{\Theta^{S-1}} 
      &  p_{i+1}(\theta_1 \vert \theta)           p_{i+2}(\theta_2 \vert \theta_1 ) \cdots p_{s}(\theta_{s-i+1} \vert \theta_{s-i}) \\
     & \cdots p_{i} (\theta' \vert \theta_{s-1}) 
      d\eta(\theta_1) d\eta(\theta_2) \cdots d\eta(\theta_{s-1}),
     \end{align*}
     we have
     \begin{align*}
     & \hat{p_{i}}(\theta' \vert \theta) \ge \epsilon_i \nu_i(\theta')  ,  \forall \theta \in \Theta \cdot 
     \end{align*}
\end{proposition}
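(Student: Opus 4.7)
\medskip

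\noindent\textbf{Proof plan for Proposition~\ref{prop:dulaprop}.}

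The plan is to exploit the fact that in the cyclic composition defining $\hat{p}_i$, the very last kernel applied is $p_i$ itself — the operator whose minorization we wish to transfer to $\hat{p}_i$. Since the hypothesized lower bound $p_i(\theta' \mid \theta_{s-1}) \ge \epsilon_i \nu_i(\theta')$ holds uniformly in $\theta_{s-1}$, we can simply factor $\epsilon_i \nu_i(\theta')$ out of the integral and then argue that the remaining integrand, a product of the other $s-1$ transition kernels, integrates to one over the intermediate variables.

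Concretely, I would proceed as follows. First, substitute the bound $p_i(\theta'\mid \theta_{s-1}) \ge \epsilon_i \nu_i(\theta')$ into the integral expression for $\hat{p}_i(\theta'\mid\theta)$. Because $\epsilon_i \nu_i(\theta')$ is independent of the dummy variables $\theta_1,\dots,\theta_{s-1}$, it pulls outside the integral, leaving
\[
\hat{p}_i(\theta'\mid\theta) \;\ge\; \epsilon_i\,\nu_i(\theta')\int_{\Theta^{s-1}} p_{i+1}(\theta_1\mid\theta)\,p_{i+2}(\theta_2\mid\theta_1)\cdots p_{i-1}(\theta_{s-1}\mid\theta_{s-2})\,d\eta(\theta_1)\cdots d\eta(\theta_{s-1}).
\]
Then, I would evaluate the remaining integral by iterated integration from the innermost variable outward: integrating $p_{i-1}(\theta_{s-1}\mid \theta_{s-2})$ against $d\eta(\theta_{s-1})$ gives $1$ (since each $p_j$ is a probability kernel with respect to $\eta$); integrating the resulting $p_{i-2}(\theta_{s-2}\mid\theta_{s-3})$ against $d\eta(\theta_{s-2})$ again gives $1$; proceeding recursively, every integration collapses to unity, yielding the claim $\hat{p}_i(\theta'\mid\theta)\ge \epsilon_i\,\nu_i(\theta')$.

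The argument is essentially bookkeeping; the only real care needed is with the cyclic indexing, making sure that the kernel associated with the minorization we are invoking (namely $p_i$) appears as the last factor in the composition defining $\hat{p}_i$, which is indeed the case by construction. I do not expect a genuine obstacle — one could even phrase this more compactly as a Markov-operator identity $(\hat{P}_i f)(\theta) = (P_{i+1}\cdots P_s P_1 \cdots P_{i-1} P_i f)(\theta)$ and note that uniform minorization of $P_i$ is preserved under left-composition with stochastic operators. Finally, combining this proposition with Lemma~\ref{lemma:beta:maj} applied to the step $(\alpha_s,\beta_s)$ gives the minorization used in Theorem~\ref{thm:beta-ss-erg}, from which uniform ergodicity and the CLT follow via the cited result of~\cite{jones2004markov}.
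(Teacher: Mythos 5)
Your argument is correct and is essentially identical to the paper's proof: both bound the final factor $p_i(\theta'\mid\theta_{s-1})\ge \epsilon_i\nu_i(\theta')$ uniformly in $\theta_{s-1}$, pull $\epsilon_i\nu_i(\theta')$ out of the integral, and observe that the remaining product of transition kernels integrates to one over the intermediate variables. No gaps; your remark about checking the cyclic indexing so that $p_i$ is the last factor is exactly the one point of care the paper's proof also relies on.
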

\begin{proof} 
    The proof is straightforward by using the minorization of $p_i$. Indeed, one has 
    \begin{align*}
            \hat{p}(\theta' \vert \theta) & =
            \int_{\Theta^{S-1}} p_{i+1}(\theta_1 \vert \theta)           p_{i+2}(\theta_2 \vert \theta_1 ) \cdots p_{s}(\theta_{s-i+1} \vert \theta_{s-i}) \cdots p_{i} (\theta' \vert \theta_{s-1}) d\eta(\theta_1) d\eta(\theta_2) \cdots d\eta(\theta_{s-1}) \\
            & \ge \epsilon_i \nu_i (\theta') \, 
            \int_{\Theta^{S-1}} p_{i+1}(\theta_1 \vert \theta)           p_{i+2}(\theta_2  \vert \theta_1) \cdots p_{s}(\theta_{s-i+1} \vert \theta_{s-i}) \cdots
            p_{i-1} (\theta_{s-1} \vert \theta_{s-2}) \, 
            d\eta(\theta_1) \cdots d\eta(\theta_{s-1}) \\
            & \ge
            \epsilon_i \nu_i (\theta')
    \end{align*}
    which establishes the result.
\end{proof}

Note that in Algorithm~\ref{cdlp_sample_alg}, for each cycle, we go through s steps corresponding to the step size and balancing parameter schedules $ \left( \{ \alpha_1, \alpha_2,  \cdots \alpha_s \} \right)  $ and $ \left( \{ \beta_1, \beta_2,  \cdots \beta_s \} \right)  $. Let $P_1, P_2,\cdots, P_s$ be the Markov operators corresponding to them.

\begin{corollary}\label{coro:prop:dulaprop}
Let Assumptions~\ref{assm:g:Lipschitz} and~\ref{assm:Hessian} hold. Then 
\[P_1 P_2 P_3 \cdots P_s(\theta,A) \ge \epsilon_s \nu_s(A)\]  
for any measurable subset $A$ of $\Theta$.
\end{corollary}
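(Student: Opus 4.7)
The plan is to chain together the single-step minorization of Lemma \ref{lemma:beta:maj} with the composition result of Proposition \ref{prop:dulaprop}, so essentially no new analytic work is needed and the proof reduces to bookkeeping on the indices. The key observation is that Proposition \ref{prop:dulaprop}, when applied with cyclic indexing, already covers exactly the composition $P_1 P_2 \cdots P_s$ that appears in the corollary, and its conclusion only ``sees'' the last kernel in the chain. So I will just identify which $\epsilon_i, \nu_i$ pair is the relevant one and lift the minorization from singletons to arbitrary measurable sets.

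First, I would invoke Lemma \ref{lemma:beta:maj} separately for each $i = 1, 2, \ldots, s$. Because Assumptions \ref{assm:g:Lipschitz} and \ref{assm:Hessian} hold and each pair $(\alpha_i, \beta_i)$ satisfies $\alpha_i < 1/(\beta_i M)$, the lemma produces a uniform minorization for every transition kernel in the cycle,
\begin{equation*}
p_i(\theta' \mid \theta) \;\ge\; \epsilon_{\beta_i,\alpha_i}\, \nu_i(\theta'), \qquad \nu_i(\theta') \;=\; \frac{\exp\{\beta_i U(\theta')\}}{\sum_{x \in \Theta} \exp\{\beta_i U(x)\}},
\end{equation*}
valid for all $\theta,\theta' \in \Theta$. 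In particular the pair $(\epsilon_s, \nu_s) = (\epsilon_{\beta_s,\alpha_s}, \nu_s)$ is well-defined.

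Next, I would apply Proposition \ref{prop:dulaprop} with $i = s$. Using the modular convention $p_{s+1} = p_1, p_{s+2} = p_2, \ldots$, the composite kernel $\hat{p}_s$ constructed in that proposition starts with $p_{s+1} = p_1$, proceeds through $p_2, p_3, \ldots, p_{s-1}$, and ends with $p_s$. In other words, $\hat{p}_s$ is exactly the kernel of the composed Markov operator $P_1 P_2 \cdots P_s$. The proposition then delivers the pointwise bound
\begin{equation*}
(P_1 P_2 \cdots P_s)(\theta, \{\theta'\}) \;\ge\; \epsilon_s\, \nu_s(\theta') \qquad \text{for every } \theta, \theta' \in \Theta,
\end{equation*}
and summing (or integrating with respect to the reference measure $\eta$) over any measurable $A \subset \Theta$ gives $(P_1 P_2 \cdots P_s)(\theta, A) \ge \epsilon_s\, \nu_s(A)$, which is the claim.

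The only potentially tricky point is matching the cyclic indexing in Proposition \ref{prop:dulaprop} to the linear order $P_1 P_2 \cdots P_s$ that appears in the corollary, i.e.\ noticing that the proposition's last kernel in the product is $p_i$ itself, so choosing $i = s$ is what makes the minorization constant $\epsilon_s$ and the dominating density $\nu_s$ come out correctly. Beyond that, there is no genuine obstacle: the heavy lifting (establishing the per-step minorization with an explicit constant) was already done in Lemma \ref{lemma:beta:maj}, and the composition argument is a straightforward integration as in the proof of Proposition \ref{prop:dulaprop}.
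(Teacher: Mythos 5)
Your proof is correct and follows essentially the same route as the paper: the paper's proof is simply ``immediate from Proposition~\ref{prop:dulaprop},'' with the per-step minorization supplied by Lemma~\ref{lemma:beta:maj}, and you have just made explicit the choice $i=s$ in the cyclic indexing and the trivial summation over $A$. No gaps.
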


\begin{proof}
    The proof is immediate from Proposition~\ref{prop:dulaprop}.
\end{proof}

\subsection{Additional Lemma}
\begin{lemma}\label{lemma:strng:cncvxity}
    Let Assumption~\ref{assm:Hessian} hold with $\Theta$ compact. Then, there exists some $m>0$ such that for any $\theta \in conv(\Theta)$, $\lambda_{\min}(\nabla^2 -U(\theta))>m$.
\end{lemma}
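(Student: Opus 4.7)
\bigskip
\noindent\textbf{Proof proposal for Lemma~\ref{lemma:strng:cncvxity}.} The plan is a standard compactness/finite-subcover argument. The key observation is that although Assumption~\ref{assm:Hessian} only guarantees strong concavity in a neighborhood of each point with a radius $r_\theta$ and constant $m_\theta$ that may both depend on $\theta$, compactness of $conv(\Theta)$ lets us pass from this pointwise data to a single uniform lower bound on $\lambda_{\min}(-\nabla^2 U(\cdot))$.

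First I would argue that $conv(\Theta)$ is compact: $\Theta$ is assumed compact in $\mathbb{R}^d$, and the convex hull of a compact set in $\mathbb{R}^d$ is compact (a standard consequence of Carath\'eodory's theorem, since $conv(\Theta)$ is the continuous image of the compact set $\Delta_{d}\times \Theta^{d+1}$, where $\Delta_d$ is the probability simplex). Then, for each $\theta\in conv(\Theta)$, Assumption~\ref{assm:Hessian} gives an open ball $B(\theta,r_\theta)$ on which $U$ is $m_\theta$-strongly concave, i.e.\ $-\nabla^2 U(\theta')\succeq m_\theta I$ for every $\theta'\in B(\theta,r_\theta)$, which in particular implies $\lambda_{\min}(-\nabla^2 U(\theta'))\ge m_\theta>0$ on that ball.

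Next I would note that $\{B(\theta,r_\theta):\theta\in conv(\Theta)\}$ is an open cover of $conv(\Theta)$. By compactness, extract a finite subcover $B(\theta_1,r_{\theta_1}),\ldots,B(\theta_k,r_{\theta_k})$ with corresponding strong concavity constants $m_{\theta_1},\ldots,m_{\theta_k}>0$. Define
\[m \;=\; \tfrac{1}{2}\min_{1\le i \le k} m_{\theta_i} \;>\; 0.\]
For any $\theta\in conv(\Theta)$ there is some $i$ with $\theta\in B(\theta_i,r_{\theta_i})$, hence
\[\lambda_{\min}(-\nabla^2 U(\theta)) \;\ge\; m_{\theta_i} \;>\; m,\]
which is exactly the claim.

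The only mild subtlety, rather than a real obstacle, is ensuring the strict inequality stated in the lemma: the finite minimum of the $m_{\theta_i}$ is itself strictly positive, so halving it (or choosing any strictly smaller positive number) suffices. The rest of the argument is a direct application of compactness and is not tied in any essential way to the specific form of the sampler $P$ or the kernel $Q_{\alpha,\beta}$; it serves purely to convert the local strong concavity in Assumption~\ref{assm:Hessian} into the uniform strong concavity on $conv(\Theta)$ that was used in the proof of Lemma~\ref{lemma:beta:maj}.
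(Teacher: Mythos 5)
Your proof is correct and follows essentially the same route as the paper: establish compactness of $conv(\Theta)$, use the balls from Assumption~\ref{assm:Hessian} as an open cover, extract a finite subcover, and take the minimum of the finitely many constants $m_{\theta_i}$. Two minor points in your favor: invoking Carath\'eodory for compactness of the convex hull is cleaner than the paper's direct argument (which only considers two-point combinations and is loose on closedness), and halving the minimum to secure the strict inequality $\lambda_{\min}(-\nabla^2 U(\theta))>m$ addresses a detail the paper's proof glosses over.
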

\begin{proof}
    Note that since $\Theta$ is compact $conv(\Theta)$ is also compact. This is easy to see as we only need to establish that $conv(\Theta)$ is closed and bounded by the Heine-Borel Theorem. Take any element in $\theta\in conv(\Theta)$. By definition, $\theta=\alpha \theta_1+(1-\alpha) \theta_2$ for some $\theta_1,\theta_2 \in \Theta$ and $0\le \alpha \le 1$. Since $\Theta$ is compact, we know that there exists $M>0$ such that $\|\theta_i\|<M$ for $i=1,2$. Therefore $\|\theta\|<M$ by triangle inequality. Thus the set is bounded. The fact that it is closed is also easy to see. Take any sequence $x_n$ in $conv(\Theta)$. This implies there exists $\alpha_n, \theta_{1,n}, \theta_{2,n}$ such that $x_n=\alpha_n\, \theta_{1,n}+(1-\alpha_n)\theta_{2,n}$. Since $x_n$ converges as our assumption, it is  Cauchy which in turn implies each of $\alpha_n, \theta_{1,n}, \theta_{2,n}$ is Cauchy as $\Theta$ is bounded. Thus the proof immediately follows. Now, consider each $\theta \in conv(\Theta)$. There exits a $B(\theta,r_{\theta})$ such that $\nabla^2 -U(\theta') \ge m_{\theta} I$ for all $\theta' \in B(\theta,r_{\theta})$. Since $conv(\Theta)\subset \cup_{\theta\in \Theta} B(\theta,r_{\theta})$, this is an open cover of $conv(\Theta)$. Since $conv(\Theta)$ is compact, there exists $\theta_1,\theta_2,\cdots,\theta_k$ such that $conv(\Theta)\subset \cup_{i=1}^{k} B(\theta_i,r_{\theta_i})$. Thus for each $i$ we have $\nabla^2 -U(\theta')\ge m_{\theta_i} I$ when $\theta \in B(\theta_i,r_{\theta_i})$. Thus $\nabla^2 -U(\theta)\ge \min_{1\le i\le k} m_{\theta_i} I$ for all $\theta \in conv(\Theta)$. Hence we are done.
\end{proof}

\section{Additional Experimental Results and Details}\label{app:sec:experiment}
Here, we include the full details for all the experiments we include in this paper, as well as some additional results. All experiments were run on a single RTX A6000. 

\subsection{Multi-modal Experiment Design}
\label{appndx:sec:exp:mm-sample}
\paragraph{Synthetic Distribution} In order to construct a distribution that is easy to visualize, we first must define a few experiment parameters. We must define the space between the modes, the total number of modes, and the variance of each mode $\sigma$. For convenience, we have the number of modes as 25, which is a perfect square. We define the space between modes as 75, and the variance for each mode $\sigma^2$ as .15. Given this, we can calculate the maximum value for each coordinate as follows: 
\begin{align*}
    \text{MaxVal} &= (\sqrt{\text{NumModes}} + 1) * \text{SpaceBetweenModes}
\end{align*}

We can calculate the coordinate value for each mode $\mu_{i, j}$ as follows: 

\begin{align*}
    \mu_{i, j}[0] &= \frac{\text{MaxVal}}{\sqrt{\text{NumModes}} + 2} (i + 1) \\
    \mu_{i, j}[1] &= \frac{\text{MaxVal}}{\sqrt{\text{NumModes}} + 2} (j + 1)
\end{align*}

\paragraph{Sampler Configuration} Our goal in this experiment is to demonstrate how gradient-based samplers typically behave when faced with a distribution with modes that are far apart. In order for this experiment to be meaningful, it is important that the representation of each sample respect the notion of distance between the integer values. For this reason, we cannot use a categorical distribution or represent each coordinate with a one-hot encoding, as every sample in this representation would be within a 2-hamming ball of every other point. 

\begin{wrapfigure}[23]{l}{0.5\textwidth} 
    \centering
    \vspace{-10pt} 
    \def\dotwidth{0.48\textwidth} 
    \includegraphics[width=\dotwidth]{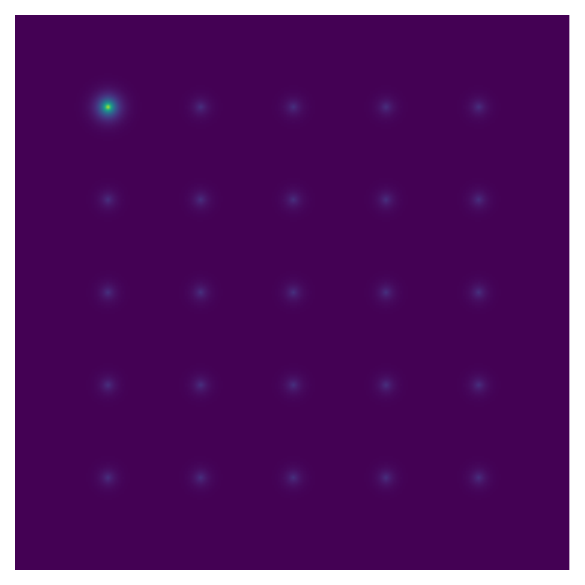}
    \caption{Uneven multi-modal target distribution. While the top-left mode does have the most mass, only sampling from this mode will result in an inaccurate representation of the target distribution.}
    \label{fig:slightly-multimodal}
\end{wrapfigure}

In order to determine the step sizes for the baselines, we tune each until we reach an acceptance rate around $.574$. For DMALA, this ends up being around $\alpha = 53$. For the any-scale sampler, we set the initial step size to be the same and use their implemented adaptive algorithm. 

For the cyclical sampler, we set $\alpha_{\max} = 1575$, $\alpha_{\min} = 3$, and steps per cycle $s=20$. Because the goal of the experiment is to demonstrate the need for larger step sizes along with smaller step sizes, we do not use the automatic tuning algorithm on this example as restricting the space to be ordinal changes the optimal setting for $\alpha_{\text{ceil}}$. In most practical cases, the samples would be represented by a categorical or binary form, which the proposed tuning algorithm is able to handle as demonstrated by the performance on real data distributions. 

\paragraph{Uneven Multi-modal Distributions}
Not only does a cyclical step-size enable more accurate sampling in highly multi-modal distributions, but it is also able to handle distributions where the modes are weighted unevenly. 
This problem is more difficult since this requires not only exploring all the modes of a distribution, but ensuring that the less likely modes are not over represented in the generated samples. We provide a visual comparison between the target distribution, the estimated distribution from DMALA, and the estimated distribution from ACS in Figure \ref{fig:slightly-multimodal}. 

\begin{wraptable}[9]{R}{.6\textwidth}
\centering
\caption{Quantitative comparison of sampler performance on uneven multi-modal distribution. ACS retains the ability to accurately capture all the modes within the distribution despite the uneven weighting.}
\resizebox{.55\textwidth}{!}{
\begin{tabular}{lccl}\toprule
      & DMALA & ACS \\\midrule
KL Divergence &  $0.70$ & $0.13$ \\
Average Energy & $-2.66 \pm 1.68$ & $-3.39\pm 1.63$ 
\\\bottomrule
\end{tabular}}
\label{table:uneven-dist-metrics}
\end{wraptable}

Since the modes may not be clear due to the nature of this specific problem, we also include a quantitative comparison between DMALA and ACS in Table \ref{table:uneven-dist-metrics} by computing the KL divergence between the estimated distribution and the target distribution in addition to the average energy of the generated samples. Through both Table \ref{table:uneven-dist-metrics} and Figure \ref{fig:slightly-multimodal}, we observe that a cyclical step-size enables accurate sampling from uneven multi-modal distributions. 

Furthermore, it is interesting to observe that generating more high-probability samples does not necessarily correspond to accurate sampling, highlighting the difference in goals between generating very likely samples and accurately sampling the target distribution.  

\subsection{RBM Sampling}
\label{appndx:exp:rbm_sample}
\paragraph{RBM Overview} We will give a brief overview of the Block-Gibbs sampler used to represent the ground truth of the RBM distribution. For a more in-depth explanation, see \citet{grathwohl2021gwg}. Given the hidden units $h$ and the sample $x$, we define the RBM distribution as follows: 

\begin{align}
\log p(x, h) = h^T W x + b^T x + c^T - \log Z
\label{eq:rbm_energy}
\end{align}

As before, Z is the normalizing constant for the distribution. The sample $x$ is represented by the visible layer with units corresponding to the sample space dimension and $h$ represents the model capacity. It can be shown that the marginal distributions are as follows: 
\begin{align*}
p(x | h) = \text{Bernoulli} (Wx + c) \\ 
p(h | x) = \text{Bernoulli} (W^t h + b)
\end{align*}

The Block-Gibbs sampler updates $x$ and $h$ alternatively, allowing for many of the coordinates to get changed at the same time, due to utilizing the specific structure of the RBM model. 

\paragraph{Experiment Setup} Similar to the experimental setup of \citet{zhang2022langevin}, we use RBM models with 500 hidden units and 784 visible units. We adopt the same training protocol, except we train the RBM with 100 steps of Contrastive Divergence as opposed to 10. We also train the models for 1000 iterations as opposed to a single pass through the dataset. We find that this enables the RBMs to generate more realistic samples. We include the generated images in Figure \ref{img:gt_generated_rbm} to demonstrate that these models have learned the dataset reasonably well. 

\begin{figure}[h]
 \def\rbmgenwidth{0.18\textwidth} 
\centering
\subfloat[MNIST]{\includegraphics[width=\rbmgenwidth]{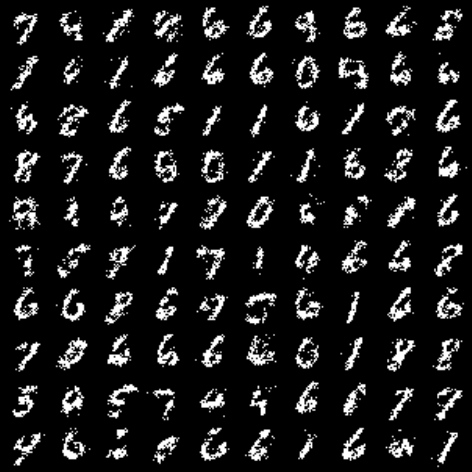}}\hspace*{0pt}
\subfloat[eMNIST]{\includegraphics[width=\rbmgenwidth]{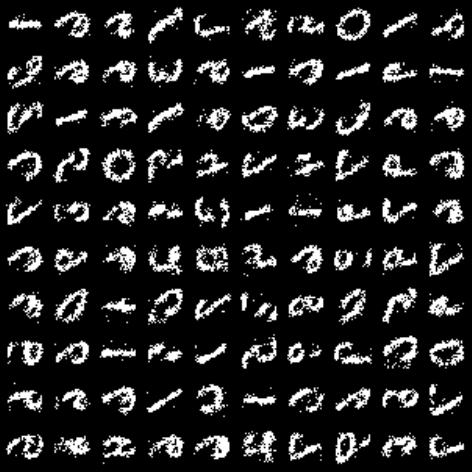}}\hspace*{0pt}
\subfloat[kMNIST]{\includegraphics[width=\rbmgenwidth]{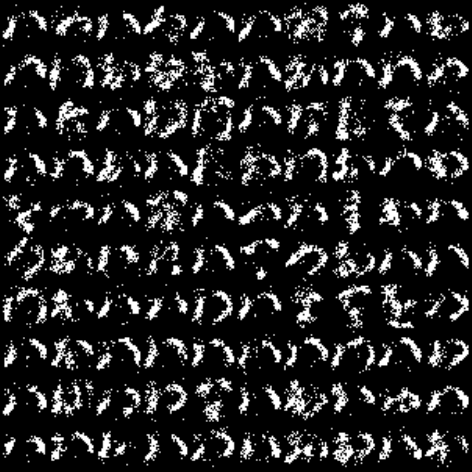}}\hspace*{0pt}
\subfloat[Omniglot]{\includegraphics[width=\rbmgenwidth]{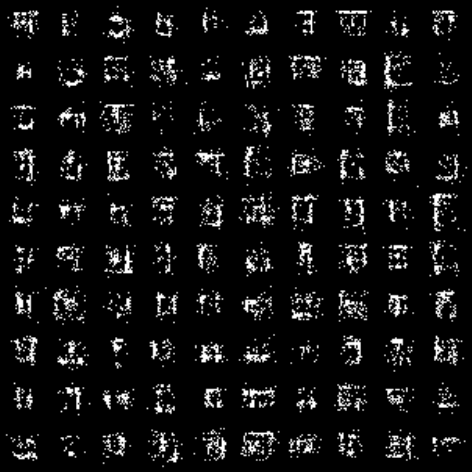}} \hspace*{0pt}
\subfloat[Caltech]{\includegraphics[width=\rbmgenwidth]{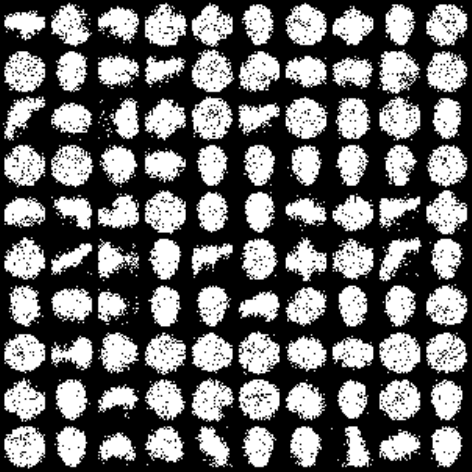}}
\caption{Images sampled from RBMs trained by Contrastive-Divergence with Block Gibbs. We use Block Gibbs as the sampling algorithm to produce these images as well. }
\label{img:gt_generated_rbm}

\end{figure}

\paragraph{Sampler Configuration} For GWG, we use the same settings as \citet{grathwohl2021gwg}, for DMALA, we set step size to .2, and for AB we use the default hyper-parameters for the first order sampler. 

For ACS, we use $\rho^* = .5, \beta_{\text{max}} = .95, \zeta = .5$, cycle length $s = 20$ for all the datasets. We also fix the total overhead of the tuning algorithm to 10\% of the total sampling steps. 

\paragraph{Escape from Local Modes}  In addition to using the same initialization as \citet{zhang2022langevin, grathwohl2021gwg}, we extend the experiment to measure the ability of a sampler to escape from local modes. We initialize the sampler within the most likely mode, as measured by unnormalized energy of the RBM. Samplers that are less prone to getting trapped in local modes will be able to converge quicker to the ground truth, as measured by log MMD.  We include the performance of the various samplers across 11 random seeds in \ref{fig:mode_escape}. ACS demonstrates superior robustness to mode-specific initialization due to its capability to escape from local modes.

\begin{figure}
    \centering
    \includegraphics[width=.85\textwidth]{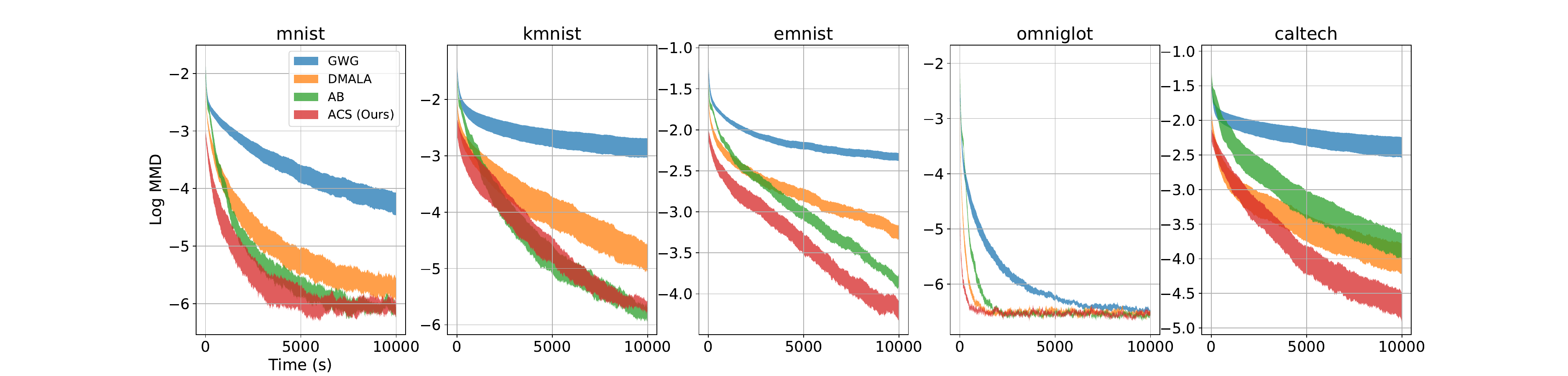}
    \caption{Log MMDs v.s sampling iteration across various datasets. ACS demonstrates more robust sampling behavior across the datasets than other methods, as evidenced by superior convergence on all datasets except KMNIST. We do note that ACS performance is still competitive on KMNIST with the added benefit of a smaller standard error.}
    \label{fig:mode_escape}
\end{figure}

\paragraph{Generated Images} We found that a visual inspection of the generated images demonstrates the ability of ACS to escape local modes. We include the generated images in Figure \ref{img:mode_escape_sampled}. 

\begin{figure}[h]
\centering
\subfloat[GWG]{\includegraphics[width=3cm]{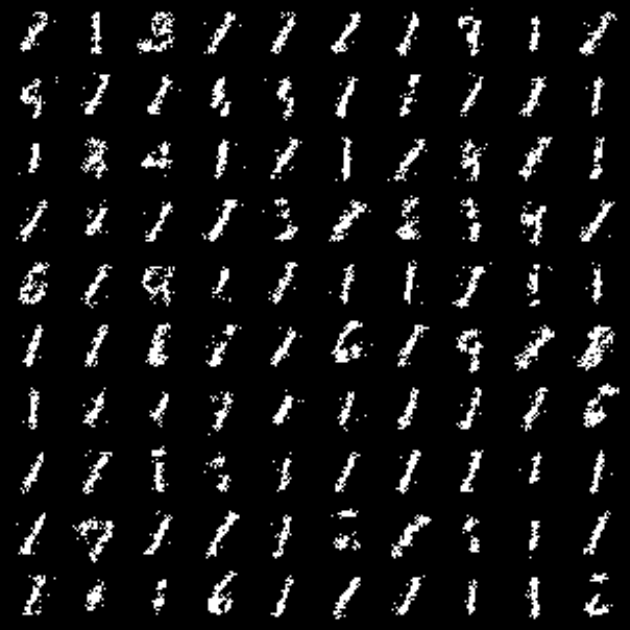}}\hspace*{0pt}
\subfloat[AB]{\includegraphics[width=3cm]{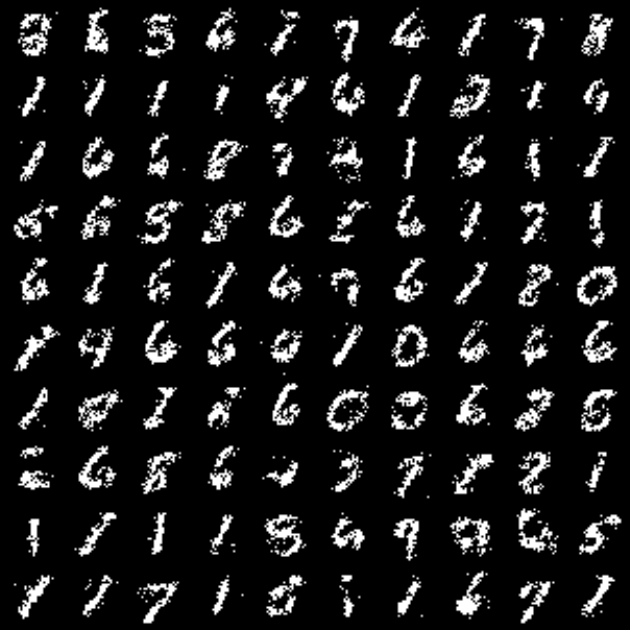}}\hspace*{0pt}
\subfloat[DMALA]{\includegraphics[width=3cm]{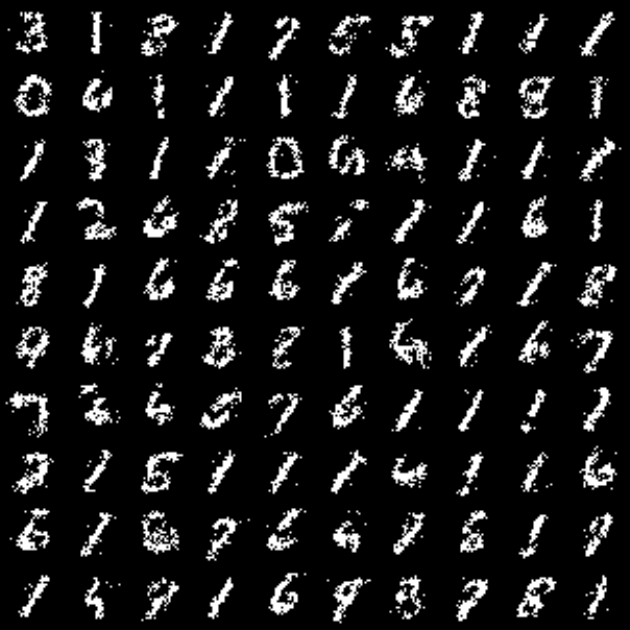}}\hspace*{0pt}
\subfloat[ACS]{\includegraphics[width=3cm]{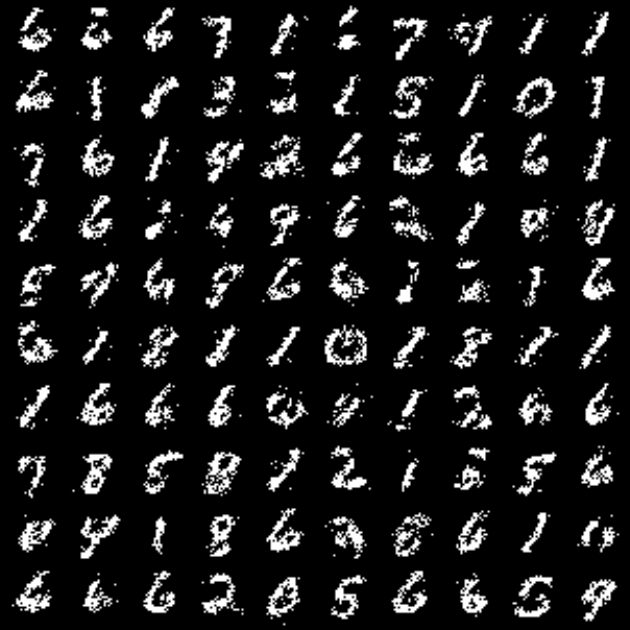}} 
\caption{Images sampled from RBM trained on MNIST when the sampler is initialized to most likely mode. ACS is able to generate a diverse range of digits, demonstrating its ability to escape from modes. It should also noted that while AB is able to generate a diverse range of digits as well, the images are slightly less clear than those generated by ACS.}
\label{img:mode_escape_sampled}
\end{figure}

We can make two primary inferences from the generated images: the first being that ACS is able to escape from local modes and explore the distribution as a whole, as demonstrated by the wide range of generated images; and that ACS does not compromise on the ability to characterize each mode as evidenced by the quality of generated samples. 

\paragraph{Sampling Speed} While the run time can vary depending on the specific implementation of a given sampling algorithm, we illustrate the efficiency of ACS in Figure \ref{fig:time_comp_rbm}. ACS is able to outperform DMALA in terms of convergence with respect to time, even including the overhead of the tuning algorithm. 

\begin{wrapfigure}[14]{r}{0.5\textwidth}  
\centering
\vspace{-1.5cm}
\resizebox{.45\textwidth}{!}{
\includegraphics[width=10cm]{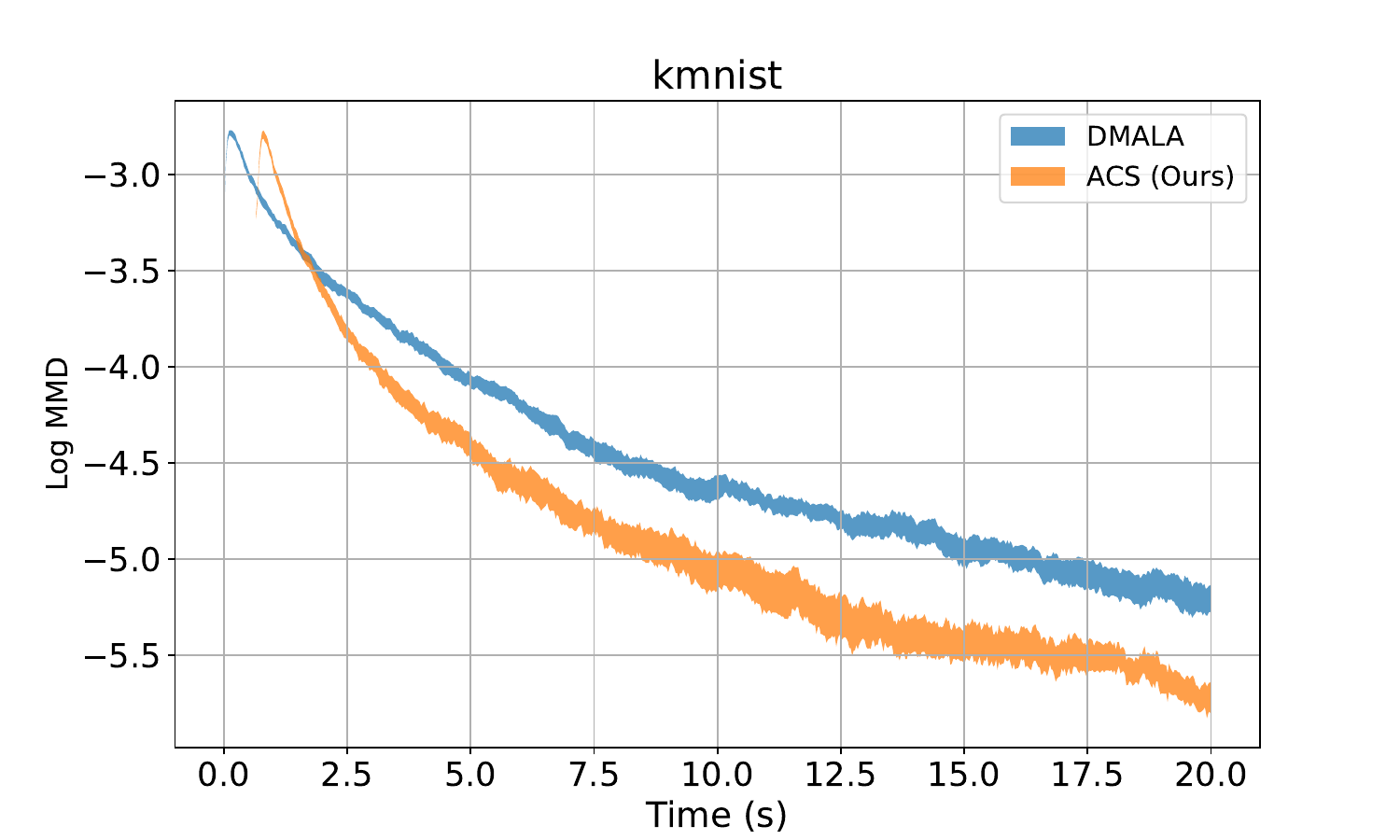}}
\caption{Log MMD for the RBM sampling task across time for both DMALA and ACS for the kMNIST dataset. We observe that while ACS is offset slightly in the beginning due to the initial tuning algorithm, it is quickly able to demonstrate superior convergence.}
\label{fig:time_comp_rbm}
\end{wrapfigure}
\subsection{EBM Sampling}
\label{appndx:sec:exp:ebm_sample}

\paragraph{Base EBM Training} In order to train the EBMs, we use Gibbs-with-Gradient to sample the EBM distribution during PCD, following the same training protocol as \citet{grathwohl2021gwg}. We train these models for 50,000 iterations total with 40 sampling steps per iteration and use the parameters corresponding to the best log likelihood scores on the validation dataset. 

\paragraph{Experimental Design} For each of the trained models, we evaluate the samplers based on how quickly the average energy of the generated samples rises. This gives an estimate of the speed at which a sampler is able to reach a stationary distribution.

\paragraph{Sampler Configuration} For GWG, we use the same settings as \citet{grathwohl2021gwg}, for DMALA, we set step size to .2, and for AB we use the default hyper-parameters for the diagonal variant of the AB sampler. We choose this variant as this is what they evaluate for their experiments when measuring mixing speed of samplers on EBMs. 

For ACS, we use $\rho^* = .5, \beta_{\text{max}} = .8, \zeta = .5$, cycle length $s = 20$ for all the datasets. As in RBM Sampling, we fix the total overhead of the tuning algorithm to 10\% of the total sampling steps. 

\paragraph{Sampler Performance} It is worth commenting on the similarity in performance between ACS and DMALA when sampling from Caltech. We find that when sampling from the EBM trained on the Caltech dataset, ACS finds a $\alpha_{\text{max}}$ similar to $\alpha_{\text{min}}$, thus making the ACS sampler similar to DMALA for this specific case.
We hypothesize that small step sizes are most effective for this dataset. 
The results in Figure \ref{fig:sampling_comp} demonstrate that ACS can handle such cases automatically: while the step size for DMALA must be hand-tuned, the ACS method can automatically adapt to a suitable step size schedule.

\paragraph{Generated Images} We include the images generated by ACS when sampling from deep EBMs in Figure \ref{img:ebm_sample_gen_imgs}. 

\begin{figure}[h]
\centering
\subfloat[Static]{\includegraphics[width=3cm]{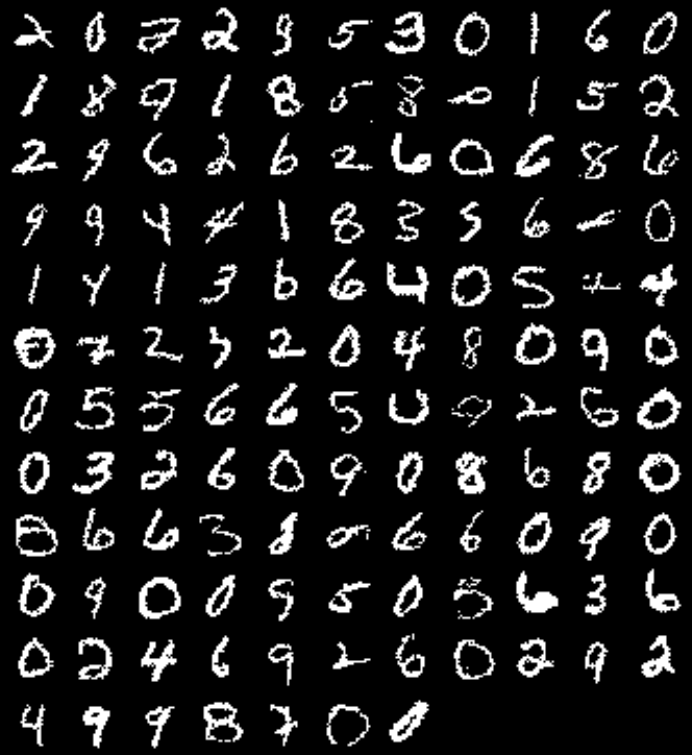}}\hspace*{0pt}
\subfloat[Dynamic]{\includegraphics[width=3cm]{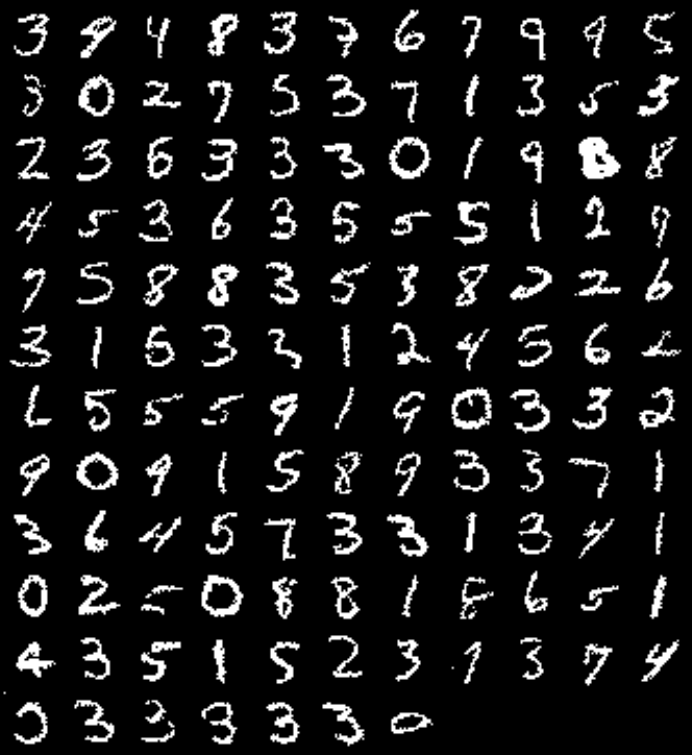}}\hspace*{0pt}
\subfloat[Omniglot]{\includegraphics[width=3cm]{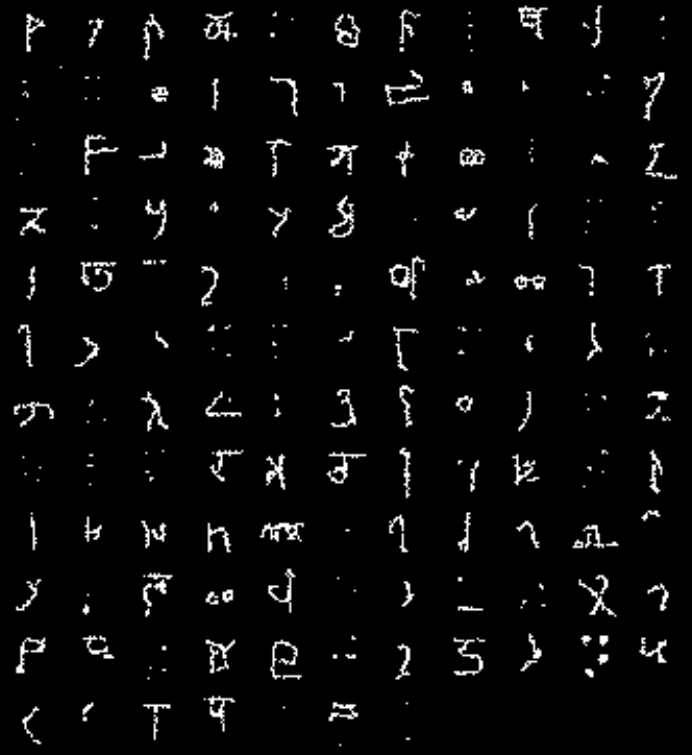}}\hspace*{0pt}
\subfloat[Caltech]{\includegraphics[width=3cm]{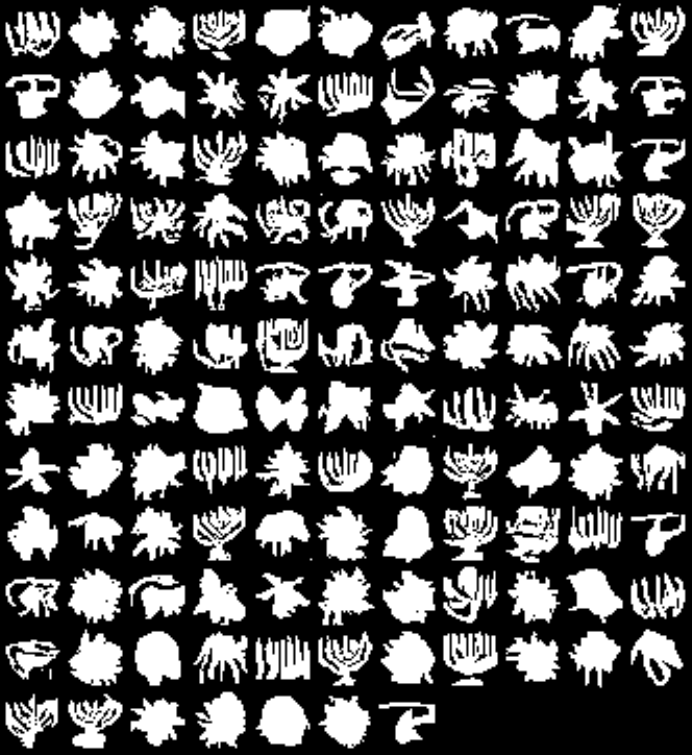}} \hspace*{0pt}
\caption{Generated Images from applying ACS sampling to deep EBMs trained with GWG. These samples capture multiple different modes while retaining good sample quality, demonstrating the benefit of our ACS method.}
\label{img:ebm_sample_gen_imgs}
\end{figure}

\subsection{RBM Learning}
\label{appndx:sec:exp:rbm_learn}

\paragraph{Experiment Design} We use the same RBM structure as the sampling task, with 500 hidden units and 784 visible units. However, we apply the samplers of interest to the PCD algorithm introduced by \citet{tieleman2008training}. The model parameters are tuned via the Adam optimizer with a learning rate of .001. 

In order to evaluate the learned RBMs, we run AIS with Block-Gibbs as the sampler to calculate the log likelihood values for the models \citet{neal2001annealed}. We run AIS for 100,000 steps, which is adequate given the efficiency of Block Gibbs for this specific model.

\paragraph{Sampler Configuration} For DMALA, we use a step size of .2. For the ACS algorithm, we set $\beta_{\max} = .9, \rho^* = .5$ for all the data-sets. We do modify the number of cycles for each data-set as different distributions require different amounts of exploration and exploitation. We use cycle length of 8 for MNIST, eMNIST, and kMNIST; we use 20 for Omniglot and Caltech silhouettes. This difference reflects the specific needs for each dataset in terms of exploration and exploitation -- more complex datasets tend to need longer cycles in order to better exploit each region, while simpler datasets tend to need shorter cycles in order to capture all the modes of the learned distribution. In Figure \ref{img:rbm_learn_img}, we show the samples generated from AIS for 100,000 steps as opposed to the persistent buffer as this forms a longer MCMC chain, thus giving a better visual of what the learned distribution represents.  

In order to ensure that the overhead for the tuning algorithm does not add to the overall computational cost, we spread out the computations of the EstimateAlphaMin algorithm throughout the training process. We keep a running list of $\alpha_{\min}$ and set $\alpha_{\text{floor}}$ to be one standard deviation below the mean of this list. By doing this, we start closer to what the ideal $\alpha_{\min}$. For EstimateAlphaMax, we simply call the tuning function every 50 cycles containing 50 training iterations, with $\alpha_{\text{ceil}} = 5$. As the initial step does not use the Metropolis-Hastings correction and has half the sampling steps, the budget for each call of EstimateAlphaMax can be seen as coming in part by the computation saved. 

\paragraph{Results}
We include the AIS results for RBMs trained with different sampling methods in Table \ref{table:rbm_learn}. We see that ACS achieves superior log likelihood results when compared to other sampling methods across all datasets. Furthermore, the AIS results are consistently close to those achieved by Block-Gibbs, which can be considered close to ideal for RBMs since it leverages the known structure of the model. 

\paragraph{Generated Images} We include the generated images from the RBMs trained using different samplers in Figure~\ref{img:rbm_learn_img}.

\begin{figure}[h]
\centering

\def\imgvspace{0.25cm}
\def\imghspace{.25cm}
\def\imghwidth{2.5cm}
\includegraphics[width=\imghwidth]{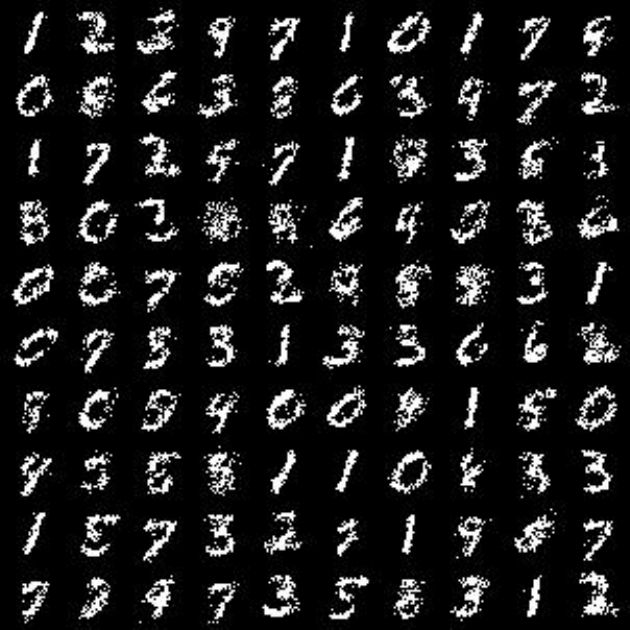}\hspace*{\imghspace}
\includegraphics[width=\imghwidth]{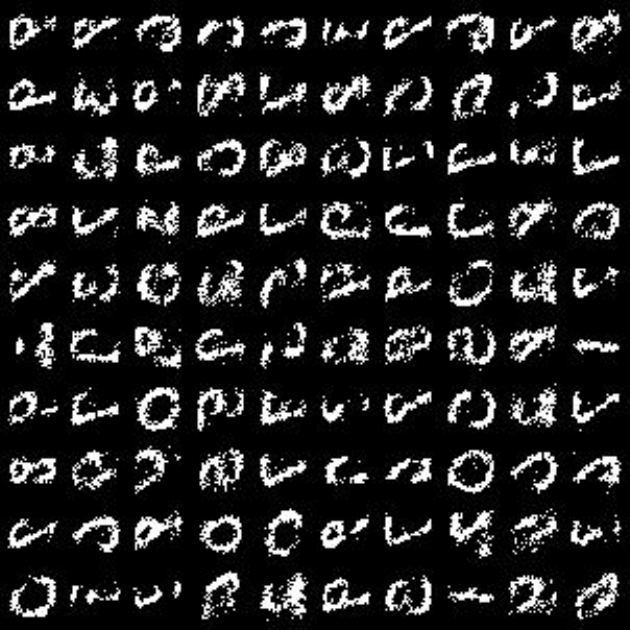}\hspace*{\imghspace}
\includegraphics[width=\imghwidth]{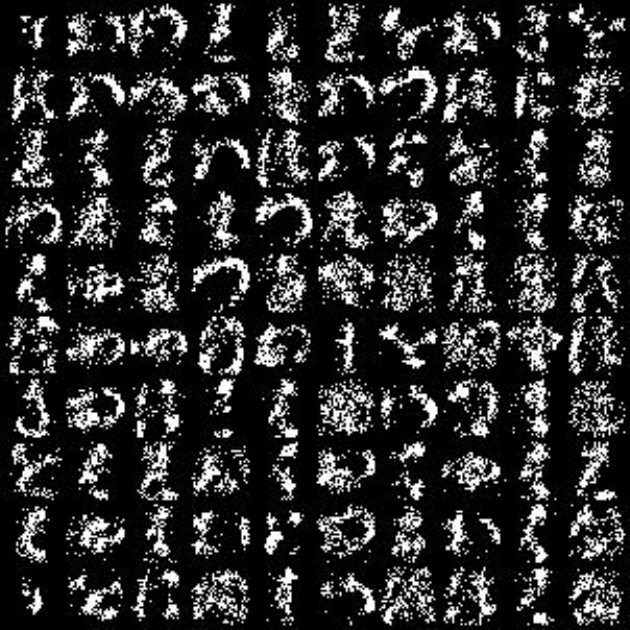}\hspace*{\imghspace}
\includegraphics[width=\imghwidth]{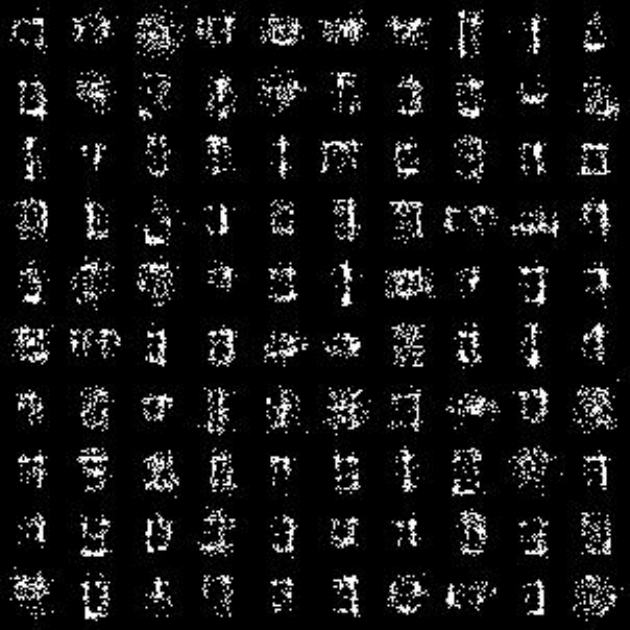} \hspace*{\imghspace}
\includegraphics[width=\imghwidth]{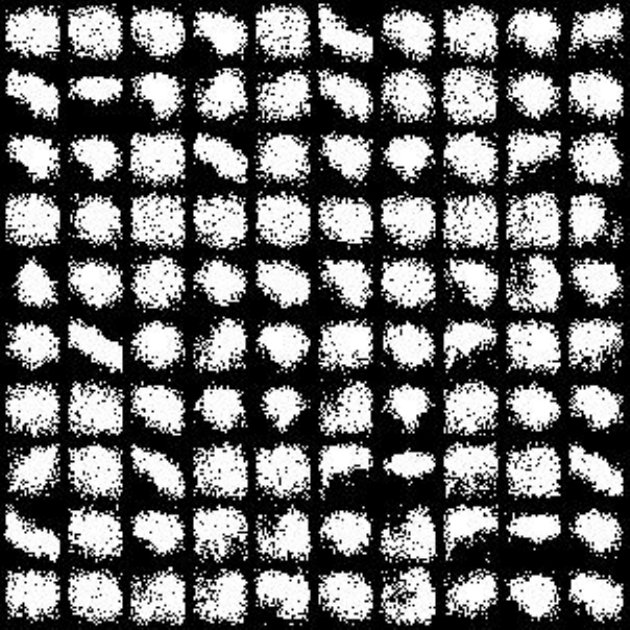} 

\vspace{\imgvspace}

\includegraphics[width=\imghwidth]{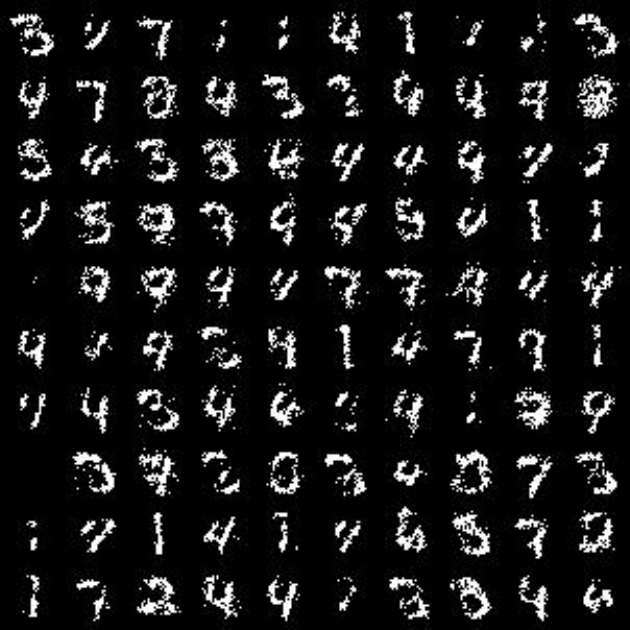}\hspace*{\imghspace}
\includegraphics[width=\imghwidth]{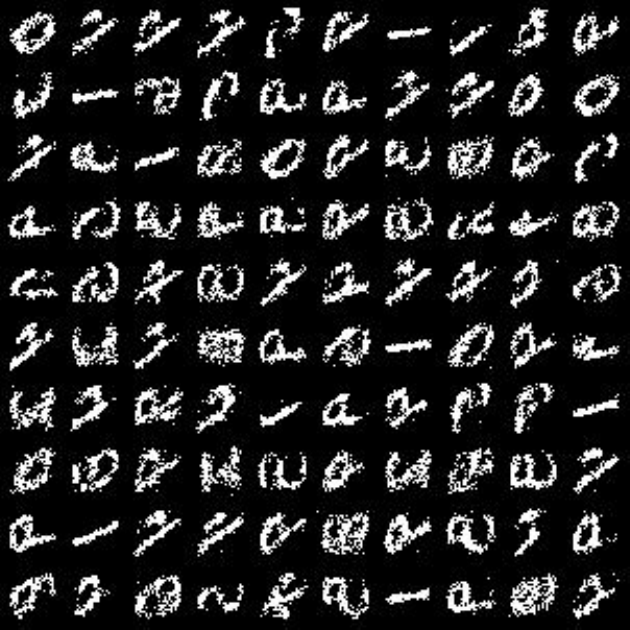}\hspace*{\imghspace}
\includegraphics[width=\imghwidth]{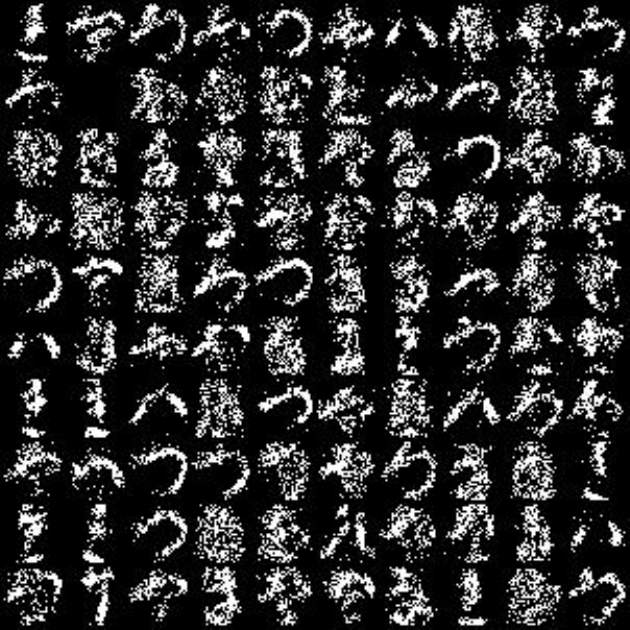}\hspace*{\imghspace}
\includegraphics[width=\imghwidth]{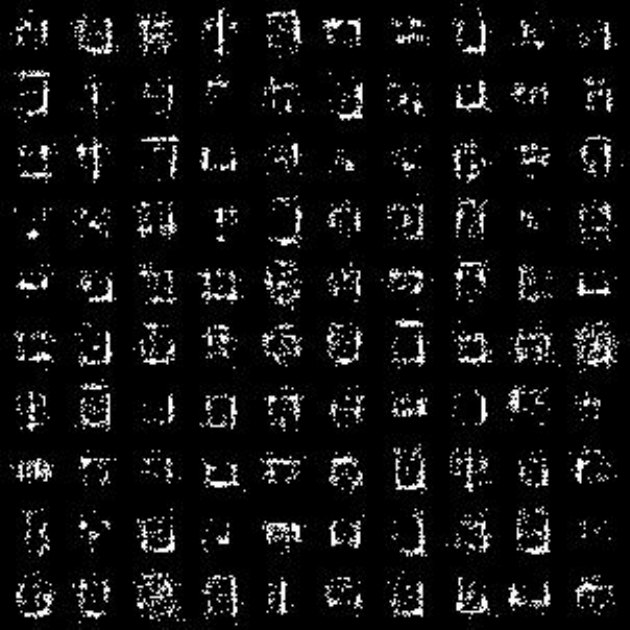} \hspace*{\imghspace}
\includegraphics[width=\imghwidth]{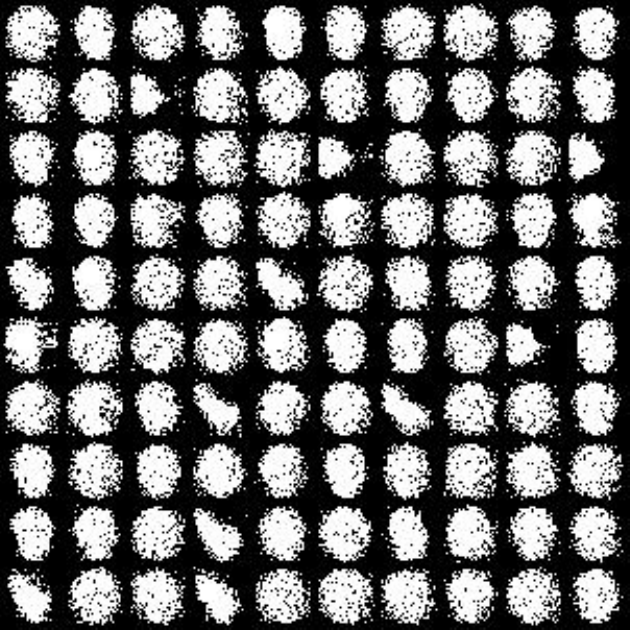}

\vspace{\imgvspace}

\includegraphics[width=\imghwidth]{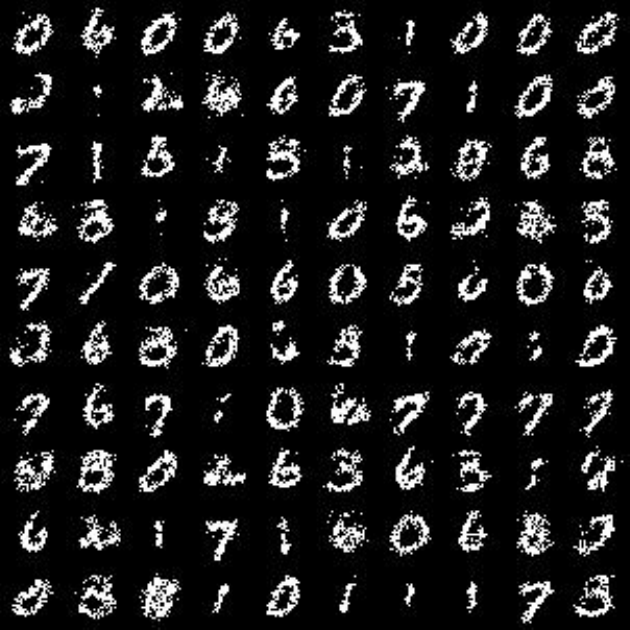}\hspace*{\imghspace}
\includegraphics[width=\imghwidth]{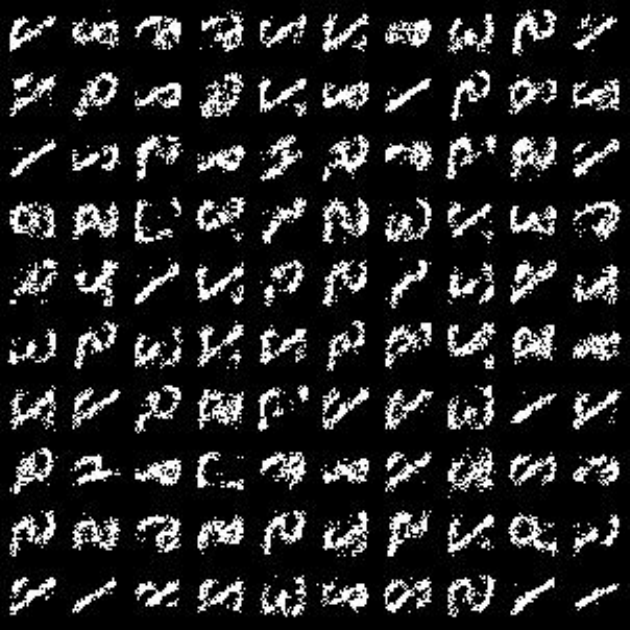}\hspace*{\imghspace}
\includegraphics[width=\imghwidth]{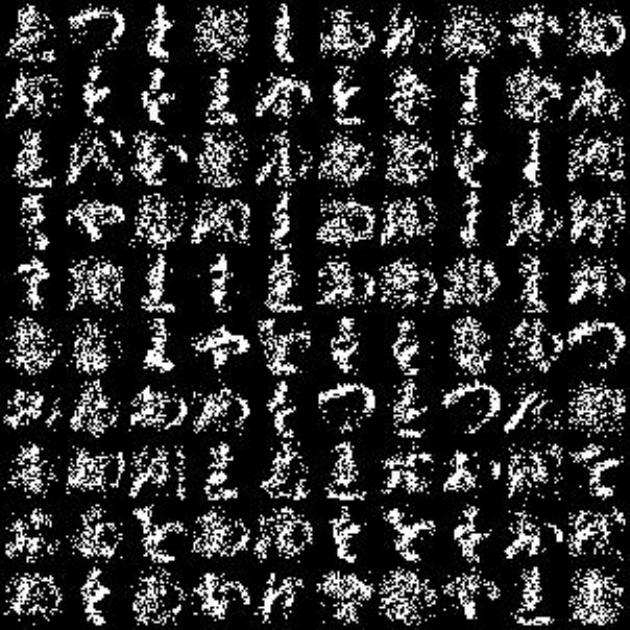}\hspace*{\imghspace}
\includegraphics[width=\imghwidth]{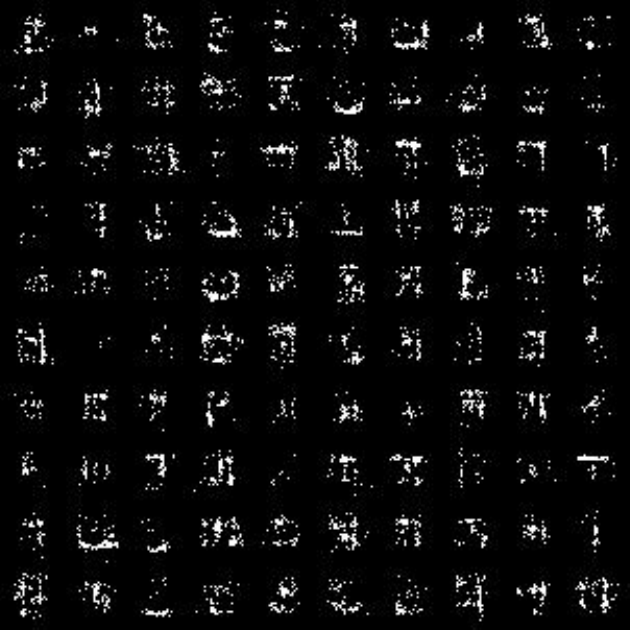} \hspace*{\imghspace}
\includegraphics[width=\imghwidth]{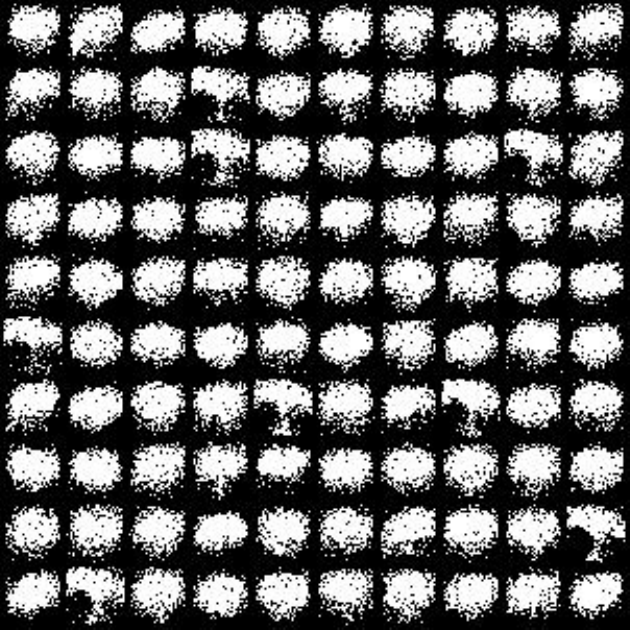}\\

\caption{Generated images from RBMs trained with different samplers. First row corresponds to GWG, second row corresponds to DMALA, and final row corresponds to ACS. First column represents models trained on MNIST, second on eMNIST, third on kMNIST, fourth on Omniglot, and fifth on Caltech Silhouettes. Images are generated via AIS for 100,000 steps.}
\label{img:rbm_learn_img}

\end{figure}

\begin{wraptable}[12]{R}{.6\textwidth}
\centering
\caption{Log likelihood scores for RBM learning on test data as estimated by AIS. ACS outperforms all gradient-based baselines across all datasets.}
\resizebox{.55\textwidth}{!}{
\begin{tabular}{lccccl}\toprule
      & GB & GWG & DMALA & ACS \\\midrule
MNIST & \textit{-191.98 }& -387.34 & -278.35 & \textbf{-249.55} \\
eMNIST &\textit{ -317.78} & -590.97 & -324.34 &  \textbf{-304.96}  \\
kMNIST & \textit{-357.69}  & -681.28 & -436.3538 &  \textbf{-407.39} \\
Omniglot & \textit{-161.73}  & -276.81  & -222.61 &  \textbf{-220.71} \\
Caltech & \textit{-511.65} & -827.45 &-427.29 &  \textbf{-396.04} 
\\\bottomrule
\end{tabular}}
\label{table:rbm_learn}
\end{wraptable}

In general, the images generated from the ACS-trained RBM capture more modes than other methods, except for the Caltech Silhouettes dataset. In particular, all the methods struggle to generate reasonable images for this dataset. We hypothesize that this is due to the increased complexity of the distribution relative to the other datasets -- Caltech Silhouettes is composed of the silhouettes from real objects, whereas the other datasets are hand-written symbols. This hypothesis is supported by the generated images in Figure \ref{img:gt_generated_rbm}, where the images generated when using Block-Gibbs on Caltech Silhouettes also seem less reasonable than the samples obtained from different datasets. Since Block-Gibbs is the best sampler for this specific model as it leverages the known structure of the RBM, this appears to be unavoidable as a result of limited model capacity. This motivates our experiments with deep convolutional EBMs, where we can understand how our method does when using a model architecture with sufficient capacity.  
\subsection{EBM Learning}
\label{appndx:sec:exp:ebm_learn}

\paragraph{Experiment Design} We use the same EBM model architecture as \citet{zhang2022langevin, grathwohl2021gwg} and follow the same experimental design, with the only change being to the number of sampling steps alotted for each sampler. 

In order to determine the number of sampling steps that we could use for ACS-PCD, we tested different sampling steps. For Static/Dynamic MNIST and Omniglot, we found that we only needed to use 10 sampling steps to achieve good models. However, we observed divergence when training models on Caltech. In order to determine what number of sampling steps to use for ACS, we do a grid search over the number of sampling steps and $\rho^*$ with all other values remaining the same. We test sampling steps of 10, 20, and 30; and we use $\rho^* .5, .6, .7, .8$. We decide which hyper-parameters to use based on when training diverged the latest; and we use the best model parameters as indicated by validation log likelihood. We use 10 sampling steps for Static, Dynamic MNIST, and Omniglot, while we found 30 was the minimum we could use for Caltech Silhouettes and obtain reasonable results. We apply this number of sampling steps for both DMALA and ACS to demonstrate how the methods compare when facing a similar budget constraint.  

In order to evaluate these learned models, we use the same evaluation protocol as \citet{zhang2022langevin, grathwohl2021gwg}. We run AIS for 300,000 iterations using Gibbs-With-Gradient as the evaluation sampler. By following the same experimental design as previous works, we can draw meaningful comparisons from previous results in \citet{grathwohl2021gwg}.

\paragraph{Sampler Configuration} For DMALA, we use a step size of .15 as used in \citet{zhang2022langevinlike}. For ACS, we use 200 sampling steps for EstimateAlphaMax and EstimateAlphaMin. For Static MNIST, Dynamic MNIST, and Omniglot, we set the algorithm to tune $\alpha_{\max}$ and $\alpha_{\min}$ every 25 cycles, where each cycle has 50 training iterations. The additional overhead of this is 16,000 extra sampling steps, which is a 3.2\% of the total budget of 500,000 sampling steps. For Caltech Silhouettes, we have to adapt every 10 cycles with the same number of training iterations. This results in 40,000 additional sampling steps due to the tuning algorithm. For this specific dataset, because we use 30 sampling steps, the additional cost is 2.6\% of the total sampling steps 1,500,000.

In terms of the final parameters for cycle length and sampling steps, we find that we can use the same $\rho^*$ across all datasets, with the exception of Caltech Silhouettes. For Static/Dynamic MNIST and Omniglot, we were able to use $\rho^* = .5$ and  For this dataset, we found good results by setting $\rho^* = .7$. We hypothesize that the need for a higher acceptance rate is due to the fundamental difference between Caltech Silhouettes and the other datasets, as previously mentioned. Because Caltech Silhouettes contain samples are derived from real objects, they are more complex than the hand-written figures.

\paragraph{Experimental Results} In addition to the empirical results in Table \ref{table:ebm_res}, we provide some qualitative data in the form of the generated images from the PCD buffer when using ACS. We choose to include the buffer images for this experiment as the chain from the persistent buffer is much longer than the chain from AIS due to the increased training duration: the chain from AIS is obtained using 300,000 sampling steps whereas the persistent buffer is obtained from 500,000 sampling steps on Static/Dynamic MNIST and Omniglot, 1,500,000 sampling steps for Caltech Silhouettes. By visualizing the generated images from the longer chain, we get a better understanding of the quality of the trained distribution. We put the images in Figure \ref{img:ebm_learn_genimgs}. 

\begin{figure}[h]
\centering

\end{figure}

We also observe that this behavior is not unique to ACS and does occur when Gibbs-With-Gradient and DMALA are used with 40 sampling steps as indicated. Instability is common when training deep EBMs, and this is most likely why the original experimental design included check-pointing throughout the training process as well as comparisons based on the validation set. We also note that despite this behavior, the trained models are able to generate fairly realistic images. We present the images from the PCD buffer for ACS below in figure. 

\begin{figure}[h]
\def\imgvspace{0.25cm}
\def\imghspace{.25cm}
\def\imghwidth{2.5cm}
\centering
\subfloat{\includegraphics[width=\imghwidth]{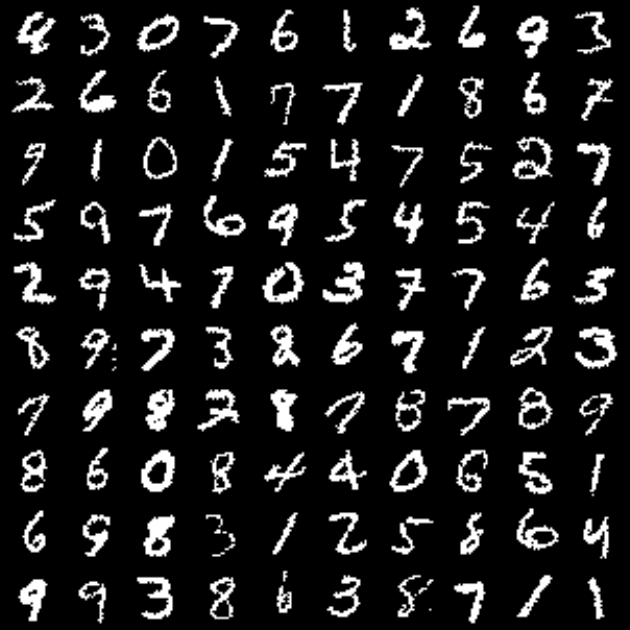}}\hspace*{\imghspace}
\subfloat{\includegraphics[width=\imghwidth]{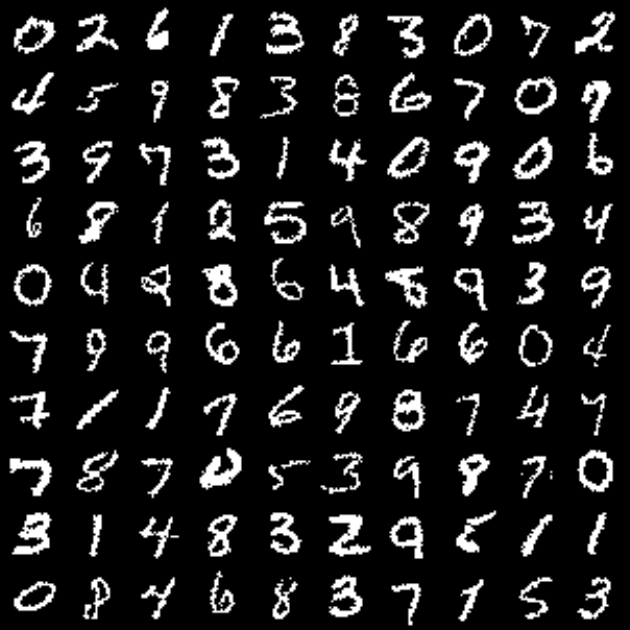}}\hspace*{\imghspace}
\subfloat{\includegraphics[width=\imghwidth]{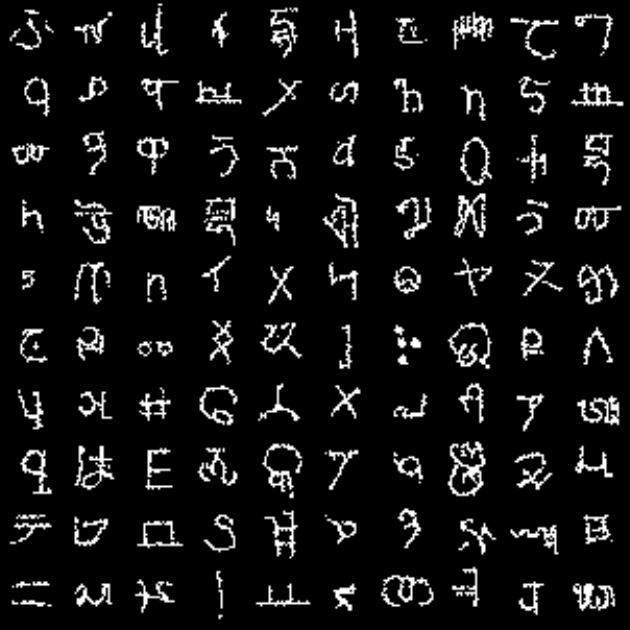}}\hspace*{\imghspace}
\subfloat{\includegraphics[width=\imghwidth]{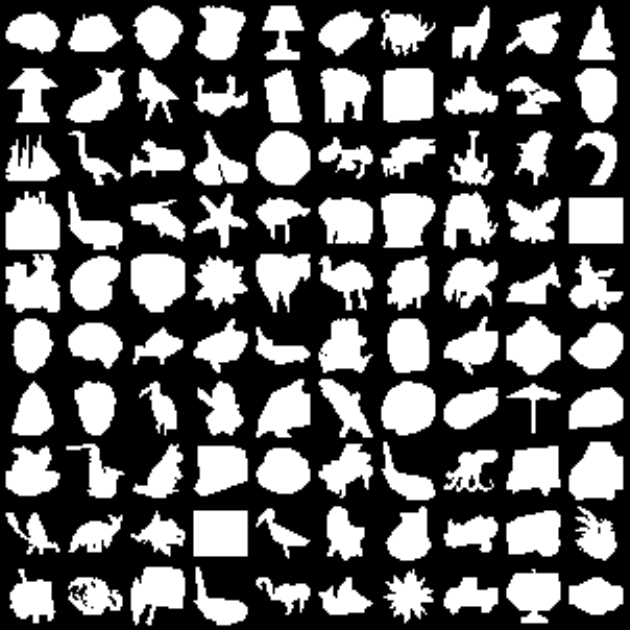}} 

\vspace{\imgvspace}

\centering
\hspace{.15cm}
\subfloat[Static]{\includegraphics[width=\imghwidth]{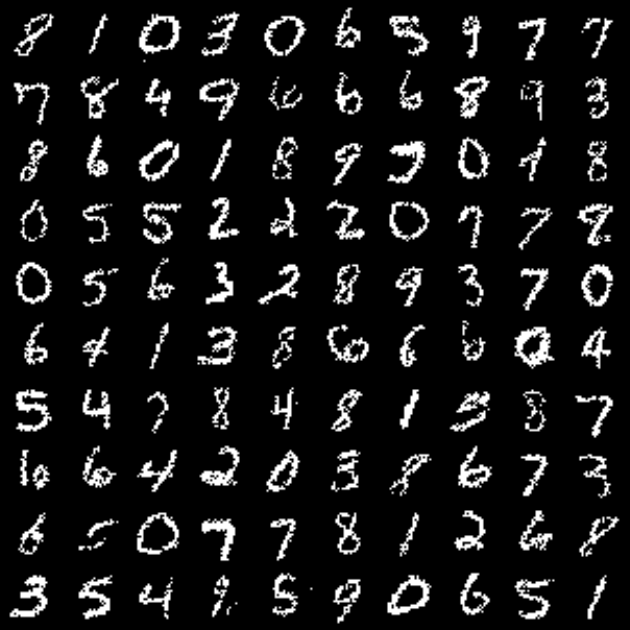}}\hspace*{\imghspace}
\subfloat[Dynamic]{\includegraphics[width=\imghwidth]{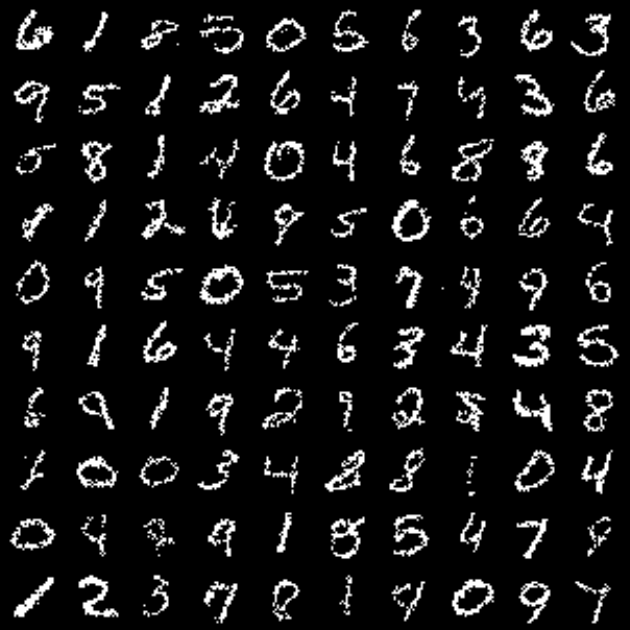}}\hspace*{\imghspace}
\subfloat[Omniglot]{\includegraphics[width=\imghwidth]{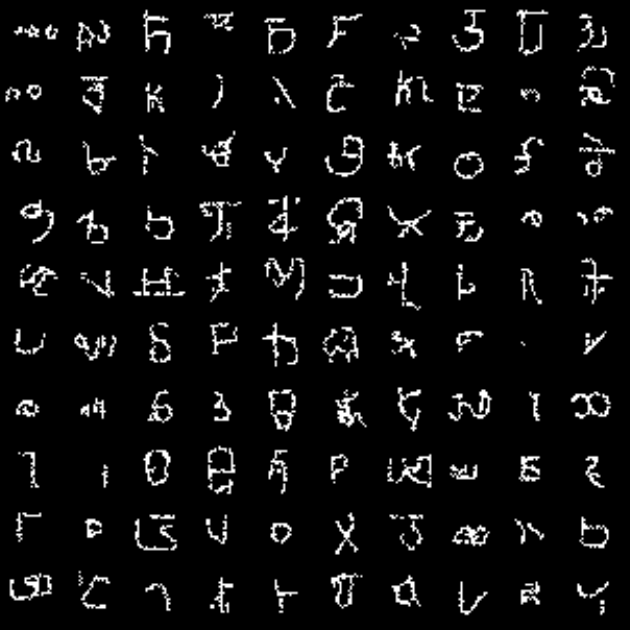}}\hspace*{\imghspace}
\subfloat[Caltech]{\includegraphics[width=\imghwidth]{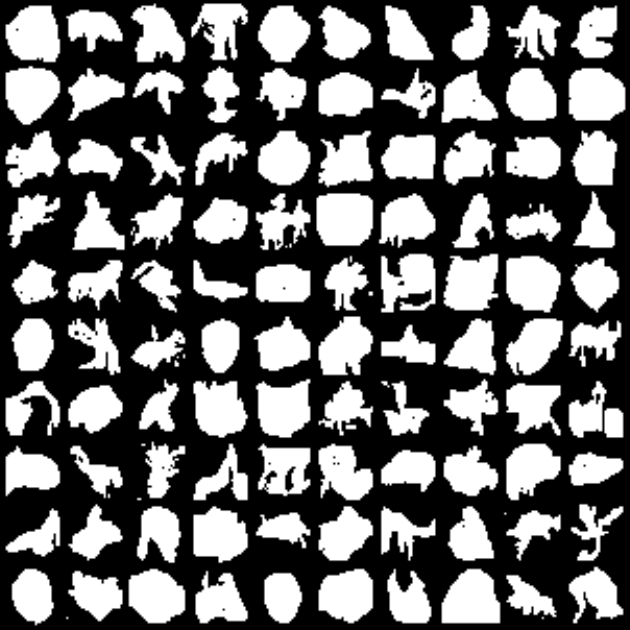}} \hspace*{\imghspace}
\caption{The example images from the representative datasets, along with the samples generated from the persistent buffer when using ACS as the sampler for PCD. The images on the top row are examples from the dataset, while the bottom row are from the trained EBM. The images generated from ACS are remarkably similar to those from the dataset, demonstrating that the model is capable of generating high-quality samples.}
\label{img:ebm_learn_genimgs}
\end{figure}

When the images in Figure \ref{img:ebm_learn_genimgs} are taken in context of the improvements in log likelihoods as presented in Table \ref{table:ebm_res}, the results indicate the benefits of using ACS when learning multi-modal discrete distributions. 

\subsection{Text Infilling}
\label{appndx:sec:exp:text_infill}
\paragraph{Experimental Design}
For both datasets, we sample 100 sentences randomly and mask $50 \%$ of the tokens. We use a pretrained RoBERTa model available through the Hugging Face API \citet{liu2019roberta}. We run 25 separate chains for each example and take the final state of each chain to be a sample. We then take the top-5 most likely samples and use these for empirical comparisons. 

We define the energy function the same as in \citet{zhang2022langevin}. Let us define a sentence of length $d$ $\theta = \{\theta_1, \theta_2, \dots \theta_d\}$, where $\theta_i$ is a one hot vector over vocabulary $V$. Let $M \subset \{1, 2, \dots d\}$  the set of indices we wish to sample. We define the function $f(\theta_i | \theta_{\neg i})$ to be the log probability distribution over $V$ for the $i$ position conditioned on all other positions. Given this, we define the energy function for the sentence $\theta$ to be as follows: 
\begin{align}
    U(\theta) = \sum_{m \in M} f( \theta_m | \theta_{\neg m})
\end{align}

\paragraph{Sampler Configuration}
For DMALA, we tune the step-size to achieve an acceptance rate of 50\%, which ends up being $\alpha = .5$. For ACS, we use a cycle length of 20. We use the same hyper-parameters for the tuning algorithms as in previous tasks, demonstrating that our algorithm can be applied across domains and tasks with little modification. We include example generations for both ACS and DMALA in Figure \ref{fig:text_gen}. 

\newcommand{\repetitive}[1]{\textcolor{red}{#1}}
\newcommand{\unique}[1]{\textcolor{blue}{#1}}
\begin{figure*}[p]
    \centering
    \begin{subfigure}[t]{\textwidth}
        \begin{itemize}
            \item {\scriptsize \repetitive{The} comedy \repetitive{that} \unique{follows feels} hack\unique{neyed} or just plain crude, calculated to \repetitive{provoke} shocked \unique{stares}, without opening up to a deeper \unique{truth}.}
            \item {\scriptsize \unique{This} comedy \unique{could} \unique{either be} hack\repetitive{y}, or just plain crude, calculated to \repetitive{provoke} shocked \unique{curiosity}, without opening up to a deeper \unique{insight}.}
            \item {\scriptsize \unique{A} comedy \repetitive{that} 
            \unique{feels slightly} hack\repetitive{y}, or just plain crude, calculated to \unique{achieve} shocked \unique{results}, without coming up with a deeper \unique{message}.}
            \item {\scriptsize \repetitive{The} comedy \unique{was} \unique{either unnecessarily} hack\repetitive{y}, or just plain crude, calculated to \unique{create} shocked \unique{humor}, without leading up to a deeper \unique{plot}.}
            \item {\scriptsize \unique{Most} comedy \unique{is} \unique{flat or} hack\unique{-ish} or just plain crude, calculated to \unique{evoke} shocked \unique{laughs}, without opening up to a deeper \unique{audience}.}
        \end{itemize}
        \caption{ACS}
    \end{subfigure}
    \newline 
    \newline
    \newline 
    \newline
    \begin{subfigure}[t]{\textwidth}
        \begin{itemize}
            \item {\scriptsize \unique{This} comedy \unique{is either plain} hack\repetitive{y}, or just plain crude, calculated to \repetitive{provoke} shocked \repetitive{discussion}, without linking up to a deeper \unique{message}.}
            \item {\scriptsize \unique{And} comedy \unique{that can be} hack\repetitive{y}, or just plain crude, calculated to \repetitive{provoke} shocked \repetitive{discussion}, without opening up to a deeper \unique{topic}.}
            \item {\scriptsize \unique{Modern} comedy \unique{can be deliberately} hacklish, or just plain crude, calculated to \repetitive{provoke} shocked \unique{disbelief}, without opening up to a deeper \repetitive{meaning}.}
            \item {\scriptsize \unique{Simple} comedy \unique{has all things} hack\repetitive{y}, or just plain crude, calculated to \unique{be} shocked \unique{away}, without opening up to a deeper \repetitive{meaning}.}
            \item {\scriptsize \unique{A} comedy \unique{usually ranges from} hack\repetitive{y}, or just plain crude, calculated to \repetitive{provoke} shocked \unique{surprise}, without opening up to a deeper \unique{reality}.}
        \end{itemize}
        \caption{DMALA}
    \end{subfigure}
    \caption{Text Generations using ACS and DMALA. As demonstrated empirically in \ref{tab:text_infilling_metrics}, the ACS examples demonstrate higher diversity than the DMALA generations. }
    \label{fig:text_gen}

\end{figure*}

\paragraph{Perplexity v.s Sampling Accuracy}
In Table \ref{tab:text_infilling_metrics}, we observe that the ACS generations have higher perplexity than the DMALA generations. While perplexity is a popular means of evaluating language generations, it is important to recognize that perplexity is based on the likelihood of the generation under the language model. This metric is biased towards frequent patterns and does not account for diverse modes. Therefore, it does not completely align with the goal of MCMC, which is to accurately characterize the target distribution. 

Minimizing the average perplexity of the sample corresponds to maximizing the average likelihood of the generations, which can be at odds with the goal of accurately capturing the target distribution. We illustrate this in Appendix \ref{appndx:sec:exp:mm-sample}, where we compare the performance of DMALA and ACS when sampling from a uneven multi-modal distribution. 
\clearpage

\newpage 
\newpage
\section*{NeurIPS Paper Checklist}

\begin{enumerate}

\item {\bf Claims}
    \item[] Question: Do the main claims made in the abstract and introduction accurately reflect the paper's contributions and scope?
    \item[] Answer: \answerYes{}
    \item[] Justification: in the introduction, we make four primary claims. We introduce our discrete gradient based sampler in Section \ref{main_body:parameterized_distribution}. We introduce the tuning algorithm in Section \ref{main_body:tuning_alg}. We introduce the theoretical results, along with the necessary assumptions in Section \ref{main_body:theoretical_results}. We include all the experimental results on RBMs, EBMs and LLMs in Section \ref{main_body:exp}
    \item[] Guidelines:
    \begin{itemize}
        \item The answer NA means that the abstract and introduction do not include the claims made in the paper.
        \item The abstract and/or introduction should clearly state the claims made, including the contributions made in the paper and important assumptions and limitations. A No or NA answer to this question will not be perceived well by the reviewers. 
        \item The claims made should match theoretical and experimental results, and reflect how much the results can be expected to generalize to other settings. 
        \item It is fine to include aspirational goals as motivation as long as it is clear that these goals are not attained by the paper. 
    \end{itemize}

\item {\bf Limitations}
    \item[] Question: Does the paper discuss the limitations of the work performed by the authors?
    \item[] Answer: \answerYes{} 
    \item[] Justification: We include the limitations in the conclusion section. 
    \item[] Guidelines:
    \begin{itemize}
        \item The answer NA means that the paper has no limitation while the answer No means that the paper has limitations, but those are not discussed in the paper. 
        \item The authors are encouraged to create a separate "Limitations" section in their paper.
        \item The paper should point out any strong assumptions and how robust the results are to violations of these assumptions (e.g., independence assumptions, noiseless settings, model well-specification, asymptotic approximations only holding locally). The authors should reflect on how these assumptions might be violated in practice and what the implications would be.
        \item The authors should reflect on the scope of the claims made, e.g., if the approach was only tested on a few datasets or with a few runs. In general, empirical results often depend on implicit assumptions, which should be articulated.
        \item The authors should reflect on the factors that influence the performance of the approach. For example, a facial recognition algorithm may perform poorly when image resolution is low or images are taken in low lighting. Or a speech-to-text system might not be used reliably to provide closed captions for online lectures because it fails to handle technical jargon.
        \item The authors should discuss the computational efficiency of the proposed algorithms and how they scale with dataset size.
        \item If applicable, the authors should discuss possible limitations of their approach to address problems of privacy and fairness.
        \item While the authors might fear that complete honesty about limitations might be used by reviewers as grounds for rejection, a worse outcome might be that reviewers discover limitations that aren't acknowledged in the paper. The authors should use their best judgment and recognize that individual actions in favor of transparency play an important role in developing norms that preserve the integrity of the community. Reviewers will be specifically instructed to not penalize honesty concerning limitations.
    \end{itemize}

\item {\bf Theory Assumptions and Proofs}
    \item[] Question: For each theoretical result, does the paper provide the full set of assumptions and a complete (and correct) proof?
    \item[] Answer: \answerYes{} 
    \item[] Justification: We include the assumptions in Section \ref{main_body:theoretical_results} as well as in Appendix \ref{appndx:sec:theoretical:results}.
    \item[] Guidelines:
    \begin{itemize}
        \item The answer NA means that the paper does not include theoretical results. 
        \item All the theorems, formulas, and proofs in the paper should be numbered and cross-referenced.
        \item All assumptions should be clearly stated or referenced in the statement of any theorems.
        \item The proofs can either appear in the main paper or the supplemental material, but if they appear in the supplemental material, the authors are encouraged to provide a short proof sketch to provide intuition. 
        \item Inversely, any informal proof provided in the core of the paper should be complemented by formal proofs provided in appendix or supplemental material.
        \item Theorems and Lemmas that the proof relies upon should be properly referenced. 
    \end{itemize}

    \item {\bf Experimental Result Reproducibility}
    \item[] Question: Does the paper fully disclose all the information needed to reproduce the main experimental results of the paper to the extent that it affects the main claims and/or conclusions of the paper (regardless of whether the code and data are provided or not)?
    \item[] Answer: \answerYes{} 
    \item[] Justification: We include a lengthy appendix that details all the experimental configuration, along with the hyper-parameters used. 
    \item[] Guidelines:
    \begin{itemize}
        \item The answer NA means that the paper does not include experiments.
        \item If the paper includes experiments, a No answer to this question will not be perceived well by the reviewers: Making the paper reproducible is important, regardless of whether the code and data are provided or not.
        \item If the contribution is a dataset and/or model, the authors should describe the steps taken to make their results reproducible or verifiable. 
        \item Depending on the contribution, reproducibility can be accomplished in various ways. For example, if the contribution is a novel architecture, describing the architecture fully might suffice, or if the contribution is a specific model and empirical evaluation, it may be necessary to either make it possible for others to replicate the model with the same dataset, or provide access to the model. In general. releasing code and data is often one good way to accomplish this, but reproducibility can also be provided via detailed instructions for how to replicate the results, access to a hosted model (e.g., in the case of a large language model), releasing of a model checkpoint, or other means that are appropriate to the research performed.
        \item While NeurIPS does not require releasing code, the conference does require all submissions to provide some reasonable avenue for reproducibility, which may depend on the nature of the contribution. For example
        \begin{enumerate}
            \item If the contribution is primarily a new algorithm, the paper should make it clear how to reproduce that algorithm.
            \item If the contribution is primarily a new model architecture, the paper should describe the architecture clearly and fully.
            \item If the contribution is a new model (e.g., a large language model), then there should either be a way to access this model for reproducing the results or a way to reproduce the model (e.g., with an open-source dataset or instructions for how to construct the dataset).
            \item We recognize that reproducibility may be tricky in some cases, in which case authors are welcome to describe the particular way they provide for reproducibility. In the case of closed-source models, it may be that access to the model is limited in some way (e.g., to registered users), but it should be possible for other researchers to have some path to reproducing or verifying the results.
        \end{enumerate}
    \end{itemize}

\item {\bf Open access to data and code}
    \item[] Question: Does the paper provide open access to the data and code, with sufficient instructions to faithfully reproduce the main experimental results, as described in supplemental material?
    \item[] Answer: \answerYes{} 
    \item[] Justification: We will include a link to an anonomyzed repository containing all the code to run the necessary experiments. All experimental results can be generated by running a bash script. 
    \item[] Guidelines:
    \begin{itemize}
        \item The answer NA means that paper does not include experiments requiring code.
        \item Please see the NeurIPS code and data submission guidelines (\url{https://nips.cc/public/guides/CodeSubmissionPolicy}) for more details.
        \item While we encourage the release of code and data, we understand that this might not be possible, so “No” is an acceptable answer. Papers cannot be rejected simply for not including code, unless this is central to the contribution (e.g., for a new open-source benchmark).
        \item The instructions should contain the exact command and environment needed to run to reproduce the results. See the NeurIPS code and data submission guidelines (\url{https://nips.cc/public/guides/CodeSubmissionPolicy}) for more details.
        \item The authors should provide instructions on data access and preparation, including how to access the raw data, preprocessed data, intermediate data, and generated data, etc.
        \item The authors should provide scripts to reproduce all experimental results for the new proposed method and baselines. If only a subset of experiments are reproducible, they should state which ones are omitted from the script and why.
        \item At submission time, to preserve anonymity, the authors should release anonymized versions (if applicable).
        \item Providing as much information as possible in supplemental material (appended to the paper) is recommended, but including URLs to data and code is permitted.
    \end{itemize}

\item {\bf Experimental Setting/Details}
    \item[] Question: Does the paper specify all the training and test details (e.g., data splits, hyperparameters, how they were chosen, type of optimizer, etc.) necessary to understand the results?
    \item[] Answer: \answerYes{} 
    \item[] Justification: We include the experimental details in the appendix. 
    \item[] Guidelines:
    \begin{itemize}
        \item The answer NA means that the paper does not include experiments.
        \item The experimental setting should be presented in the core of the paper to a level of detail that is necessary to appreciate the results and make sense of them.
        \item The full details can be provided either with the code, in appendix, or as supplemental material.
    \end{itemize}

\item {\bf Experiment Statistical Significance}
    \item[] Question: Does the paper report error bars suitably and correctly defined or other appropriate information about the statistical significance of the experiments?
    \item[] Answer: \answerYes{} 
    \item[] Justification: For the experimental results on RBMs, which are relatively noisy, we include the performance within one standard error across different random seeds. We also include standard error measures for the text infilling experiment. For RBM training, EBM training, and EBM sampling, we do not include standard error bars as these experiments are not significantly affected by random seeds, at least within our observations. 
    \item[] Guidelines:
    \begin{itemize}
        \item The answer NA means that the paper does not include experiments.
        \item The authors should answer "Yes" if the results are accompanied by error bars, confidence intervals, or statistical significance tests, at least for the experiments that support the main claims of the paper.
        \item The factors of variability that the error bars are capturing should be clearly stated (for example, train/test split, initialization, random drawing of some parameter, or overall run with given experimental conditions).
        \item The method for calculating the error bars should be explained (closed form formula, call to a library function, bootstrap, etc.)
        \item The assumptions made should be given (e.g., Normally distributed errors).
        \item It should be clear whether the error bar is the standard deviation or the standard error of the mean.
        \item It is OK to report 1-sigma error bars, but one should state it. The authors should preferably report a 2-sigma error bar than state that they have a 96\% CI, if the hypothesis of Normality of errors is not verified.
        \item For asymmetric distributions, the authors should be careful not to show in tables or figures symmetric error bars that would yield results that are out of range (e.g. negative error rates).
        \item If error bars are reported in tables or plots, The authors should explain in the text how they were calculated and reference the corresponding figures or tables in the text.
    \end{itemize}

\item {\bf Experiments Compute Resources}
    \item[] Question: For each experiment, does the paper provide sufficient information on the computer resources (type of compute workers, memory, time of execution) needed to reproduce the experiments?
    \item[] Answer: \answerYes{} 
    \item[] Justification: We include the specific GPU at the beginning of Appendix \ref{app:sec:experiment}.
    \item[] Guidelines:
    \begin{itemize}
        \item The answer NA means that the paper does not include experiments.
        \item The paper should indicate the type of compute workers CPU or GPU, internal cluster, or cloud provider, including relevant memory and storage.
        \item The paper should provide the amount of compute required for each of the individual experimental runs as well as estimate the total compute. 
        \item The paper should disclose whether the full research project required more compute than the experiments reported in the paper (e.g., preliminary or failed experiments that didn't make it into the paper). 
    \end{itemize}
    
\item {\bf Code Of Ethics}
    \item[] Question: Does the research conducted in the paper conform, in every respect, with the NeurIPS Code of Ethics \url{https://neurips.cc/public/EthicsGuidelines}?
    \item[] Answer: \answerYes{} 
    \item[] Justification: We have read the Ethics Guidelines, and our submission aligns with all the points listed. 
    \item[] Guidelines:
    \begin{itemize}
        \item The answer NA means that the authors have not reviewed the NeurIPS Code of Ethics.
        \item If the authors answer No, they should explain the special circumstances that require a deviation from the Code of Ethics.
        \item The authors should make sure to preserve anonymity (e.g., if there is a special consideration due to laws or regulations in their jurisdiction).
    \end{itemize}

\item {\bf Broader Impacts}
    \item[] Question: Does the paper discuss both potential positive societal impacts and negative societal impacts of the work performed?
    \item[] Answer: \answerNA{} 
    \item[] Justification: This is a MCMC sampling technique which does not have a direct societal impact. The research presented is either theoretical, or based on common benchmarks such as MNIST. The most relevant impact would be due to text infilling, which can be used for LLMs text generation. 
    \item[] Guidelines:
    \begin{itemize}
        \item The answer NA means that there is no societal impact of the work performed.
        \item If the authors answer NA or No, they should explain why their work has no societal impact or why the paper does not address societal impact.
        \item Examples of negative societal impacts include potential malicious or unintended uses (e.g., disinformation, generating fake profiles, surveillance), fairness considerations (e.g., deployment of technologies that could make decisions that unfairly impact specific groups), privacy considerations, and security considerations.
        \item The conference expects that many papers will be foundational research and not tied to particular applications, let alone deployments. However, if there is a direct path to any negative applications, the authors should point it out. For example, it is legitimate to point out that an improvement in the quality of generative models could be used to generate deepfakes for disinformation. On the other hand, it is not needed to point out that a generic algorithm for optimizing neural networks could enable people to train models that generate Deepfakes faster.
        \item The authors should consider possible harms that could arise when the technology is being used as intended and functioning correctly, harms that could arise when the technology is being used as intended but gives incorrect results, and harms following from (intentional or unintentional) misuse of the technology.
        \item If there are negative societal impacts, the authors could also discuss possible mitigation strategies (e.g., gated release of models, providing defenses in addition to attacks, mechanisms for monitoring misuse, mechanisms to monitor how a system learns from feedback over time, improving the efficiency and accessibility of ML).
    \end{itemize}
    
\item {\bf Safeguards}
    \item[] Question: Does the paper describe safeguards that have been put in place for responsible release of data or models that have a high risk for misuse (e.g., pretrained language models, image generators, or scraped datasets)?
    \item[] Answer: \answerNA{} 
    \item[] Justification: This foundational research that does not directly have a societal impact, as it is primarily an MCMC algorithm for discrete spaces. 
    \item[] Guidelines:
    \begin{itemize}
        \item The answer NA means that the paper poses no such risks.
        \item Released models that have a high risk for misuse or dual-use should be released with necessary safeguards to allow for controlled use of the model, for example by requiring that users adhere to usage guidelines or restrictions to access the model or implementing safety filters. 
        \item Datasets that have been scraped from the Internet could pose safety risks. The authors should describe how they avoided releasing unsafe images.
        \item We recognize that providing effective safeguards is challenging, and many papers do not require this, but we encourage authors to take this into account and make a best faith effort.
    \end{itemize}

\item {\bf Licenses for existing assets}
    \item[] Question: Are the creators or original owners of assets (e.g., code, data, models), used in the paper, properly credited and are the license and terms of use explicitly mentioned and properly respected?
    \item[] Answer: \answerNo{} 
    \item[] Justification: We were unable to find the license for the datasets we used. However, they are fairly popular and well known datasets, and we cite the relevant paper where necessary. 
    \item[] Guidelines:
    \begin{itemize}
        \item The answer NA means that the paper does not use existing assets.
        \item The authors should cite the original paper that produced the code package or dataset.
        \item The authors should state which version of the asset is used and, if possible, include a URL.
        \item The name of the license (e.g., CC-BY 4.0) should be included for each asset.
        \item For scraped data from a particular source (e.g., website), the copyright and terms of service of that source should be provided.
        \item If assets are released, the license, copyright information, and terms of use in the package should be provided. For popular datasets, \url{paperswithcode.com/datasets} has curated licenses for some datasets. Their licensing guide can help determine the license of a dataset.
        \item For existing datasets that are re-packaged, both the original license and the license of the derived asset (if it has changed) should be provided.
        \item If this information is not available online, the authors are encouraged to reach out to the asset's creators.
    \end{itemize}

\item {\bf New Assets}
    \item[] Question: Are new assets introduced in the paper well documented and is the documentation provided alongside the assets?
    \item[] Answer: \answerNA{} 
    \item[] Justification: We do not release new assets. 
    \item[] Guidelines:
    \begin{itemize}
        \item The answer NA means that the paper does not release new assets.
        \item Researchers should communicate the details of the dataset/code/model as part of their submissions via structured templates. This includes details about training, license, limitations, etc. 
        \item The paper should discuss whether and how consent was obtained from people whose asset is used.
        \item At submission time, remember to anonymize your assets (if applicable). You can either create an anonymized URL or include an anonymized zip file.
    \end{itemize}

\item {\bf Crowdsourcing and Research with Human Subjects}
    \item[] Question: For crowdsourcing experiments and research with human subjects, does the paper include the full text of instructions given to participants and screenshots, if applicable, as well as details about compensation (if any)? 
    \item[] Answer: \answerNA{} 
    \item[] Justification: This research does not involve crowdsourcing or human subjects.
    \item[] Guidelines:
    \begin{itemize}
        \item The answer NA means that the paper does not involve crowdsourcing nor research with human subjects.
        \item Including this information in the supplemental material is fine, but if the main contribution of the paper involves human subjects, then as much detail as possible should be included in the main paper. 
        \item According to the NeurIPS Code of Ethics, workers involved in data collection, curation, or other labor should be paid at least the minimum wage in the country of the data collector. 
    \end{itemize}

\item {\bf Institutional Review Board (IRB) Approvals or Equivalent for Research with Human Subjects}
    \item[] Question: Does the paper describe potential risks incurred by study participants, whether such risks were disclosed to the subjects, and whether Institutional Review Board (IRB) approvals (or an equivalent approval/review based on the requirements of your country or institution) were obtained?
    \item[] Answer: \answerNA{} 
    \item[] Justification: There are no study participants. 
    \item[] Guidelines:
    \begin{itemize}
        \item The answer NA means that the paper does not involve crowdsourcing nor research with human subjects.
        \item Depending on the country in which research is conducted, IRB approval (or equivalent) may be required for any human subjects research. If you obtained IRB approval, you should clearly state this in the paper. 
        \item We recognize that the procedures for this may vary significantly between institutions and locations, and we expect authors to adhere to the NeurIPS Code of Ethics and the guidelines for their institution. 
        \item For initial submissions, do not include any information that would break anonymity (if applicable), such as the institution conducting the review.
    \end{itemize}

\end{enumerate}

\end{document}